\documentclass[aoas]{imsart}
\makeatletter
\def\journal@name{}
\makeatother

\RequirePackage{amsthm,amsmath,amsfonts,amssymb}
\RequirePackage[authoryear]{natbib}
\RequirePackage[colorlinks,citecolor=blue,urlcolor=blue]{hyperref}
\RequirePackage{graphicx}
\RequirePackage{mathtools}
\RequirePackage{algorithm}
\RequirePackage{algpseudocode}
\RequirePackage{booktabs,multirow,rotating}
\RequirePackage{colortbl}
\RequirePackage{tabulary}
\RequirePackage{etoolbox,wrapfig}
\RequirePackage{xcolor}

\startlocaldefs
\DeclareMathOperator*{\argmax}{arg\,max}

\theoremstyle{plain}

\newtheorem{theorem}{Theorem}[section]

\newtheorem{remark}{Remark}
\newtheorem{proposition}[theorem]{Proposition}
\newtheorem{corollary}{Corollary}[theorem]
\theoremstyle{definition}
\newtheorem{definition}[theorem]{Definition}
\newtheorem*{example}{Example}


\endlocaldefs

\begin{document}

\begin{frontmatter}
\title{Credal and Interval Deep Evidential Classifications}
\runtitle{Credal and Interval Deep Evidential Classifications}

\begin{aug}
\author[A]{\fnms{Michele}~\snm{Caprio}\ead[label=e1]{michele.caprio@manchester.ac.uk}}, 
\author[B]{\fnms{Shireen K.}~\snm{Manchingal}\ead[label=e2]{19185895@brookes.ac.uk}}
\and
\author[B]{\fnms{Fabio}~\snm{Cuzzolin}\ead[label=e3]{fabio.cuzzolin@brookes.ac.uk}}
\address[A]{Department of Computer Science,
The University of Manchester\printead[presep={ ,\ }]{e1}}

\address[B]{School of Engineering, Computing and Mathematics, Oxford Brookes University\\ \printead{e2,e3}}
\end{aug}

\begin{abstract}
Uncertainty Quantification (UQ) presents a pivotal challenge in the field of Artificial Intelligence (AI), profoundly impacting decision-making, risk assessment and model reliability. In this paper, we introduce Credal and Interval Deep Evidential Classifications (CDEC and IDEC, respectively) as novel approaches to address UQ in classification tasks. CDEC and IDEC leverage a credal set (closed and convex set of probabilities) and an interval of evidential predictive distributions, respectively, allowing us to avoid overfitting to the training data and to systematically assess both epistemic (reducible) and aleatoric (irreducible) uncertainties. 
When 
those
surpass acceptable thresholds, CDEC and IDEC have the capability to abstain from classification and flag an excess of epistemic or aleatoric uncertainty, as relevant. Conversely, within acceptable uncertainty bounds, CDEC and IDEC provide a collection of labels with robust probabilistic guarantees. CDEC and IDEC are trained using standard backpropagation and a loss function that draws from the theory of evidence. They overcome the shortcomings of previous efforts, and extend the current evidential deep learning literature. 
Through extensive experiments on MNIST, CIFAR-10 and CIFAR-100, together with their natural OoD shifts (F-MNIST/K-MNIST, SVHN/Intel, TinyImageNet), we show that CDEC and IDEC achieve competitive predictive accuracy, state-of-the-art OoD detection under epistemic and total uncertainty, and tight, well-calibrated prediction regions that expand reliably under distribution shift. An ablation over ensemble size further demonstrates that CDEC attains stable uncertainty estimates with only a small ensemble.
\end{abstract}

\begin{keyword}
\kwd{Evidential deep learning}
\kwd{imprecise probabilities}
\kwd{credal sets}
\kwd{probability intervals}
\kwd{epistemic and aleatory uncertainties}
\kwd{uncertainty quantification}
\kwd{machine learning robustness}
\end{keyword}

\end{frontmatter}

\section{Introduction}

The notion of uncertainty has recently drawn increasing attention in machine learning (ML) and artificial intelligence (AI) due to the fields' burgeoning relevance for practical applications \citep{volume}. Many such applications have safety requirements, such as medical domains \citep{lambrou2010reliable, senge_2014_ReliableClassificationLearning, yang2009using} or socio-technical systems \citep{varshney2016engineering,varshney2017safety}. These use cases in safety-critical contexts show that a suitable representation and quantification of uncertainty for modern, reliable machine learning systems is imperative. 

In ML and AI, users are typically interested in two types of uncertainties, \emph{aleatoric} and \emph{epistemic} (AU and EU, respectively, formally introduced in section \ref{unc's}). The existing approaches to quantify AU and EU were extensively reviewed in \citep{eyke}. 

Aleatoric uncertainty refers to the uncertainty that is inherent to the data generating process; as such, it is {irreducible}. Think, for example, of a coin toss. No matter how many times the coin is tossed, the stochastic variability of the experiment cannot be eliminated: we do not know with certainty what the outcome will be. 
Epistemic uncertainty, instead, refers to the lack of knowledge about the data generating process; as such, it is \textit{reducible}. It can be lessened e.g. on the basis of additional data.\footnote{EU should not be confused with \textit{epistemic probability}, see Appendix \ref{app-1}. It refers to the ignorance of the agent, and hence to the epistemic state of the agent instead of any underlying random phenomenon.} For example, after only a few tosses, we are unable to gauge whether a coin is biased or not, but if we repeat the experiment long enough, this type of uncertainty vanishes.
In ML applications, EU can be typically reduced by retraining the model using an augmented training set \citep{vivian} (e.g. via semantic preserving transformations \citep{ramneet}, Puzzle Mix \citep{kim}, etc.). On the other hand, since AU is irreducible, there is an increasing need for ML techniques that are able to detect an excess of AU and, in turn, flag such an excess and either abstain from producing a result, or warn to ``proceed with caution''.

Distinguishing between AU and EU is then a crucial endeavor in (probabilistic) ML and AI. A single probability measure, though,  
can only
capture 
aleatoric uncertainty, since it represents a case in which the agent knows 
the true data generating process 
precisely
\citep[Page 458]{eyke}. Because of that, researchers have proposed different deep learning techniques 
to disentangle AU and EU. Broadly, their efforts can be classified in the following four macro-categories (more related work can be found in Appendix \ref{more-rel}):
\begin{enumerate}
    \item Methodologies based on Bayesian Deep Learning \citep{jospin,kendall2017uncertainties,smith2018understanding}, such as 
    Credal Bayesian Deep Learning \citep{ibnn}. Although 
    theoretically
    well-justified, 
    they incur in large computational overheads, a consequence of the need to approximate the Bayesian posteriors (e.g. via variational inference).
    \item Methodologies based on Dempster-Shafer theory, where uncertainty is modelled using random sets and fuzzy logic, such as Evidential Neural Networks \citep{thierry3,thierry,thierry2,thierry4,denoeux,masson} and Epistemic Deep Learning \citep{cuzzo,cuzzo2}. These are theoretically well-justified and reasonably fast to implement. Nonetheless, they only focus on identifying and quantifying EU, and on taking the optimal decision as a consequence.
    \item Methodologies based on second-order distributions (i.e., distributions over distributions), such as Kronecker-factored Quasi-Newton Methods \citep{ren} and Evidential Deep Learning (EDL) \citep[and references therein]{ulmer}. These have been recently shown to suffer from major pitfalls when used to quantify EU due to their sensitivity to regularization parameters, and to underestimate AU \citep{bengs_difficulty,mira,pandey23a}.
    \item Ad-hoc methodologies, such as ensembles of neural networks \citep{uncertainty-quantification}.
    These are reasonably fast, 
    but their measures of AU and EU are typically not theoretically well-justified. They are chosen for their intuitive appeal, or their ease of computation.
\end{enumerate}

As we can see, all existing methodologies fall short of at least one of the following desirable properties: being (i) theoretically well-justified, (ii) fast to implement, (iii) able to properly quantify and disentangle both AU and EU. 
In response,
we introduce two new deep learning techniques for classification problems that we call \emph{Credal} and \emph{Interval Deep Evidential Classifications} (CDEC and IDEC, respectively). They are EDL procedures that adopt a first-order distribution approach -- so to overcome the shortcoming in (3) -- and that build on concepts from Bayesian statistics, and the theories of credal sets and intervals of measures. Credal sets are closed and convex sets of probabilities (see also Definition \ref{credal-set}), while intervals of measures are sets of probability density or mass functions that are derived by normalizing positive bounded measures (see also Definition \ref{int-meas}).
As we shall see in Proposition \ref{prop-cred-int}, credal sets and intervals of measures are related to each other. Once trained, given a new input $\tilde{x}$, CDEC produces a credal set of predictive distributions on the labels. Such credal set is used to gauge the uncertainty: if the total uncertainty (TU) is too high, CDEC abstains from producing a label for $\tilde{x}$ and returns a warning for excess of AU or EU, as appropriate. If, instead, TU is not too high, it provides a collection of labels that contains the correct label for $\tilde{x}$ with high probability. IDEC does the same, with an interval of predictive distributions in place of a credal set.

Notably,
CDEC and IDEC address the three research challenges highlighted in \citep{ulmer} as pivotal for EDL methods, as they: (i) allow an explicit estimation of EU and AU; (ii) are robust to model misspecification (via the predictive credal set/interval of measures); and (iii) give theoretical guarantees on the predicted label region. 
As \citet{eyke} point out, measures of AU and EU based on credal sets are extremely well-justified from a theoretical point of view. CDEC and IDEC are also fast to implement, since they are based on backpropagation-trained neural networks.

We experimentally demonstrate the effectiveness of our approaches across several classification tasks.
Firstly, we assess in-distribution predictive performance and show that our approaches 
improve accuracy and calibration across multiple datasets and backbones (Simple ConvNet, VGG16, ResNet18, ResNet50). 
Secondly, we 
show a marked
improvement in out-of-distribution (OoD) detection, using  
several benchmark settings such as 
training on CIFAR-10 and evaluating on SVHN/Intel, 
MNIST on F-MNIST/K-MNIST, 
and CIFAR-100 on TinyImageNet.  
Additionally, we present results demonstrating the 
robustness
of our classification sets and several ablation studies which provide deeper insights into the applicability of CDEC and IDEC.
Specifically, the ablations reveal how epistemic uncertainty grows with ensemble diversity, and how CDEC maintains high coverage and consistent iD-OoD separation across ensemble sizes.

\textbf{Contributions and structure of the paper.} 
We introduce CDEC and IDEC, two deep learning techniques for classification that  
(i) are able to disentangle and quantify EU and AU in a theoretically sound manner, and to react appropriately to high levels of either (section \ref{sec:uncertainty-estimation});  
(ii) produce a label region with probabilistic guarantees (section \ref{sec:ihdr});  
(iii) do not incur a large computational overhead, since CDEC requires only a small ensemble and IDEC operates with a single posterior model (section \ref{sec:experiments});  
and (iv) are empirically validated across predictive accuracy (section \ref{sec:predictive-performance}), uncertainty decomposition (section \ref{sec:uncertainty-estimation}), OoD detection (section \ref{sec:ood}), and ensemble ablations (section \ref{sec:cdec_ablation}).


The paper is divided as follows. Section \ref{back} gives the necessary background to understand our results. Section \ref{unc's} presents AU and EU in detail, and how to quantify them. Sections \ref{ien} and \ref{ien-simpl} introduces CDEC and IDEC, respectively. In section \ref{sec:experiments} we produce our experimental findings, and we conclude in section \ref{concl}. We prove our results in Appendix \ref{proofs}.

\section{Background}\label{back}

\subsection{Dirichlet and Categorical Distributions}\label{dir_and_cat}

In this section, we introduce two distributions that are fundamental to understand the procedures presented in this paper \citep{hoff}. 

In a $k$-class classification problem, letting  label $Y_i$ be distributed according to a \textit{Categorical distribution} is an extremely natural choice. In general, a Categorical $\text{Cat}(\pi)$ is parameterized by a $k$-dimensional probability vector $\pi$, that is, $\pi_{j} \geq 0$, for all $j\in\{1,\ldots,k\}$, and $\sum_j \pi_{j}=1$. Its probability mass function (pmf) is $P(\{j\}) \equiv p(Y_i=j)=\pi_{j}$, for all $j\in\{1,\ldots,k\}$. That is, the probability of $j$ being the ``true label'' is given by the $j$-th entry of vector $\pi$. 

Adopting
a Bayesian approach, we posit that $\pi$ is itself a random quantity, and we need to elicit its distribution, that we call \textit{prior}. 
Recall that the \textit{conjugate prior}
is the prior that, once combined with the likelihood, gives a posterior that belongs to the same family of distributions, but with an updated parameter. Given our Categorical distribution $\text{Cat}(\pi)$, 
its conjugate prior is a \textit{Dirichlet distribution} $\text{Dir}(\alpha)$. 
The latter is defined
on the $(k-1)$-dimensional unit simplex, and it is parameterized by $\alpha$, a $k$-dimensional vector whose entries are such that $\alpha_{j} \geq 1$, for all $j\in\{1,\ldots,k\}$. Its probability density function (pdf) is given by $p(\pi)=1/B(\alpha) \cdot \prod_{j=1}^k \pi_{j}^{\alpha_{j}-1}$, where $B(\cdot)$ denotes the multivariate Beta function. We can interpret the $j$-th entry of $\alpha$ as prior (or pseudo-) counts: $\alpha_{j}$ represents the (virtual) prior observations that we have for label $j$, and 
captures the agent's prior knowledge around $j$ 
(coming, for instance, from previous or similar experiments). The expected value of $\pi \sim \text{Dir}(\alpha)$ is given by $\mathbb{E}(\pi_{j})=\alpha_{j}/\sum_{l=1}^k \alpha_{l}$, $j\in\{1,\ldots,k\}$ (the normalised pseudo-count for class $j$), and expresses the prior belief that $j$ is the ``true'' label. 

Once combined with a $\text{Cat}(\pi)$ likelihood, the $\text{Dir}(\alpha)$ prior induces a $\text{Dir}(\alpha + c)$ posterior. Here, $c$ is a $k$-dimensional vector whose $j$-th entry $c_j=n_j$ represents the number $n_j$ of observations in the training set that are equal to $j$. 

The corresponding \textit{(posterior) predictive distribution}, that is, the probability distribution on the classes for a new (test) input $\tilde{x}$, is given by the Categorical distribution $\text{Cat}(\pi^\prime)$, where $\pi^\prime_j=(\alpha_{j} + c_{j})/\sum_{l=1}^k (\alpha_{l} + c_{l})$, $j\in\{1,\ldots,k\}$. It expresses the probability that the class $\tilde{Y}$ for the new input $\tilde{x}$ is equal to $j$, in light of what we learned from the data. 

\begin{remark}\label{dir_sec_ord}
    A Dirichlet distribution can be seen as a second-order distribution on the probabilities on $\{1,\ldots,k\}$. To see this, notice that probability distributions with finite support can be written in the form of probability vectors. Suppose for example that $k=3$, and that $P(\{1\})=0.6$, $P(\{2\})=0.3$, and $P(\{3\})=0.1$. Then, $P$ is equivalent to the probability vector $(0.6, 0.3, 0.1)^\top$, which belongs to the $2$-dimensional unit simplex. Hence, the $(k-1)$-unit simplex is equivalent to the class of all possible distributions supported on $\{1,\ldots,k\}$. As a consequence, since Dirichlet distributions are supported on the unit simplex, they can be seen as distributions over distributions, 
    or
    \textit{second-order distributions}. EDL techniques typically take the second-order distribution approach when dealing with Dirichlet's \citep{ulmer}, which leads to problems with the elicitation of EU \citep{bengs_difficulty}. 
    To avoid these issues, 
    in this paper we see Dirichlet's as regular, first-order distributions.
\end{remark}

\subsection{Posterior via Normalizing Flows}\label{post_nf}

In this work, we use the EDL method proposed in \citet{charpentier} to obtain the predictive distribution $\text{Cat}(\pi^\prime)$, which will be utilized in sections \ref{ien} and \ref{ien-simpl} to build a collection of labels that contains the correct one for a new input $\tilde x$ with high probability.
We opt for the technique in \citet{charpentier} because, as pointed out by \citet{ulmer}, it provides the best result on the tested benchmarks, both in terms of task performance and uncertainty quality. This procedure is mainly concerned with deriving a ``virtual'' number $n^\text{virt}_j$ of observations for category $j$, $j\in\{1,\ldots,k\}$, by combining an encoder and a normalizing flow. We first illustrate the distributions involved, and then discuss how to derive $n^\text{virt}_j$, for all $j$.

Let $D=\{(x_i,y_i)\}_{i=1}^{n}$ be our training set, where, for all $i$, $x_i\in \mathcal{X}$ denotes an input and $y_i\in\mathcal{Y}$ is the output/label corresponding to $x_i$.\footnote{We denote by $\mathcal{X}$ and $\mathcal{Y}$ the input and output spaces, respectively.} The values for $n_j^\text{virt}$ are obtained as follows. We use the training set $D$ to retrieve -- via the loss in \citet[Equation (7)]{charpentier} -- the parameters pair 
$(\theta,\phi)$, where the former parameterizes 
an encoder model $f_\theta$ that, once fed $\tilde{x}$, returns a latent representation $f_\theta(\tilde{x})=\tilde{z}$ of input $\tilde{x}$. Then, a class-specific normalizing flow \citep{rezende} parameterized by $\phi$ assigns a probability $q(\tilde{z} \mid y=j,\phi)$ to the latent representation $\tilde{z}$. In turn, $q(\tilde{z} \mid y=j,\phi)$ is used to weight the number $n_j$ of observations in the training set $D$ that are equal to $j$. In formulas,
$$n_j^\text{virt}=n_j \cdot q(\tilde{z} \mid y=j,\phi).$$

Call now $D^\text{virt}=\{(x_i,y_i)\}_{i=1}^{n^\text{virt}}$ our ``virtual training set'', the training set in which the number of occurrences of label $j$ is the value $n_j^\text{virt}$, as derived during training; in turn, $n^\text{virt}=\sum_{j=1}^k n_j^\text{virt}$. Notice that $n^\text{virt}$ may well not be an integer. Writing $D=\{(x_i,y_i)\}_{i=1}^{n^\text{virt}}$ is a notational abuse that allows us to highlight the main feature of the method in \citet{charpentier}, that is, coming up with an evidential posterior predictive probability.
We assume that there are 
$k = |\mathcal{Y}|$ classes. The procedure lets 
\begin{align}
    Y_i \mid \pi,x_i &\sim \text{Cat}(\pi) \quad \text{i.i.d.}, \label{iid_ass}\\
    \pi &\sim \text{Dir}(\mathbf{1}_k), \nonumber
\end{align}
where $\mathbf{1}_k$ is a $k$-dimensional vector of all $1$'s. Distribution $\text{Dir}(\mathbf{1}_k)$ is sometimes referred to as ($k$-dimensional) \textit{Uniform Dirichlet}, and it captures the idea of agnostic prior beliefs.\footnote{That is, at the beginning of the analysis we are either ignorant around which class is more likely to be the correct one, or indifferent among all the $k$ classes, i.e. we believe they are all equally likely. Famously \citep{walley}, a (precise) Uniform distribution cannot distinguish between these two cases. This is sometimes referred to as \textit{Laplace's paradox}.} 
The posterior distribution of the classes, having observed the training data, is thus
$$\pi \mid D \sim \text{Dir}(\mathbf{1}_k + c),$$
where $c$ is the vector of (virtual) counts introduced in section \ref{dir_and_cat}, that is, $c_j=n_j^\text{virt}$, for all $j$.

Given a new (test) input $\tilde{x}$, we get a predictive distribution of the form
$$\tilde{Y} \mid \tilde{x}, D \sim \text{Cat}(\pi^\prime); \quad \pi^\prime_j=\frac{1 + n_j^\text{virt}}{\sum_{l=1}^k (1 + n_l^\text{virt})}, \quad j\in\{1,\ldots,k\}.$$


This procedure has the advantage of producing low probabilities for ``strange'' inputs, which in turn translate to low posterior counts $1+n_j^\text{virt}$ of the Dirichlet posterior. This was used in \citet{charpentier} to detect out-of-distribution (OoD) inputs.

\section{Credal uncertainty modeling}
\label{unc's}


\subsection{Likelihood Specification and Credal Sets}\label{lik-spec}

Let $p(y_i \mid \pi,x_i)$ denote the likelihood of the training pair $(x_i,y_i)$. Then, from our i.i.d. assumption in \eqref{iid_ass}, 
the likelihood of the whole training set $D$ is 
$\prod_{i=1}^{n^\text{virt}} p(y_i \mid \pi,x_i)$. This is again an abuse of notation since $n^\text{virt}$ may well not be integer, and it means that 
the likelihood of $D$ assigns a value of $\pi_j^{n^\text{virt}_j}$ to label $j$, for all $j\in\{1,\ldots,k\}$. 

As 
the values of the virtual number of observations $n^\text{virt}_j$'s are a function of network parameters $\theta$ and $\phi$ (see section \ref{post_nf}), 
different values of
these parameters 
may result in 
(potentially noticeably) different likelihoods.\footnote{In practice, we vary $\theta$ and $\phi$ by varying the seeds of the encoder and of the normalized flow, respectively.} How can we express, then, our uncertainty around which likelihood distribution is the ``correct one''? That is, how can we hedge against likelihood misspecification? A natural idea -- explored for the first time in the field of robust statistics in the 1970s \citep{huber_choq} --  is that of considering a set of  ``plausible'' likelihoods \citep{augustin,berger,ibnn,ergo_me,novel_Bayes,teddy_me,gilboa,marinacci,walley}, or, more formally, a likelihood credal set.

\begin{definition}[Credal Set \citep{levi2}]\label{credal-set}
    A closed and convex set of probabilities $\mathcal{P}$ is called a \textit{credal set}.\footnote{Here closed has to be understood with respect to some topology endowed to the space of probabilities.} If a credal set $\mathcal{P}$ has finitely many extreme elements, that is, if $|\text{ex}\mathcal{P}|<\infty$, it is called a \textit{finitely generated credal set} (FGCS).
\end{definition}
The extreme elements $P^\text{ex} \in \text{ex}\mathcal{P}$, or extrema, of a convex set are those that cannot be written as a convex combination of one another. 

Consider $S \in \mathbb{N}_{\geq 2}$ many different encoders and normalized flows, whose respective outputs $f_{\theta_s}(\tilde{x})=\tilde{z}_s$ and $q(\tilde{z}_s \mid y=j,\phi_s)$, $s\in\{1,\ldots,S\}$,\footnote{Here we write $f_{\theta_s}(\tilde{x})=\tilde{z}_s$ to highlight the fact that the value of $\tilde{z}_s$ depends on the parameter $\theta_s$, $s\in\{1,\ldots,S\}$.} are used to elicit different values of the $n_j^\text{virt}$'s, which we denote by $n^\text{virt}_{s,j}$. Ultimately, this leads to specifying a finite set $\{\prod_{i=1}^{n^\text{virt}_s} p(y_i \mid \pi,x_i)\}_{s=1}^S$ of likelihoods of $D$, and, in turn, a likelihood FGCS 
\[
\mathcal{P}_\text{lik}=\text{Conv} \left ( \left \{ \prod_{i=1}^{n^\text{virt}_s} p(y_i \mid \pi,x_i) \right \}_{s=1}^S \right ), 
\]
where $\text{Conv}(\cdot)$ denotes the convex hull operator. That is, $\mathcal{P}_\text{lik}$ is the set of all possible convex combinations of the elements of $\{\prod_{i=1}^{n^\text{virt}_s} p(y_i \mid \pi,x_i)\}_{s=1}^S$.\footnote{Given its definition, the likelihood FGCS $\mathcal{P}_\text{lik}$ can be seen as a \textit{type-2 product} FGCS \citep[Section 9.3]{walley}.}

We can then use the elements of $\{\prod_{i=1}^{n^\text{virt}_s} p(y_i \mid \pi,x_i)\}_{s=1}^S$ to derive a 
finite set $\{\text{Cat}(\pi^\prime_s)\}_{s=1}^S$ of predictive Categorical distributions, where $\pi^\prime_{s,j}=(1+n^\text{virt}_{s,j})/\sum_j (1+n^\text{virt}_{s,j})$, for all $s\in\{1,\ldots,S\}$ and all $j\in\{1,\ldots,k\}$. By convex combination, this 
yields a 
predictive credal set 
\begin{equation} \label{eq:predictive-credal}
\mathcal{P}_\text{pred}=\text{Conv}(\{\text{Cat}(\pi^\prime_s)\}_{s=1}^S). 
\end{equation}
As we can see, the uncertainty around the correct likelihood percolates through the classification procedure described in section \ref{post_nf}, and manifests itself as predictive uncertainty encoded in (\ref{eq:predictive-credal}). The assumption underlying the specification of $\mathcal{P}_\text{lik}$ is that the ``true'' data generating process belongs to the likelihood FGCS $\mathcal{P}_\text{lik}$. Such an assumption can be tested via the techniques in \citet{pmlr-v258-chau25a}. If this holds, then the ``true'' predictive distribution belongs to the predictive FGCS $\mathcal{P}_\text{pred}$ derived from $\mathcal{P}_\text{lik}$.

Notice that, in general, the predictive FGCS $\mathcal{P}_\text{pred}$ can have less than $S$ extreme elements. In formulas, $\text{ex}\mathcal{P}_\text{pred} \subseteq \{\text{Cat}(\pi^\prime_s)\}_{s=1}^S$. This because it is possible that one or more of the $S$ Categorical predictive distributions $\{\text{Cat}(\pi^\prime_s)\}_{s=1}^S$ whose convex hull constitutes $\mathcal{P}_\text{pred}$ could be written as a convex combination of the other Categoricals. In Algorithm \ref{algo-1}, Step 4, we give a way of precisely identifying the extreme elements $\text{ex}\mathcal{P}_\text{pred}$.

\subsection{Aleatoric and Epistemic uncertainties}

How can we distinguish between aleatoric and epistemic uncertainties in credal predictions of the form (\ref{eq:predictive-credal})?\footnote{We are especially interested in predictive uncertainties, as our main focus is on the downstream performance of our approach.} 
Recall that, given an arbitrary 
probability measure $P$ on a finite (or countable) space $\mathcal{Y}$, 
its (Shannon) entropy 
is defined as $H(P)\coloneqq-\sum_{y\in\mathcal{Y}} P(\{y\}) \log_2[P(\{y\})]$. For a (single) Categorical distribution  $P_s=\text{Cat}(\pi^\prime_{s})$, 
this becomes
\begin{equation}\label{entr-cat}
    H(P_s)=-\sum_{j=1}^k \pi^\prime_{s,j} \log_2(\pi^\prime_{s,j}).
\end{equation}
The entropy primarily captures the shape of the pmf of $P$, namely its ``peakedness'' or non-uniformity \citep{dubois,eyke}, and hence informs about the predictability of the outcome of a random experiment: the higher its value, the lower the predictability. 

Consider instead
an arbitrary
credal set $\mathcal{P}$ on $\mathcal{Y}$. Then, we can define the credal versions of the Shannon entropy as proposed by \citep{abellan,eyke}, $\overline{H}(P)\coloneqq\sup_{P\in\mathcal{P}}H(P)$ and $\underline{H}(P)\coloneqq\inf_{P\in\mathcal{P}}H(P)$, called the \emph{upper} and \emph{lower (Shannon) entropy}, respectively. The upper entropy is a measure of total uncertainty, since it represents the minimum level of predictability associated with the elements of $\mathcal{P}$. In \citet{abellan,eyke}, the authors posit that it can be decomposed as a sum of aleatoric and epistemic uncertainties, and that the latter can be specified as the difference between upper and lower entropy, thus obtaining\footnote{For an exhaustive explanation of the validity of upper and lower entropy as uncertainty measures, we refer the interested reader to \citep[Page 541]{walley} and \citep[Chapter 10.5.1]{augustin}.}
\begin{equation}\label{decomp}
    \underbrace{\overline{H}(P)}_{\text{total uncertainty }\text{TU}(\mathcal{P})}=\underbrace{\underline{H}(P)}_{\text{aleatoric uncertainty }\text{AU}(\mathcal{P})} +  \underbrace{\left[\overline{H}(P)-\underline{H}(P)\right].}_{\text{epistemic uncertainty }\text{EU}(\mathcal{P})}
\end{equation}

We now give an ``operative'' upper bound to $\text{TU}(\mathcal{P}_\text{pred})$ in terms of the extreme elements $\text{ex}\mathcal{P}_\text{pred} \subseteq \{\text{Cat}(\pi^\prime_s)\}_{s=1}^S$ of $\mathcal{P}_\text{pred}$.\footnote{Here, by ``operative'' we mean that these bounds are easy to compute in practice.} We also show that $\text{AU}(\mathcal{P}_\text{pred})=\text{AU}(\text{ex}\mathcal{P}_\text{pred})$. These results 
are leveraged by our approach in Algorithm \ref{algo-1} (see section \ref{ien}). In Appendix \ref{thm-bounds-app}, we also give a lower bound for $\text{TU}(\mathcal{P}_\text{pred})$, and both upper and lower bounds for $\text{EU}(\mathcal{P}_\text{pred})$.



\begin{theorem}[Total and Aleatoric Uncertainties]\label{thm-imp-reduced}
Suppose, without loss of generality, that $\text{ex}\mathcal{P}_\text{pred} = \{\text{Cat}(\pi^\prime_s)\}_{s=1}^S$. Let $\Delta^{S-1}$ denote the $S$-dimensional unit simplex,
    $$\Delta^{S-1}\coloneqq\bigg\{\beta=(\beta_1,\ldots,\beta_S)^\top : \beta_s \geq 0 \text{ for all } s \text{, and } \sum_s \beta_s =1\bigg\}.$$
Let $\underline{H}(P^\text{ex})\coloneqq\min_{P^\text{ex} \in \text{ex}\mathcal{P}_\text{pred} }H(P^\text{ex})$. Then,
\begin{align}
    \text{TU}(\mathcal{P}_\text{pred}) 
    =
    \overline{H}(P)
    &
    \leq \sup_{\beta\in\Delta^{S-1}} \sum_{s=1}^S \beta_s H(P^\text{ex}_s) + \log_2(S) \eqqcolon u[TU(\mathcal{P}_\text{pred})], \label{u-bd-tu}
    \\
    \text{AU}(\mathcal{P}_\text{pred}) 
    =
    \underline{H}(P)
    &
    =
    \text{AU}(\text{ex}\mathcal{P}_\text{pred})=\underline{H}(P^\text{ex}). \nonumber
\end{align}
\end{theorem}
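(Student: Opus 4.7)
The plan is to parametrize every element of $\mathcal{P}_\text{pred}$ by its mixture weights and then reduce both claims to two classical facts about Shannon entropy: the mixture entropy inequality and the concavity of entropy. Since $\mathcal{P}_\text{pred} = \text{Conv}(\{P^\text{ex}_s\}_{s=1}^S)$ with finitely many extrema, every $P\in\mathcal{P}_\text{pred}$ admits a representation $P=\sum_{s=1}^S \beta_s P^\text{ex}_s$ for some $\beta\in\Delta^{S-1}$ (not necessarily unique, which is harmless as we only take sup/inf). Hence $\overline{H}(P)=\sup_{\beta\in\Delta^{S-1}} H\bigl(\sum_s\beta_s P^\text{ex}_s\bigr)$ and $\underline{H}(P)=\inf_{\beta\in\Delta^{S-1}} H\bigl(\sum_s\beta_s P^\text{ex}_s\bigr)$.

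For the upper bound on $\text{TU}(\mathcal{P}_\text{pred})$, I would invoke the mixture entropy inequality
\[
H\Bigl(\textstyle\sum_{s=1}^S \beta_s P^\text{ex}_s\Bigr) \;\leq\; \sum_{s=1}^S \beta_s\, H(P^\text{ex}_s) \;+\; H(\beta),
\]
where $H(\beta)=-\sum_s \beta_s \log_2 \beta_s$. This is a standard consequence of writing the mixture as the $Y$-marginal of the joint law of $(S^\star,Y)$ with $S^\star\sim\beta$ and $Y\mid S^\star=s\sim P^\text{ex}_s$, then using $H(Y)\leq H(Y,S^\star)=H(S^\star)+H(Y\mid S^\star)=H(\beta)+\sum_s\beta_s H(P^\text{ex}_s)$. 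Taking the supremum over $\beta\in\Delta^{S-1}$ and using the elementary bound $H(\beta)\leq\log_2 S$ (attained by the uniform $\beta$, via Jensen's inequality applied to the concave function $-x\log_2 x$), yields
\[
\overline{H}(P)\;\leq\; \sup_{\beta\in\Delta^{S-1}}\sum_{s=1}^S \beta_s H(P^\text{ex}_s)+\log_2(S),
\]
which is exactly $u[TU(\mathcal{P}_\text{pred})]$.

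For the aleatoric part, concavity of Shannon entropy on the simplex $\Delta^{k-1}$ of Categorical parameters immediately gives
\[
H\Bigl(\textstyle\sum_{s=1}^S \beta_s P^\text{ex}_s\Bigr) \;\geq\; \sum_{s=1}^S \beta_s H(P^\text{ex}_s) \;\geq\; \min_{s\in\{1,\ldots,S\}} H(P^\text{ex}_s) \;=\; \underline{H}(P^\text{ex}),
\]
for every $\beta\in\Delta^{S-1}$, so $\underline{H}(P)\geq \underline{H}(P^\text{ex})$. The reverse inequality is trivial, because each extreme point $P^\text{ex}_s$ belongs to $\mathcal{P}_\text{pred}$, hence $\inf_{P\in\mathcal{P}_\text{pred}} H(P)\leq \min_s H(P^\text{ex}_s)=\underline{H}(P^\text{ex})$. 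Combining gives $\text{AU}(\mathcal{P}_\text{pred})=\text{AU}(\text{ex}\mathcal{P}_\text{pred})=\underline{H}(P^\text{ex})$, and this lower infimum is actually attained at an extreme element.

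No step is technically deep: both $\mathcal{Y}$ and the number of extrema $S$ are finite, so there are no measure-theoretic subtleties, entropy is continuous on $\Delta^{k-1}$, and the sup/inf are attained by compactness of $\Delta^{S-1}$. The only ingredient that might warrant more than a line of justification is the mixture entropy inequality; I would either cite it as a textbook fact (Cover and Thomas) or derive it in two lines as above. A minor subtlety to flag, not an obstacle, is that the displayed $u[TU(\mathcal{P}_\text{pred})]$ leaves the supremum over $\beta$ in place rather than replacing $\sum_s\beta_s H(P^\text{ex}_s)$ by its maximum $\max_s H(P^\text{ex}_s)$; retaining the sup form is tighter and is exactly what will be computed by Algorithm \ref{algo-1}.
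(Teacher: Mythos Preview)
Your proof is correct and follows essentially the same route as the paper: parametrize $\mathcal{P}_\text{pred}$ by mixture weights, use concavity of entropy for the $\text{AU}$ equality, and use the mixture entropy inequality $H(\sum_s\beta_s P^\text{ex}_s)\leq\sum_s\beta_s H(P^\text{ex}_s)+H(\beta)\leq\sum_s\beta_s H(P^\text{ex}_s)+\log_2 S$ for the $\text{TU}$ upper bound (the paper cites \citet{smieja} for this, while you derive it via the joint-law argument). One small remark on your closing comment: since $\beta\mapsto\sum_s\beta_s H(P^\text{ex}_s)$ is linear on $\Delta^{S-1}$, its supremum equals $\max_s H(P^\text{ex}_s)$, so the ``sup form'' is not actually tighter---it is identical.
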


Theorem \ref{thm-imp-reduced} is extremely important, for it provides a simple way of calculating the AU and an easily computable upper bound for the TU of the predictive (finitely generated) credal set $\mathcal{P}_{\text{pred}}$. In particular, the entropy $H(P_s^\text{ex})$ of the extreme elements 
can be analytically computed
as in \eqref{entr-cat}, for all $s\in\{1,\ldots,S\}$. Calculating $\underline{H}(P^\text{ex})$, then, is immediate. 
On the other hand, the quantity
$\sup_{\beta\in\Delta^{S-1}} \sum_{s=1}^S \beta_s H(P^\text{ex}_s)$ 
can be calculated
in polynomial time. 

We should note that
upper and lower entropy are not the only 
tools for 
quantifying and disentangling AU and EU in the credal set framework. Other measures are also available (see \citep{andrey,hofman2024quantifying,chau2025integralimpreciseprobabilitymetrics}, \citep[Section 4.6.1]{eyke} for a few examples) and they can be used in place of upper and lower entropy to quantify EU and AU within our predictive credal set $\mathcal{P}_\text{pred}$, as long as the measure chosen for the total uncertainty is bounded. In particular, we discuss about one such an alternative measure based on the generalized Hartley measure (GHM), together with the reason why we preferred upper and lower entropy (computing GHM being extremely costly even when $|\mathcal{Y}|$ is moderate), in Appendix \ref{app-5}.

\section{Credal Deep Evidential Classification}\label{ien}

In this section, we present our first procedure, that we call Credal Deep Evidential Classification (CDEC).

\subsection{Imprecise Highest Density Regions}

We first introduce the concepts of lower/upper probabilities and of $(1-\gamma)$-imprecise highest density region (IHDR). 

\begin{definition}[Lower and Upper Probabilities]
    Let $\mathcal{P}$ be a credal set on $\mathcal{Y}$. Then, its \textit{lower envelope} $\underline{P}(A)=\inf_{P \in\mathcal{P}}P(A)$, for all $A\subseteq \mathcal{Y}$, is called the \textit{lower probability} of $\mathcal{P}$. Its \textit{upper envelope} $\overline{P}(A)=\sup_{P \in\mathcal{P}}P(A)$, for all $A\subseteq \mathcal{Y}$, is called the \textit{upper probability} of $\mathcal{P}$.
\end{definition}

Lower and upper probabilities are conjugate to each other, that is, $\overline{P}(A)=1-\underline{P}(A^c)$ and $\underline{P}(A)=1-\overline{P}(A^c)$, for all $A\subseteq \mathcal{Y}$, where $A^c \coloneqq \mathcal{Y}\setminus A$. Hence, knowing one is sufficient to retrieve the other. The lower and upper probabilities of an FGCS $\mathcal{P}$ coincide with those of the set of its extreme elements $\text{ex}\mathcal{P}$ \citep{dantzig}, \citep[Proposition 2]{ibnn}.\footnote{A version of \citep[Proposition 2]{ibnn} for finitely additive probability measures can be found in \citep[Section 3.6]{walley}.}

\begin{definition}[Imprecise Highest Density Region \citep{coolen}]\label{ihdr-def}
	Let $\mathcal{P}$ be a generic set of probabilities on a finite set $\mathcal{Y}$. Let $\underline{P}$ be the lower probability of $\mathcal{P}$. Let $Y$ be a random quantity that takes values in $\mathcal{Y}$. Let also $\gamma$ be any value in $[0,1]$. Then, set $IR_\gamma(\mathcal{P}) \subseteq \mathcal{Y}$ is called a $(1-\gamma)$-\textit{Imprecise Highest Density Region} (IHDR) if $\underline{P}[Y \in IR_\gamma(\mathcal{P})]\geq 1-\gamma$ and $|IR_\gamma(\mathcal{P})|$ is a minimum. The notation ``IR'' stands for ``Imprecise Region''.
\end{definition}
Let us give an example. Suppose $\mathcal{Y}=\{y_1,\ldots,y_5\}$, $\text{ex}\mathcal{P}=\{P_1,P_2,P_3\}$, and $\gamma=0.1$. The numerical values for $P(\{y_j\})$ are given in Table \ref{tab_ex}, for all $P\in\text{ex}\mathcal{P}$ and all $j\in\{1,\ldots,5\}$. Then, from Definition \ref{ihdr-def} and \citep[Proposition 3]{ibnn}, we have that $IR_\gamma(\mathcal{P})=\{y_1,y_2,y_3\}$.

\begin{table}[h!]
\begin{tabular}{l|lllll}
      & $y_1$ & $y_2$  & $y_3$  & $y_4$   & $y_5$   \\ \hline
$P_1$ & $0.7$ & $0.25$ & $0.03$ & $0.01$  & $0.01$  \\
$P_2$ & $0.6$ & $0.2$  & $0.1$  & $0.05$  & $0.05$  \\
$P_3$ & $0.5$ & $0.3$  & $0.15$ & $0.025$ & $0.025$
\end{tabular}
\caption{Numerical values for our example. It is easy to see that the smallest subset of $\mathcal{Y}$ that is assigned a probability of at least $0.9$ by all the elements of $\text{ex}\mathcal{P}$ is $\{y_1,y_2,y_3\}$.}
\label{tab_ex}
\end{table}


By definition of lower probability, 
we know that ${P}[Y \in IR_\gamma(\mathcal{P})]\geq 1-\gamma$, for all $P\in\mathcal{P}$; here lies the appeal of the IHDR concept. In this work, we are interested in the IHDR associated with $\mathcal{P}_\text{pred}$. 
Namely, $\mathcal{Y}=\{1,\ldots,k\}$ is the list of classes/labels, and $IR_\gamma(\mathcal{P}_\text{pred})$ is the smallest collection of labels
such that 
its lower probability is {at least} $1-\gamma$,  
$\underline{P}[IR_\gamma(\mathcal{P}_\text{pred})]\geq 1-\gamma$. 
An operative lower bound for $\underline{P}[IR_\gamma(\mathcal{P}_\text{pred})]$
can be given as follows.\footnote{As for Theorem \ref{thm-imp-reduced}, by ``operative'' we mean that this bound is easy to compute in practice.}

\begin{theorem}[Lower Bound for the Lower Probability of an IHDR]\label{bound-ihdr}
	Let 
 $IR_\gamma(\mathcal{P}_\text{pred})$ denote the $(1-\gamma)$-IHDR associated with $\mathcal{P}_\text{pred}$, for some $\gamma\in[0,1]$. Then,
	\begin{equation}\label{eq-bound-ihdr}
	\underline{P}[IR_\gamma(\mathcal{P}_\text{pred})] \geq \sum_{y\in IR_\gamma(\mathcal{P}_\text{pred})} \underline{P}(\{y\}).
	\end{equation}
\end{theorem}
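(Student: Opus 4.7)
The plan is to observe that the claimed inequality is essentially the superadditivity of lower probabilities applied to the decomposition of $IR_\gamma(\mathcal{P}_\text{pred})$ into its singletons. Since $\mathcal{Y} = \{1,\dots,k\}$ is finite, every $P \in \mathcal{P}_\text{pred}$ is a finitely additive probability, so for any fixed $P \in \mathcal{P}_\text{pred}$ one immediately gets
\begin{equation*}
P\bigl[IR_\gamma(\mathcal{P}_\text{pred})\bigr] \;=\; \sum_{y \in IR_\gamma(\mathcal{P}_\text{pred})} P(\{y\}).
\end{equation*}

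The key step is then to bound each singleton term from below by its lower probability. By definition $\underline{P}(\{y\}) = \inf_{P' \in \mathcal{P}_\text{pred}} P'(\{y\})$, so $P(\{y\}) \geq \underline{P}(\{y\})$ for every $y \in \mathcal{Y}$ and every $P \in \mathcal{P}_\text{pred}$. Substituting into the additivity identity above gives
\begin{equation*}
P\bigl[IR_\gamma(\mathcal{P}_\text{pred})\bigr] \;\geq\; \sum_{y \in IR_\gamma(\mathcal{P}_\text{pred})} \underline{P}(\{y\}),
\end{equation*}
and since the right-hand side no longer depends on $P$, I can take the infimum of the left-hand side over $P \in \mathcal{P}_\text{pred}$ to obtain $\underline{P}[IR_\gamma(\mathcal{P}_\text{pred})] \geq \sum_{y \in IR_\gamma(\mathcal{P}_\text{pred})} \underline{P}(\{y\})$, which is exactly \eqref{eq-bound-ihdr}.

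There is no genuine obstacle here: the result is a direct consequence of finite additivity of each member of $\mathcal{P}_\text{pred}$ combined with the elementary fact that the infimum of a sum is at least the sum of the infima (taken term-by-term). The only thing worth flagging is that the argument never uses the specific structure of $IR_\gamma(\mathcal{P}_\text{pred})$ as a highest-density region, nor the threshold $\gamma$; the same reasoning shows that for any finite $A \subseteq \mathcal{Y}$, $\underline{P}(A) \geq \sum_{y \in A}\underline{P}(\{y\})$, which is the familiar superadditivity (in fact, super-sigma-additivity for disjoint singletons) of a lower envelope of a set of probability measures. Consequently no appeal to convexity, closedness, or finite generation of $\mathcal{P}_\text{pred}$ is needed for the bound.
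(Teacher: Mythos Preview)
Your proof is correct and takes essentially the same approach as the paper, which simply states that the bound is ``immediate from the superadditivity of lower probabilities'' with a citation. You have spelled out exactly why superadditivity holds here (finite additivity of each $P$ plus infimum-of-sum $\geq$ sum-of-infima), and your observation that nothing about the IHDR structure or about convexity/finite generation of $\mathcal{P}_\text{pred}$ is needed is accurate.
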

Notice that $\underline{P}(\{j\})=\min_{s} \pi^\prime_{s,j}$ thanks to \citep[Proposition 2]{ibnn}. This makes it easy to compute the bound in \eqref{eq-bound-ihdr}.

\subsection{Credal Deep Evidential Classification}\label{cdec-sec}


Our first classification procedure
is presented in Algorithm \ref{algo-1}. We call it Credal Deep Evidential Classification because it is based on credal sets theory and -- when TU is not too high -- its output is an IHDR. 

\begin{algorithm}
\caption{Credal Deep Evidential Classification (CDEC) -- Training and Inference}\label{algo-1}
\begin{algorithmic}
\item 
\textit{During Training}
\item
\textbf{Step 1} Specify $S$ many random seeds for as many encoders and  normalizing flows
\item 
\textbf{Step 2} Obtain the $S$ parameters pairs $\{(\theta_s,\phi_s)\}_{s=1}^S$ using the loss in \citep[Equation 7]{charpentier}
\item 
\textit{During Inference}
\flushleft
\hspace*{\algorithmicindent} \textbf{Inputs:} New input $\tilde{x}\in\mathcal{X}$\\
\hspace*{\algorithmicindent} \textbf{Parameters:} Uncertainty threshold $\epsilon > 0$; Confidence parameter $\gamma \in [0,1]$
\\
\hspace*{\algorithmicindent} \textbf{Outputs:} Abstain for excessive AU; Abstain for excessive EU; $(1-\gamma)$-IHDR $IR_\gamma(\mathcal{P}_\text{pred})$
\item
\textbf{Step 3} Derive $\pi^\prime_s$ as in section \ref{post_nf}, for all $s\in\{1,\ldots,S\}$
\item 
\textbf{Step 4} Obtain the extreme elements $\text{ex}\mathcal{P}_\text{pred}$ of $\mathcal{P}_\text{pred}=\text{Conv}(\{\text{Cat}(\pi^\prime_{s})\}_{s=1}^S)$
\item 
\textbf{Step 5} Compute and return $u[\text{TU}(\mathcal{P}_\text{pred})]$ and $\text{AU}(\mathcal{P}_\text{pred})$ using Theorem \ref{thm-imp-reduced}
\item 
\textbf{Step 6} \If {$\log_2(k) - u[\text{TU}(\mathcal{P}_\text{pred})]\geq\epsilon$}
        \State \text{Compute and return the $(1-\gamma)$-IHDR $IR_\gamma(\mathcal{P}_\text{pred}) \subseteq \{1,\ldots,k\}$}
\ElsIf {$\log_2(k) - u[\text{TU}(\mathcal{P}_\text{pred})]<\epsilon$  \textbf{and} $\text{AU}(\mathcal{P}_\text{pred})/u[\text{TU}(\mathcal{P}_\text{pred})] \geq 1/2$}
        \State \text{Abstain from producing an IHDR due to excess  AU}

    \ElsIf {$\log_2(k) - u[\text{TU}(\mathcal{P}_\text{pred})]<\epsilon$ \textbf{and} $\text{AU}(\mathcal{P}_\text{pred})/u[\text{TU}(\mathcal{P}_\text{pred})] < 1/2$}
        \State \text{Abstain from producing an IHDR due to excess EU}
    \EndIf
\end{algorithmic}
\end{algorithm}

During training, in Step 1, the designer selects a number $S$ of random seeds for as many encoders and normalizing flows. Number $S$ is linked to the ambiguity faced by the agent. The larger $S$ -- that is, the more random seed the agent specifies --, the higher the ambiguity. 

In Step 2, the user obtains the $S$ parameter pairs $\{(\theta_s,\phi_s)\}_{s=1}^S$ via the loss in \citet{charpentier}. During inference, a new input $\tilde x$ is provided. 

In Step 3, the designer derives parameters $\{\pi^\prime_s\}_{s=1}^S$ for (posterior) predictive distributions $\{\text{Cat}(\pi^\prime_{s})\}_{s=1}^S$, as illustrated in section \ref{post_nf}. 

In Step 4, they obtain the elements of $\text{ex}\mathcal{P}_\text{pred}$. Notice that a convex mixture of Categorical distributions is again a Categorical. Hence, to obtain $\text{ex}\mathcal{P}_\text{pred}$ we must verify which elements of $\{\text{Cat}(\pi^\prime_{s})\}_{s=1}^S$ cannot be written as a convex combination of one another. This is done by running a convex hull algorithm like \textit{gift wrapping} on $\{\pi^\prime_{s}\}_{s=1}^S$ \citep[Chapters 3, 4]{preparata}. We can do so because, for all $s$, distribution $\text{Cat}(\pi^\prime_{s})$ is uniquely identified by its parameter $\pi^\prime_{s}$. 

Step 5 computes $\text{AU}(\mathcal{P}_\text{pred})$ and the upper bound for $\text{TU}(\mathcal{P}_\text{pred})$ using Theorem \ref{thm-imp-reduced}.

As for Step 6, recall that $\log_2(k)$ is the highest possible entropy associated with a discrete distribution over $k$ classes. By definition of upper entropy of a credal set, $\text{TU}(\mathcal{P}_\text{pred})$ is also bounded by $\log_2(k)$. Then, if $u[\text{TU}(\mathcal{P}_\text{pred})]$ is (too) close to $\log_2(k)$, for example if $\log_2(k) - u[\text{TU}(\mathcal{P}_\text{pred})]<\epsilon$, for some user-specified $\epsilon > 0$, then 
the total uncertainty is too high to allow for a prediction. 
In such a case, our procedure abstains.
We distinguish whether this high uncertainty is aleatoric or epistemic in nature by checking 
if at least half of the total uncertainty is aleatoric.\footnote{To be more precise, if the predictive aleatoric uncertainty $\text{AU}(\mathcal{P}_\text{pred})$ accounts for at least half of the upper bound for the predictive TU, $u[\text{TU}(\mathcal{P}_\text{pred})]$.}
If that is the case, the algorithm reports an excess of AU. If the opposite is true, it reports an excess of EU.
If, instead, the total uncertainty is not too high, our procedure returns the $(1-\gamma)$-IHDR for $\mathcal{P}_\text{pred}$, for some $\gamma\in[0,1]$ of interest.

\begin{remark}
    Notice that, given the value of $u[\text{TU}(\mathcal{P}_\text{pred})]$ in \eqref{u-bd-tu}, the difference $\log_2(k) - u[\text{TU}(\mathcal{P}_\text{pred})]$ can only be larger than $\epsilon$, for some $\epsilon > 0$, if the number of encoders and normalized flows does not exceed the number of labels, i.e. only if $S<k$. If $S \geq k$, instead, $u[\text{TU}(\mathcal{P}_\text{pred})]\geq\log_2(k)$, and we can conclude immediately that the total uncertainty is too high. The reason appears to be the following. The more uncertainty-averse the user is, the larger the value of $S$ they will select, which is then translated by CDEC in a high value of TU associated with the credal set $\mathcal{P}_\text{pred}$. We also point out that this sensibility of the TU to the choice of $S$ seems to relate the problem of selecting the ``correct'' value of $S$ to the identifiability of the elements of a finite (ad)mixture model \citep[Theorem 2, Proposition 9]{admixture}. This will be futher studied in future work. As we shall see in section \ref{ien-simpl}, IDEC will provide an ``optimal'' choice $S^\star$.
\end{remark}

Notice that a $(1-\gamma)$-IHDR is intuitively similar to a $(1-\gamma)\times 100\%$ confidence interval. Hence, standard choices for the value of $\gamma$ are $0.1, 0.05, 0.01$. Hyperparameter $\epsilon$, instead, captures how dangerous it is to produce an output in the presence of high (total) uncertainty. In other words, it represents how safety-critical the application at hand is. The larger $\epsilon$ is, the more likely we are to abstain. In our experiments, we provide ablation studies for both $\gamma$ and $\epsilon$. 

\subsection{Features of Credal Deep Evidential Classification}
To summarize, after training, given a new input $\tilde{x}$, our procedure
\begin{itemize}
\item[1.] Is able to disentangle and quantify AU and EU, 
and to react appropriately to large values of both.
\item[2.] When the total uncertainty  is not too large, it returns the smallest set that contains the correct label for $\tilde{x}$ with probability (at least) $1-\gamma$, where $\gamma\in [0,1]$ is chosen by the user. 
\end{itemize}

Furthermore, 
CDEC can be easily adapted to different EDL methods to obtain the posterior and the predictive distributions (e.g. the one developed in \citep{pandey23a}, which overcomes one of the fundamental limitations of EDL methods, i.e. creating zero evidence regions, which prevent the model to learn from training samples falling into/near such regions), and to different measures of TU, AU, and EU, as long as the selected measure for TU is not unbounded.

We conclude with three comments. First, CDEC can be  easily modified  to produce just one class, instead of an IHDR. In Step 6 of Algorithm \ref{algo-1}, if $\log_2(k) - u[\text{TU}(\mathcal{P}_\text{pred})] \geq \epsilon$, we can output the label having the highest lower probability. In formulas, 
$$j^\star\in\argmax_{j\in\{1,\ldots,k\}} \min \pi^\prime_j,$$
where the minimum is taken over the parameters $\pi^\prime$ of the Categorical distributions in $\text{ex}\mathcal{P}_\text{pred} \subseteq \{\text{Cat}(\pi^\prime_s)\}_{s=1}^S$, that is, the set of extreme elements of the predictive FGCS $\mathcal{P}_\text{pred}$.\footnote{Of course, the $\argmax$ may not be a singleton. In that case, even this modified version would produce a set of classes/labels. This means that the least implausible class $j\in\{1,\ldots,k\}$ for new input $\tilde x$ is \textit{likelihood-dependent}, that is, it changes under different values of the $n_j^\text{virt}$'s. If the designer is set on outputting just one class, then, they may select one element uniformly at random from the $\argmax$.} Intuitively, in this modified version, CDEC returns -- when the total uncertainty is not too high -- the least implausible class for the new input $\tilde{x}$, according to the combination of agnostic prior beliefs and available evidence (in the form of likelihood FGCS $\mathcal{P}_\text{lik}$). The price to pay is to forego control on the accuracy level $1-\gamma$ of the class/label region produced by CDEC.

Second, Credal Deep Evidential Classification is heuristically similar to the Imprecise Dirichlet Model (IDM, \citep{bernard_bom,walley_marbles}) and to the Naive Credal Classifier (NCC, \citep{cozman3,zaffalon}, \citep[Chapter 10]{augustin}), but with two main differences: (i) we use Categorical instead of Multinomial likelihoods, and (ii) we account for possible likelihood misspecification via a credal set representation. That is, our method accrues to the study of imprecise sampling models, or ``likelihood robustness'' \citep{shyamalkumar}. Furthermore, while in the IDM a possible conflict between prior and data is not reflected by increased imprecision \citep[Chapter 7.4.3.3]{augustin} -- that is, IDM is not sensitive to the prior-data conflict -- CDEC avoids this issue altogether, by considering a uniform Dirichlet prior, and a set of plausible likelihoods. 

Finally, since CDEC produces class regions with a predefined level of accuracy $1-\gamma$, it is also heuristically similar to Conformal Prediction \citep{conformal_tutorial}. This latter, though, is a distribution-free method and hence it is not directly comparable to CDEC. An in-depth study of the relationship between model-based and model-free methods is out of the scope of the present paper, and will be inspected in future work.


\section{Interval Deep Evidential Classification}\label{ien-simpl}


In this section, we present a procedure -- called Interval Deep Evidential Classification (IDEC) -- that simplifies Algorithm \ref{algo-1}. To do so, we use intervals of measures \citep{coolen,lorraine} 
instead of credal sets.

\subsection{Intervals of Measures}\label{int-meas-section}

\begin{definition}[Interval of Measures]\label{int-meas}
    Let $\mathcal{Y}$ be a set of interest. Let $\ell,u:\mathcal{Y} \rightarrow \mathbb{R}_+$ be two positive functionals on $\mathcal{Y}$. They are called \textit{lower} and \textit{upper measures}, respectively. Suppose that $0\leq \ell(y) \leq u(y) <\infty$, for all $y\in\mathcal{Y}$. Then, the \textit{interval of measures} $\mathcal{I}(\ell,u)$ is defined as 
    $$\mathcal{I}(\ell,u)\coloneqq\left\lbrace{g:\mathcal{Y} \rightarrow \mathbb{R}_+ \text{ s.t. } g(y)=\frac{\mathfrak{h}(y)}{\int_\mathcal{Y} \mathfrak{h}(y) \text{d}y} \text{, with } \ell(y) \leq \mathfrak{h}(y) \leq u(y) \text{, for all } y\in\mathcal{Y}}\right\rbrace.$$
    The functions $\ell$ and $u$ are assumed continuous and such that $0<\int_\mathcal{Y} \ell(y) \text{d}y \leq \int_\mathcal{Y} u(y) \text{d}y < \infty$.
\end{definition}
 Let us relate Definition \ref{int-meas} to our classification setting. Recall that $\mathcal{Y}=\{1,\ldots,k\}$ denotes the set of labels. Throughout this section, we refer to the elements of $\mathcal{Y}$ as $y$ or $j$, interchangeably. Let $\ell$ be the pmf of predictive Categorical distribution $\text{Cat}(\pi^\prime)$. 
We also let $u(y)=(1+d)\ell(y)$, for all $y\in\mathcal{Y}$ and some $d \geq 0$. Then, $\mathcal{I}(\ell,u)$ becomes the class of distributions on $\mathcal{Y}=\{1,\ldots,k\}$ whose pmf evaluated at $y \in \mathcal{Y}$ is given by $g(y)=\mathfrak{h}(y)/\sum_{y\in\mathcal{Y}} \mathfrak{h}(y)$, and $\mathfrak{h}(y)\in [\ell(y),(1+d)\ell(y)]$. By construction, the elements of $\mathcal{I}(\ell,(1+d)\ell)$ are Categorical distributions.

\begin{example}\label{ex-int-meas}
    Suppose that $\mathcal{Y}=\{1,2,3,4\}$; let $\ell$ be the pmf of a Categorical with parameter $\pi^\prime=(0.7,0.2,0.08,0.02)^\top$, and pick $d=1.58$. Then, 
\begin{align}\label{upper-meas-example}
    (1+d)\ell(y)=\begin{cases}
    1.806, &y=1\\
    0.516, & y=2\\
    0.2064, &y=3\\
    0.0516, &y=4
\end{cases}.
\end{align}
This means that, for all $\mathfrak{h}\in\mathcal{I}(\ell,(1+d)\ell)$, we have 
$$\mathfrak{h}(y)\in\begin{cases}
    [0.7,1.806], &y=1\\
    [0.2,0.516], & y=2\\
    [0.08,0.2064], &y=3\\
    [0.02,0.0516], &y=4
\end{cases}.$$
For example, we may have 
$$\mathfrak{h}(y)=\begin{cases}
    1.2, &y=1\\
    0.3, & y=2\\
    0.1, &y=3\\
    0.04, &y=4
\end{cases},$$
and obtain a Categorical distribution with pmf
$$p(y)=\begin{cases}
    \frac{1.2}{1.2+0.3+0.1+0.04} \approx 0.7317,  &y=1\\
    \frac{0.3}{1.2+0.3+0.1+0.04} \approx 0.1829, & y=2\\
    \frac{0.1}{1.2+0.3+0.1+0.04} \approx 0.061, &y=3\\
    \frac{0.04}{1.2+0.3+0.1+0.04} \approx 0.0244, &y=4
\end{cases}.$$
\end{example}

IDEC simplifies CDEC as follows.
Instead of initializing $S$ many different encoders and normalizing flows like we did in section \ref{ien}, we only need one of each,
so $S=1$, and we can drop the $s$ index. 

Once obtained the parameters pair $(\theta,\phi)$ using the loss in \citet[Equation 7]{charpentier}, given a new input $\tilde x$ we are able to derive the predictive parameter $\pi^\prime$ as explained in section \ref{post_nf}. 
We let $\ell$ be the pmf of  $\text{Cat}(\pi^\prime)$. 

After that, we assume that the pmf $\hat{p}(y) \equiv p(\tilde{y} \mid \tilde{x}, D)$ of the ``true'' predictive distribution $\hat{P}$ is such that $\hat{p} \in \mathcal{I}(\ell,(1+d)\ell)$, for some $d \geq 0$.\footnote{This assumption allows us to frame our model in the objective Bayes approach to probability \citep{berger-obj,berger,berger2}. That is, positing that a ``true'' prior and a ``true'' likelihood exist, then we are able to derive the ``true'' posterior and, in turn, the ``true'' posterior predictive distribution $\hat{P}$.} In a sense, $d$ accomplishes the same task as $S$ in section \ref{ien}. It captures the ambiguity faced by the designer: the larger $d$, the higher the ambiguity. We further explore this connection in Section \ref{pred-int}.
We now show that there exists a relationship between credal sets and intervals of measures.
\begin{proposition}[Linking Intervals of Measures with Credal Sets]\label{prop-cred-int}
    Given a generic interval $\mathcal{I}(\ell,u)$ of measures on $\mathcal{Y}$, we can always find a credal set $\mathcal{P}$ on $\mathcal{Y}$ that corresponds to $\mathcal{I}(\ell,u)$. Its lower and upper probabilities are given by
    $$\underline{P}(A)=\left( 1+\frac{\int_{A^c}u(y) \text{d}y}{\int_{A}\ell(y) \text{d}y} \right)^{-1} \quad \text{and} \quad \overline{P}(A)=\left( 1+\frac{\int_{A^c}\ell(y) \text{d}y}{\int_{A}u(y) \text{d}y} \right)^{-1}, \quad \text{ for all } A\in \mathcal{A}_\mathcal{Y},$$
    respectively, where $\mathcal{A}_\mathcal{Y}$ is the sigma-algebra endowed to $\mathcal{Y}$.
\end{proposition}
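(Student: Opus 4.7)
The plan is to identify the credal set $\mathcal{P}$ with (the closed convex hull of) the set of probability measures induced by $\mathcal{I}(\ell,u)$, and then to compute its lower and upper envelopes by a pointwise optimisation argument over $\mathfrak{h}$.

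First I would set $\mathcal{P}$ to be the closed convex hull of $\{P_g : g\in\mathcal{I}(\ell,u)\}$, where $P_g(A) = \int_A g(y)\,\mathrm{d}y$ for all $A\in\mathcal{A}_\mathcal{Y}$. This is closed and convex by construction, so it qualifies as a credal set in the sense of Definition \ref{credal-set}. Because the map $P\mapsto P(A)$ is linear, its infimum and supremum over the convex hull agree with those over the generating family, and the problem reduces to computing
\[
\underline{P}(A) = \inf_{\ell\leq\mathfrak{h}\leq u} \frac{\int_A \mathfrak{h}(y)\,\mathrm{d}y}{\int_\mathcal{Y}\mathfrak{h}(y)\,\mathrm{d}y}, \qquad \overline{P}(A) = \sup_{\ell\leq\mathfrak{h}\leq u} \frac{\int_A \mathfrak{h}(y)\,\mathrm{d}y}{\int_\mathcal{Y}\mathfrak{h}(y)\,\mathrm{d}y}.
\]

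Next, after splitting $\int_\mathcal{Y}\mathfrak{h} = \int_A\mathfrak{h} + \int_{A^c}\mathfrak{h}$ and noting that the ratio is strictly increasing in $\int_A\mathfrak{h}$ and strictly decreasing in $\int_{A^c}\mathfrak{h}$, I would exploit the fact that the constraint $\mathfrak{h}(y)\in[\ell(y),u(y)]$ is pointwise and therefore decouples across the disjoint sets $A$ and $A^c$. The infimum of $P_g(A)$ is thus attained at $\mathfrak{h}^-(y) = \ell(y)\mathbf{1}_A(y) + u(y)\mathbf{1}_{A^c}(y)$, and the supremum at $\mathfrak{h}^+(y) = u(y)\mathbf{1}_A(y) + \ell(y)\mathbf{1}_{A^c}(y)$. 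Substituting these choices and dividing numerator and denominator by $\int_A\ell$ (respectively $\int_A u$) yields exactly the formulas in the statement.

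The main obstacle I anticipate is justifying admissibility of the pointwise extremisers $\mathfrak{h}^{\pm}$: they are generally discontinuous, but Definition \ref{int-meas} imposes continuity only on $\ell$ and $u$, not on the generating functional $\mathfrak{h}$, so any measurable selector in the band $[\ell,u]$ is allowed (and in our classification setting $\mathcal{Y}=\{1,\ldots,k\}$ is finite, which sidesteps continuity altogether). A secondary subtlety is the degenerate case $\int_A\ell(y)\,\mathrm{d}y = 0$ or $\int_A u(y)\,\mathrm{d}y = 0$, in which the ratios inside the formulas are to be read as $+\infty$ and the corresponding envelope as $0$; the standing assumption $0<\int_\mathcal{Y}\ell\leq\int_\mathcal{Y}u<\infty$ in Definition \ref{int-meas} rules out global degeneracy and guarantees that $\mathfrak{h}^{\pm}$ induce bona fide probability measures.
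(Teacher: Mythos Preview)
Your argument is correct: the pointwise decoupling of the constraint $\ell\leq\mathfrak h\leq u$ across $A$ and $A^c$, together with the monotonicity of $a/(a+b)$ in $a$ and $-b$, pins down the extremisers $\mathfrak h^\pm$ and yields the stated envelopes. The paper does not actually prove this proposition; immediately after the statement it records that the result ``was first proven in \citep[Section 4.6.4]{walley}'' and moves on, so there is no in-house derivation to compare against. Your write-up therefore supplies the direct computation that the paper defers to Walley, and it is essentially the same calculation one finds there. Your handling of the boundary issues (measurability of $\mathfrak h^\pm$ when $\mathcal Y$ is finite, and the $\int_A\ell=0$ degenerate case) is appropriate for the classification setting the paper cares about.
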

Proposition \ref{prop-cred-int} was first proven in \citep[Section 4.6.4]{walley}; a consequence of Proposition \ref{prop-cred-int} is that we can test our assumption that $\hat{p} \in \mathcal{I}(\ell,(1+d)\ell)$ by checking whether $\hat{P}$ belongs to the credal set associated to $\mathcal{I}(\ell,(1+d)\ell)$ using the techniques in \citet{pmlr-v258-chau25a}. 

In our classification setting, Proposition \ref{prop-cred-int} tells us that, given an interval $\mathcal{I}(\ell,(1+d)\ell)$ of measures on $\mathcal{Y}=\{1,\ldots,k\}$ for some $d\geq 0$, the following holds
\begin{align}\label{eq-deriv-1}
    \underline{P}(\{j\})=\left( 1+\frac{(1+d)\sum_{y\in\mathcal{Y}\setminus\{j\}}\ell(y)}{\ell(j)} \right)^{-1} \quad \text{and} \quad \overline{P}(\{j\})=\left( 1+\frac{\sum_{y\in\mathcal{Y}\setminus\{j\}}\ell(y)}{(1+d)\ell(j)} \right)^{-1},
\end{align}
for all $j\in\{1,\ldots,k\}$. Also, 
\begin{align}\label{eq-deriv-2}
    \underline{P}(A)=\left( 1+\frac{(1+d)\sum_{y\in A^c}\ell(y)}{\sum_{y\in A}\ell(y)} \right)^{-1} \quad \text{and} \quad \overline{P}(A)=\left( 1+\frac{\sum_{y\in A^c}\ell(y)}{(1+d)\sum_{y\in A}\ell(y)} \right)^{-1},
\end{align}
for all $A\subseteq \mathcal{Y}$. 
Let us give a simple example. Building on Example \ref{ex-int-meas}, let $\mathcal{I}(\ell,(1+d)\ell)$ be an interval of measures where $\ell$ is the pmf of $\text{Cat}(\pi^\prime=(0.7,0.2,0.08,0.02)^\top)$ and $d=1.58$. Then, equation \eqref{eq-deriv-1} -- together with the definition of $\pi^\prime$ and \eqref{upper-meas-example} -- tells us that a credal set $\mathcal{P}$ corresponding to $\mathcal{I}(\ell,(1+d)\ell)$ is one whose lower and upper probabilities for the elements of $\mathcal{Y}$ are given by
\begin{align*}
    \underline{P}(\{y\})=\begin{cases}
        ( 1+\frac{0.516+0.2064+0.0516}{0.7})^{-1} \approx 0.4749,  &y=1,\\
    ( 1+\frac{1.806+0.2064+0.0516}{0.2})^{-1} \approx 0.0883, & y=2,\\
    ( 1+\frac{1.806+0.516+0.0516}{0.08})^{-1} \approx 0.0326, &y=3,\\
    ( 1+\frac{1.806+0.516+0.2064}{0.02})^{-1} \approx 0.0078, &y=4
    \end{cases}
\end{align*}
and
\begin{align*}
    \overline{P}(\{y\})=\begin{cases}
        ( 1+\frac{0.2+0.08+0.02}{1.806})^{-1} \approx 0.8575,  &y=1,\\
    ( 1+\frac{0.7+0.08+0.02}{0.516})^{-1} \approx 0.3921, & y=2,\\
    ( 1+\frac{0.7+0.2+0.02}{0.2064})^{-1} \approx 0.1832, &y=3,\\
    ( 1+\frac{0.7+0.2+0.08}{0.0516})^{-1} \approx 0.05, &y=4
    \end{cases},
\end{align*}
respectively. We have the following important corollary.

\begin{corollary}[Equivalence between Intervals of Measures and FGCSs]\label{cor-equiv}
    Let $\mathcal{Y}$ be a finite set. Then, given a generic interval $\mathcal{I}(\ell,(1+d)\ell)$ of measures on $\mathcal{Y}$ for some $d\geq 0$, we can always find a finitely generated credal set $\mathcal{P}$ on $\mathcal{Y}$ that corresponds to $\mathcal{I}(\ell,(1+d)\ell)$. Its lower and upper probabilities are obtained as in \eqref{eq-deriv-2}, for all $A\subseteq \mathcal{Y}$.
\end{corollary}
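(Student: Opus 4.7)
My plan is to view the corollary as a direct specialisation of Proposition \ref{prop-cred-int} followed by a short polytope argument that exploits the finiteness of $\mathcal{Y}$. First, setting $u = (1+d)\ell$ in Proposition \ref{prop-cred-int} immediately produces a credal set $\mathcal{P}$ corresponding to $\mathcal{I}(\ell,(1+d)\ell)$; substituting this choice of $u$ into the two envelope formulas of Proposition \ref{prop-cred-int}, and replacing integrals by sums since $\mathcal{Y}$ is finite, reproduces exactly the expressions in \eqref{eq-deriv-2}. The only remaining content of the corollary is therefore to show that $|\text{ex}\,\mathcal{P}| < \infty$.

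For this I would write $\mathcal{P}$ explicitly using the parametrisation inherited from Definition \ref{int-meas}. Every $g \in \mathcal{P}$ has the form $g(y) = \mathfrak{h}(y)/\sum_{z \in \mathcal{Y}} \mathfrak{h}(z)$ with $\ell(y) \leq \mathfrak{h}(y) \leq (1+d)\ell(y)$. Setting $t = \sum_z \mathfrak{h}(z) > 0$, the box constraint becomes $\ell(y) \leq t\,g(y) \leq (1+d)\ell(y)$ for every $y \in \mathcal{Y}$. Eliminating the scalar $t$ by pairwise comparison turns this into the linear system
\[
\ell(y_1)\,g(y_2) \,\leq\, (1+d)\,\ell(y_2)\,g(y_1), \qquad y_1,y_2 \in \mathcal{Y},
\]
together with the simplex constraints $g(y) \geq 0$ and $\sum_{y} g(y) = 1$. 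Hence $\mathcal{P}$ is the intersection of the unit simplex $\Delta^{k-1} \subset \mathbb{R}^k$ with finitely many closed half-spaces, i.e.\ a bounded polyhedron, and therefore a polytope. By the Minkowski--Weyl theorem a bounded polytope in $\mathbb{R}^k$ has finitely many vertices, so $|\text{ex}\,\mathcal{P}| < \infty$ and $\mathcal{P}$ is an FGCS in the sense of Definition \ref{credal-set}.

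The only subtle step is the elimination of $t$: one must verify that the displayed pairwise linear inequalities are not merely necessary but also \emph{sufficient} for the existence of some $t > 0$ realising $\ell(y) \leq t\,g(y) \leq (1+d)\ell(y)$ for all $y$. This reduces to checking that the interval $\bigl[\max_{y}\ell(y)/g(y),\,\min_{y}(1+d)\ell(y)/g(y)\bigr]$ is non-empty, which is exactly the pairwise system, once one notes that $g(y) > 0$ for every $g \in \mathcal{P}$ (since $\mathfrak{h}(y) \geq \ell(y) > 0$ as $\ell$ is a strictly positive Categorical pmf). Everything else in the argument is routine.
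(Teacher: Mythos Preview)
Your proof is correct, and it takes a genuinely different route from the paper's. The paper proceeds constructively: it fixes $K = 2|\mathcal{Y}|$ probability vectors $Q_1,\ldots,Q_K$ chosen so that each singleton lower bound $\underline{P}(\{y_j\})$ and upper bound $\overline{P}(\{y_j\})$ from \eqref{eq-deriv-1} is attained by some $Q_r$, and then declares $\mathcal{P}=\mathrm{Conv}(\{Q_r\}_{r=1}^K)$. Finite generation is then immediate by fiat, and correspondence with $\mathcal{I}(\ell,(1+d)\ell)$ is claimed by appeal to Proposition~\ref{prop-cred-int}.

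Your argument instead shows directly that $\mathcal{I}(\ell,(1+d)\ell)$ \emph{itself} is a polytope: you eliminate the scaling variable $t$ to obtain the homogeneous pairwise system $\ell(y_1)g(y_2)\le (1+d)\ell(y_2)g(y_1)$, recognise this as finitely many linear half-spaces intersected with the simplex, and invoke Minkowski--Weyl. This is more intrinsic (it identifies the interval of measures as an FGCS rather than exhibiting some FGCS with matching singleton envelopes) and it sidesteps any question of whether the convex hull of the paper's $2|\mathcal{Y}|$ points really reproduces the formulas in \eqref{eq-deriv-2} on \emph{all} subsets $A$, not just singletons. The paper's construction, on the other hand, has the advantage of giving an explicit (if possibly redundant) set of generators, and an explicit bound $|\mathrm{ex}\,\mathcal{P}|\le 2|\mathcal{Y}|$ on their number, which your polyhedral argument does not immediately deliver.
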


As pointed out in Section \ref{int-meas-section}, in IDEC we let $\ell$ be the pmf of predictive distribution $\text{Cat}(\pi^\prime)$, and $u=(1+d)\ell$, for some $d\geq 0$. As a result of Corollary \ref{cor-equiv}, there always exists a choice of $S$ initial random seeds for CDEC that produces a predictive finitely generated credal set $\mathcal{P}_\text{pred}$ that corresponds to $\mathcal{I}(\ell,(1+d)\ell)$.
Because of this, by implementing IDEC instead of CDEC, the user to foregoes the task of specifying $S$,
an operation that can be challenging and be perceived as needing substantial expert knowledge. In addition, IDEC is faster to implement because it only requires to derive one Categorical predictive distribution. Furthermore,
as we shall see in section \ref{pred-int}, we are also able to find an optimal choice $d^\star$ for the value of $d\geq 0$.


\subsection{Aleatoric and Epistemic Uncertainties in the Interval Setting}

We now need to specify how do we quantify total, aleatoric, and epistemic uncertainties in $\mathcal{I}(\ell,(1+d)\ell)$, and how to compute the IHDR.

Let $\tilde{Y} \sim \text{Cat}(\pi^\prime)$.\footnote{As we saw in section \ref{post_nf}, we should write $\tilde{Y} \mid \tilde{x},D \sim \text{Cat}(\pi^\prime)$. In this section, we omit the conditioning part for notational clarity.} Following \citep{dubois,eyke}, we posit that the TU can be decomposed as the sum of AU and EU, and we let
\begin{equation}\label{decomp-simpl}
    \underbrace{\mathbb{V}[(1+d)\tilde{Y}]}_{\text{TU}[\mathcal{I}(\ell,(1+d)\ell)]}=\underbrace{\mathbb{V}[\tilde{Y}]}_{\text{AU}[\mathcal{I}(\ell,(1+d)\ell)]} +  \underbrace{\big(\mathbb{V}[(1+d)\tilde{Y}]-\mathbb{V}[\tilde{Y}]\big)}_{\text{EU}[\mathcal{I}(\ell,(1+d)\ell_i)]}.
\end{equation}
Here $\mathbb{V}[\cdot]$ denotes the variance operator. Notice also that $\mathbb{V}[(1+d)\tilde{Y}]=(1+d)^2\mathbb{V}[\tilde{Y}]$ and that $\mathbb{V}[(1+d)\tilde{Y}]-\mathbb{V}[\tilde{Y}]=(d^2+2d)\mathbb{V}[\tilde{Y}]$. By direct calculation, we have that
\begin{align}\label{var-cat}
    \mathbb{V}[\tilde{Y}]=\sum_{j=1}^k j^2 \pi^\prime_{j} - \left( \sum_{j=1}^k j \pi^\prime_{j} \right)^2.
\end{align}
Although we build on the decomposition in section \ref{unc's}, we use the variance instead of the entropy because $(1+d)\ell(\cdot)$ is a signed (positive) measure, but not a probability measure, and so its entropy is not well-suited to being an uncertainty measure. To see this, consider again the case of Example \ref{ex-int-meas}. There, we have that $H(\ell)=-\sum_{y=1}^4 \ell(y)\log_2(\ell(y))\approx 1.23$, while $H[(1+d)\ell]=-\sum_{y=1}^4 [(1+d)\ell(y)]\log_2[(1+d)\ell(y)]\approx -0.36$. As we can see, although $(1+d)\ell(y) > \ell(y)$, for all $y\in\mathcal{Y}$, we have that $H(\ell)>H[(1+d)\ell]$.\footnote{In \citep[Theorem 13]{ibnn}, the authors introduce a way of computing the entropy $H(\underline{P})$ and $H(\overline{P})$ of the lower and of the upper probabilities $\underline{P}$ and $\overline{P}$ of a credal set, respectively. Then, they show that $H(\underline{P})$ is a lower bound for $\underline{H}(P)$ and that $H(\overline{P})$ is an upper bound for $\overline{H}(P)$. We defer to future work the use of $H(\underline{P})$ and $H(\overline{P})$ for $\underline{P}$ and $\overline{P}$ derived as in \eqref{eq-deriv-1} and \eqref{eq-deriv-2} to quantify TU, AU, and EU within interval of measures $\mathcal{I}(\ell,(1+d)\ell)$.}
We defer to future work the study of the properties à la \citep{abellan3,eyke,jiro,walley} of the terms in \eqref{decomp-simpl}.

\subsection{Prediction in the Interval Setting}\label{pred-int}

We are left to compute the IHDR associated with $\mathcal{I}(\ell,(1+d)\ell)$.
In \citep[Theorem 1 and Corollary 1]{coolen}, the author shows that the $(1-\gamma)$-IHDR $IR_\gamma[\mathcal{I}(\ell,(1+d)\ell)]$ of the interval $\mathcal{I}(\ell,(1+d)\ell)$ of measures on $\mathcal{Y}$ is equal to the $[1-\xi(d)]$-highest density region $\mathcal{R}_{\xi(d)}$ of $\text{Cat}(\pi^\prime)$ \citep{hoff}, where 
\[
\xi(d)=\gamma/[1+(1-\gamma)d].
\]
Similarly to Definition \ref{ihdr-def}, we have that $\mathcal{R}_{\xi(d)} \subseteq \mathcal{Y}$ is the collection of labels such that $\text{Pr}_{\tilde{Y} \sim \text{Cat}(\pi^\prime)}[\tilde{Y}\in \mathcal{R}_{\xi(d)}] \geq 1-\xi(d)$, and $|\mathcal{R}_{\xi(d)}|$ is a minimum.

This means that $IR_\gamma[\mathcal{I}(\ell,(1+d)\ell)]$ is given by the smallest subset of $\mathcal{Y}=\{1,\ldots,k\}$ (the set of classes) such that 
\begin{equation}\label{eq-obj}
    \sum_{j\in IR_\gamma[\mathcal{I}(\ell,(1+d)\ell)]} \pi^\prime_{j}\geq 1-\frac{\gamma}{1+(1-\gamma)d}.
\end{equation}
This is immediately computable once we obtain $\pi^\prime$. Let us pause here to add a discussion. First, notice that $\xi(d)\leq \gamma$, for all $d\geq 0$, and $\xi(d)= \gamma$ if and only if $d=0$. This is an expected phenomenon: 
if the distribution was precise, our region $IR_\gamma$ would actually guarantee a probabilistic coverage of $1-\xi(d) \geq 1-\gamma$. Second, there is a {trade-off} pertaining the choice of $d$. Larger values will guarantee a higher coverage to $IR_\gamma$, but will also make the cardinality of $IR_\gamma$ larger. 

We now provide the optimal choice $d^\star$ for the value of $d$, that is, the one that minimizes the difference $(1-\xi)-(1-\gamma)$ that represents the \textit{coverage discrepancy} between the precise distribution $\text{Cat}(\pi^\prime)$ and the interval of measures $\mathcal{I}(\ell,(1+d)\ell)$.
To find it, we proceed as follows. Compute the $(1-\gamma)$-highest density region $\mathcal{R}_\gamma$ of $\text{Cat}(\pi^\prime)$. Put
\begin{align}\label{aug_reg}
    \mathcal{R}_\gamma^\star=\begin{cases}
    \mathcal{R}_\gamma &\text{if } \sum_{j \in \mathcal{R}_\gamma} \pi^\prime_{j}>1-\gamma\\
    \mathcal{R}_\gamma \cup \{\argmax_{j \in \{1,\ldots,k\}\setminus \mathcal{R}_\gamma} \pi^\prime_{j}\} &\text{if } \sum_{j \in \mathcal{R}_\gamma} \pi^\prime_{j}=1-\gamma
\end{cases}.
\end{align}
Equation \eqref{aug_reg} means the following. In virtually all applications, the sum of the probabilities $\pi_{j}^\prime$ of the labels $j \in \mathcal{R}_\gamma$ will go above $1-\gamma$. In that case, we let $\mathcal{R}_\gamma^\star=\mathcal{R}_\gamma$. If instead the sum is exactly $1-\gamma$, then region $\mathcal{R}_\gamma^\star$ includes all the labels in $\mathcal{R}_\gamma$ plus the next most probable one. Next, compute the \textit{actual coverage} $\sum_{j \in \mathcal{R}^\star_\gamma} \pi^\prime_{j}$ of $\mathcal{R}_\gamma^\star$, set it equal to $(1-\xi)$, and obtain the value of $\xi$. Of course, we will have $(1-\xi)>(1-\gamma)$. Finally, solve
\begin{equation}\label{eq_d}
    \xi=\frac{\gamma}{1+(1-\gamma)d^\star}.
\end{equation}
Notice that now $\xi$ and $\gamma$ are fixed values, so this becomes an equation with  only one unknown that can be solved analytically. Notice also that $d^\star$ is a function of $\gamma$, 
$d^\star \equiv d^\star(\gamma)$.

Let us give a simple example, connected to Example \ref{ex-int-meas}. Suppose that in the problem at hand we have $k=4$ labels, so $\mathcal{Y}=\{1,2,3,4\}$, and that $\pi^\prime=(0.7, 0.2, 0.08, 0.02)^\top$. Then, suppose we are interested in a $95\%$ coverage for our IHDR $IR_\gamma$, so we let $\gamma=0.05$. We have $\mathcal{R}^\star_{0.05}=\mathcal{R}_{0.05}=\{1,2,3\}$. Now, the actual coverage of $\mathcal{R}^\star_{0.05}$ is given by $0.7+ 0.2+0.08=0.98$. Hence, we let $1-\xi=0.98$, that is, $\xi=0.02$. Finally, we solve $0.02=0.05/[1+0.95d^\star]$ to obtain $d^\star \approx 1.58$.

Notice how, in light of Corollary \ref{cor-equiv}, finding $d^\star$ is equivalent to a specific choice of the amount $S^\star$ of random seeds for as many encoders and normalizing flows -- together with their values -- in CDEC.
In addition, by \eqref{aug_reg} and \eqref{eq_d}, $d^\star$ is the value that minimizes the coverage discrepancy between the precise and the imprecise models, for a given value of $\gamma\in [0,1]$. In turn, this implies that IDEC can be seen as a case of CDEC where the the choices of amount $S=S^\star$ and numerical value of the random seeds used to derive 
predictive credal set $\mathcal{P}_\text{pred}$ are ``optimal''. Here, optimality means that
the choice of the amount $S$ and numerical value of the random seeds used to derive 
predictive credal set $\mathcal{P}_\text{pred}$ is \textit{informed by the} (imprecise probabilistic) $1-\gamma$ \textit{guarantee} that we want to achieve on the predictive label set.

\subsection{Algorithm in the Interval Setting}
We summarize IDEC in Algorithm \ref{algo-2}. Recall that $\mathcal{Y}=\{1,\ldots,k\}$ is the labels set of interest. 

\begin{algorithm}
\caption{Interval Deep Evidential Classification -- Training and Inference}\label{algo-2}
\begin{algorithmic}
\item 
\textit{During Training}
\item
\textbf{Step 1} Obtain the parameters pair $(\theta,\phi)$ using the loss in \citep[Equation 7]{charpentier}
\item 
\textit{During Inference}
\flushleft
\hspace*{\algorithmicindent} \textbf{Input:} New input $\tilde{x}\in\mathcal{X}$\\
\hspace*{\algorithmicindent} \textbf{Parameters:} Uncertainty threshold $\epsilon>0$; Confidence parameter $\gamma\in [0,1]$
\\
\hspace*{\algorithmicindent} \textbf{Outputs:} Abstain for excessive AU; Abstain for excessive EU; $(1-\gamma)$-IHDR $IR_\gamma[\mathcal{I}(\ell,(1+d^\star)\ell)]=\mathcal{R}_{\xi(d^\star)}$
\item
\textbf{Step 2} Derive $\pi^\prime$ as in section \ref{post_nf}. Let $\ell$ be the pmf of $\text{Cat}(\pi^\prime)$
\item 
\textbf{Step 3} Compute $d^\star$ according to \eqref{eq_d}, and assume $\hat{p} \in \mathcal{I}(\ell,(1+d^\star)\ell)$
\item 
\textbf{Step 4} Compute $\mathbb{V}[(1+d^\star)\tilde{Y}]$ and  $\mathbb{V}[(1+d^\star)Y^{\text{max}}]$
\item 
\textbf{Step 5} \If {$\mathbb{V}[(1+d^\star)Y^{\text{max}}] - \mathbb{V}[(1+d^\star)\tilde{Y}]\geq\epsilon$}
        \State \text{Compute and return the $(1-\gamma)$-IHDR $IR_\gamma[\mathcal{I}(\ell,(1+d^\star)\ell)]=\mathcal{R}_{\xi(d^\star)}$}

\ElsIf{$\mathbb{V}[(1+d^\star)Y^{\text{max}}] - \mathbb{V}[(1+d^\star)\tilde{Y}]<\epsilon$ \textbf{and} $(1+d^\star)^{-2} \geq 1/2$}
        \State \text{Abstain from producing an IHDR due to excess AU}
\ElsIf  {$\mathbb{V}[(1+d^\star)Y^{\text{max}}] - \mathbb{V}[(1+d^\star)\tilde{Y}]<\epsilon$ \textbf{and} $(1+d^\star)^{-2} < 1/2$}
       \State \text{Abstain from producing an IHDR due to excess EU}
 \EndIf
\end{algorithmic}
\end{algorithm}

Notice that a random variable distributed according to a Categorical distribution has maximum variance when the latter is parameterized by ${\frac{\mathbf{1}}{\mathbf{k}}}$, that is, a $k$-dimensional vector whose entries are all $1/k$.\footnote{Notice that such a distribution corresponds to a Discrete Uniform.} Hence, 
$$\mathbb{V}[(1+d)\tilde{Y}] \leq  \mathbb{V}[(1+d)Y^{\text{max}}]=(1+d)^2\mathbb{V}[Y^{\text{max}}]=\frac{(1+d)^2(k+1)(k-1)}{12},$$
for all $d\geq 0$, where $Y^{\text{max}} \sim \text{Cat}({\frac{\mathbf{1}}{\mathbf{k}}})$.\footnote{We derived the value of $(1+d)^2\mathbb{V}[Y^{\text{max}}]$ from \eqref{var-cat} by putting $\pi^\prime_j=1/k$, for all $j\in\{1,\ldots,k\}$.} As a consequence, if $\mathbb{V}[(1+d)Y^{\text{max}}] - \mathbb{V}[(1+d)\tilde{Y}] <\epsilon$, for some $\epsilon>0$, then the designer is facing high total uncertainty. In this situation, there are two cases. Either the majority of the total uncertainty is aleatoric, or it is epistemic. We verify whether we are in either case, similarly to section \ref{ien}.\footnote{A subtle difference between the method in this section and the one in section \ref{ien} is discussed in Appendix \ref{app-6}. There, we also give a way of quantifying the degree of conservativeness of $IR_\gamma[\mathcal{I}(\ell,(1+d)\ell)]$.} 

Notice that $\mathbb{V}[\tilde{Y}]/\mathbb{V}[(1+d^\star)\tilde{Y}]$, that expresses the amount of AU relative to that of TU, is equal to $(1+d^\star)^{-2}$. If $(1+d^\star)^{-2} \geq 1/2$, then at least half of the total uncertainty is aleatoric in nature. Vice versa, if $(1+d^\star)^{-2} < 1/2$, then at least half of the total uncertainty is of epistemic type.



\section{Experiments}
\label{sec:experiments}

\subsection{Implementation}

In this section, we present an extensive empirical study of Credal Deep Evidential Classification (CDEC) and Interval Deep Evidential Classification (IDEC). Our experiments examine predictive performance, uncertainty quantification, abstention behavior, and out-of-distribution (OoD) robustness. We evaluate all methods across multiple datasets, architectures, and uncertainty-related tasks.

\textbf{Datasets and preprocessing.}
We evaluate PostNet \citep{charpentier}, PostNet-3 (3 ensembles of PostNet), CDEC-3 (3 ensembles) and IDEC (single model) on standard multi-class image classification benchmarks: CIFAR-10 \citep{cifar10}, MNIST \citep{LeCun2005TheMD} and CIFAR-100 \citep{krizhevsky2009learning}. These datasets allow us to evaluate our methods under both low-dimensional and high-dimensional visual variability.
Each dataset is paired with naturally occurring out-of-distribution (OoD) shifts. For OoD detection, we use the following in-distribution (iD)/OoD pairings: CIFAR-10 vs SVHN \citep{netzer2011reading}/Intel-Image  \citep{intelimage}, MNIST vs F-MNIST \citep{xiao2017fashion}/K-MNIST \citep{clanuwat2018deep}, and CIFAR-100 vs TinyImageNet \citep{le2015tiny}.
All datasets are accessed through \texttt{torchvision} and uniformly processed through a backbone-aware pipeline. For convolutional architectures (`conv'), MNIST images are normalized and kept at native resolution, while CIFAR datasets and OoD sets are resized to $32\times32$.  
We apply only standard dataset normalization (dataset-specific mean/variance) and do not introduce augmentation, to avoid confounding uncertainty analyses.

\textbf{Backbones and training details.}
We implement our method on four neural architectures: a lightweight convolutional network (`conv'), VGG16, ResNet18, and ResNet50. The `conv' architecture consists of three convolutional layers, each using 64 filters, LeakyReLU activations, and a kernel size of 5, with each layer followed by max-pooling with stride 2. All backbones are wrapped inside a Posterior Network, which produces Dirichlet evidence parameters and likelihood-flow induced virtual counts following the method of \citet{charpentier}. For each dataset-backbone pair we train models using ten different random seeds on NVIDIA A100 80GB GPUs, as detailed in Appendix \ref{app:training-details}.

Each seed produces a distinct encoder and normalizing-flow configuration due to stochastic initialization, enabling the construction of credal sets and ensembles.
Training is performed using the evidential loss of \citet{charpentier}. Network hyperparameters are fixed across all backbones: hidden layers of dimension 
[64,64,64], latent dimension 
6, kernel dimension 
5, and 
6 radial-flow components. The networks are optimized using Adam with a fixed learning rate of $10^{-4}$ and a batch size of 128.  

\textbf{Hyperparameter settings.}
Across all experiments we employ a fixed confidence target of 
$1-\gamma$ = 0.95 for the computation of imprecise highest-density regions (IHDRs). IDEC uses this same $\gamma$ to derive the optimal interval inflation parameter $d^*$, ensuring the imprecise prediction region achieves the desired coverage. In CDEC, the same confidence level determines whether the system abstains or issues a set-valued prediction. Training hyperparameters such as learning rates, latent-flow dimensions, and intermediate feature sizes (detailed in section \ref{app:training-details}) are kept constant across all architectures to ensure comparability.

\textbf{Experiments.}
\textbf{Firstly}, we evaluate in-distribution predictive performance across all datasets
(MNIST, CIFAR-10, CIFAR-100) and all backbones (Conv, VGG16, ResNet18, ResNet50).
For each method, PostNet, PostNet-3, CDEC-3 and IDEC, we report accuracy, Brier score,
expected calibration error (ECE), and predictive confidence
(Table \ref{tab:all_metrics}, and full results in
Tables \ref{tab:all_metrics_mnist}--\ref{tab:all_metrics_cifar100}).
CDEC-3 uses the lower-envelope predictive distribution derived from extreme points of the
credal set, while IDEC uses a single posterior together with its optimal interval
inflation parameter $d^\star$, ensuring the target IHDR coverage level.

\textbf{Secondly}, we analyze uncertainty decomposition into aleatoric (AU), epistemic (EU),
and total uncertainty (TU) following the definitions in Equations~\eqref{decomp} and
\eqref{decomp-simpl}.  
For CDEC-3, AU corresponds to the minimum entropy across extreme points of the credal set,
and EU reflects entropy variation across these extremes.
For IDEC, AU/EU/TU follow the interval-variance decomposition induced by $d^\star$.
We report the mean AU/EU/TU for all models in Table \ref{tab:all_metrics} and
provide full uncertainty-density plots in
Figures \ref{fig:mnist-uncertainty}, \ref{fig:cifar10-uncertainty}, and \ref{fig:cifar100-uncertainty}.
These KDEs show clear iD-OoD separation, especially for EU and TU.

\textbf{Thirdly}, we assess OoD robustness using AUROC and AUPRC for AU, EU, TU, and confidence
(Algorithm \ref{app:algo-ood}).
We evaluate OoD shifts MNIST$\rightarrow$\{F-MNIST,K-MNIST\}, CIFAR-10$\rightarrow$\{SVHN,Intel\},
and CIFAR-100$\rightarrow$TinyImageNet.
Representative results are reported in Table \ref{tab:all_metrics} and the full results across all
backbones appear in Table \ref{tab:ood_full}.
As validated in Appendix \ref{app:ood},
Figure \ref{fig:ood_kde} show that CDEC-3 produces the
sharpest distinction between iD and OoD uncertainty, with IDEC showing a similar but weaker trend.

\textbf{Fourthly}, we analyze the structure of the learned credal sets through the
Imprecise Highest Density Regions (IHDRs).
Table \ref{tab:ihdr_results} reports IHDR size and coverage across datasets and backbones,
while Table \ref{tab:id_vs_ood_ihdr} quantifies how IHDRs expand under distribution shift.
Visualizations in Figure \ref{fig:id_vs_ood_ihdr} and qualitative examples in
Figure \ref{fig:ihdr_examples} reveal that CDEC-3 produces compact IHDRs on iD samples and
systematically expands on OoD inputs, whereas IDEC forms larger and more variable IHDRs.
Additional IHDR analysis is provided in Appendix \ref{app:ihdr}.

\textbf{Finally}, we conduct an ablation study on the effect of ensemble size for CDEC
(section \ref{sec:cdec_ablation}).
Using ensembles of sizes $S\in\{1,3,5,7,10\}$, we compare IHDR size, coverage,
and mean AU/EU/TU (Table \ref{tab:cdec_ablation}) and visualize distributions in
Figure \ref{fig:cdec_ablation}.
The results show that CDEC stabilizes for $S\approx3$, with larger ensembles increasing
epistemic spread in a predictable, dataset-dependent manner.

Together, these evaluations provide a comprehensive empirical characterization of CDEC-3 and IDEC, demonstrating that both approaches achieve principled uncertainty quantification, robust abstention, and strong performance under distribution shift, while maintaining competitive predictive accuracy on the base classification task.

\subsection{Predictive performance}
\label{sec:predictive-performance}

In this section, we evaluate the in-distribution predictive behavior of PostNet, PostNet-3, CDEC-3 and IDEC across all datasets and backbones (see section \ref{app:predictive-performance}). Predictive accuracy is computed in the standard way as the proportion of test samples for which the predicted class (the maximizer of the predictive mean of the Categorical distribution) coincides with the ground truth. Brier score is evaluated as the mean squared deviation between the predictive probability vector and the corresponding one-hot encoded target, thereby jointly capturing calibration and sharpness. Calibration quality is further quantified using the expected calibration error (ECE).  
For each method, predicted confidences are partitioned into fixed-width bins, and the absolute difference between the empirical accuracy and the mean confidence within each bin is weighted by the bin frequency and summed across all bins.  
The exact computation procedure is provided in Appendix \ref{app:algo-ece}.

\begin{table}[!htbp]
\centering
\resizebox{\linewidth}{!}{%
\begin{tabular}{@{}cccccc|ccc|ccc|ccc}
\toprule
& \multirow{2}{*}{Dataset}&\multirow{2}{*}{Model}&\multicolumn{3}{c|}{Prediction Performance}& \multicolumn{3}{c|}{Uncertainty Estimation} &&\multicolumn{4}{c}{OoD Detection Performance} \\
\cmidrule{4-14}
& & & Acc (\%) ($\uparrow$) & Brier ($\downarrow$) & ECE ($\downarrow$) & AU mean ($\downarrow$) & EU mean ($\downarrow$) & TU mean ($\downarrow$)  & 
& AUROC ($\uparrow$) & AUPRC ($\uparrow$) & AUROC ($\uparrow$) & AUPRC ($\uparrow$)
\\
\cmidrule{2-14}
&&&&&&&&&& \multicolumn{2}{c|}{\textbf{F-MNIST}} & \multicolumn{2}{c}{\textbf{K-MNIST}} \\
\cmidrule{10-14}
\multirow{12}{*}{\begin{sideways}ResNet50\end{sideways}} &\multirow{4}{*}{MNIST}
    & PostNet 
      & 92.24 & 0.0794 & 0.0102 & 2.3888 & {0.0000} & 2.3888 
      && 90.74 & 80.27 & 90.44 & 79.83 \\
&    & PostNet-3   
      & {99.04} & 0.0388 & 0.0311 & 0.3767 & 3.1372 & 3.5139 
      && 95.12 & 93.47 & 95.59 & 94.58  \\
 &   & \cellcolor[HTML]{EFEFEF}
CDEC-3         
      & \cellcolor[HTML]{EFEFEF}
\textbf{99.42} & \cellcolor[HTML]{EFEFEF}
0.0914 & \cellcolor[HTML]{EFEFEF}
0.0986 & \cellcolor[HTML]{EFEFEF}
\textbf{0.0033} & \cellcolor[HTML]{EFEFEF}
\textbf{1.3575} & \cellcolor[HTML]{EFEFEF}
\textbf{1.3608}
      && \cellcolor[HTML]{EFEFEF}
\textbf{97.15} & \cellcolor[HTML]{EFEFEF}
\textbf{96.87} & \cellcolor[HTML]{EFEFEF}
\textbf{96.97} & \cellcolor[HTML]{EFEFEF}
\textbf{96.86} \\
 &   & \cellcolor[HTML]{EFEFEF}
IDEC            
      & \cellcolor[HTML]{EFEFEF}
99.31 & \cellcolor[HTML]{EFEFEF}
\textbf{0.0216} & \cellcolor[HTML]{EFEFEF}
\textbf{0.0054} & \cellcolor[HTML]{EFEFEF}
{0.0162} & \cellcolor[HTML]{EFEFEF}
4.2985e15 &\cellcolor[HTML]{EFEFEF}
 4.2985e15
      && \cellcolor[HTML]{EFEFEF}
96.61 & \cellcolor[HTML]{EFEFEF}
92.63 & \cellcolor[HTML]{EFEFEF}
93.85 & \cellcolor[HTML]{EFEFEF}
90.40 \\
\cmidrule{2-14}
&&&&&&&&&& \multicolumn{2}{c|}{\textbf{SVHN}} & \multicolumn{2}{c}{\textbf{Intel}} \\
\cmidrule{10-14}
& \multirow{4}{*}{CIFAR-10}
   & PostNet 
     & 81.52 & 0.2868 & 0.0906 & 4.0245 & {0.0000} & 4.0245
     && 74.96& 84.52 & 68.74 & 36.57 \\
&    & PostNet-3   
     & {84.12} & 0.1988 & \textbf{0.0369} & 3.2563 & 2.36 & 5.62
     && 75.17 &83.66 & 73.60 & 38.69 \\
&    & \cellcolor[HTML]{EFEFEF}
CDEC-3         
     & \cellcolor[HTML]{EFEFEF}
\textbf{86.44} & \cellcolor[HTML]{EFEFEF}
\textbf{0.1180} & \cellcolor[HTML]{EFEFEF}
0.1865 & \cellcolor[HTML]{EFEFEF}
\textbf{0.0776} & \cellcolor[HTML]{EFEFEF}
\textbf{1.6244} & \cellcolor[HTML]{EFEFEF}
\textbf{1.7019}
     && \cellcolor[HTML]{EFEFEF}
\textbf{78.04}  & \cellcolor[HTML]{EFEFEF}
\textbf{86.94} & \cellcolor[HTML]{EFEFEF}
\textbf{74.98} & \cellcolor[HTML]{EFEFEF}
\textbf{43.30} \\
 &   & \cellcolor[HTML]{EFEFEF}
IDEC            
     & \cellcolor[HTML]{EFEFEF}
82.88 & \cellcolor[HTML]{EFEFEF}
{0.2656} & \cellcolor[HTML]{EFEFEF}
0.0805 & \cellcolor[HTML]{EFEFEF}
0.1291 & \cellcolor[HTML]{EFEFEF}
4.3278e16 & \cellcolor[HTML]{EFEFEF}
4.3278e16
     && \cellcolor[HTML]{EFEFEF}
75.04  & \cellcolor[HTML]{EFEFEF}
84.41 & \cellcolor[HTML]{EFEFEF}
70.17 & \cellcolor[HTML]{EFEFEF}
36.52 \\
\cmidrule{2-14}
&&&&&&&&& & \multicolumn{4}{c}{\textbf{TinyImageNet}} \\
\cmidrule{10-14}
 & & &
  \multicolumn{1}{l}{} &
  \multicolumn{1}{l}{} &
  \multicolumn{1}{c|}{} &  & &  &&
  \multicolumn{2}{c}{AUROC($\uparrow$)} &
  \multicolumn{2}{c}{AUPRC($\uparrow$)}  
  \\
& \multirow{4}{*}{CIFAR-100}
   & PostNet 
     & 53.93 & 0.7281 & 0.2868 & 0.9391 & {0.0000} & \textbf{0.9391}
     && \multicolumn{2}{c}{65.84 } & \multicolumn{2}{c}{62.18} \\
&    & PostNet-3   
     & {59.91} & 0.5182 & \textbf{0.0572} & 0.6989 & {1.7059} & 2.4048
     && \multicolumn{2}{c}{67.01} & \multicolumn{2}{c}{60.05} \\
&    & \cellcolor[HTML]{EFEFEF}
CDEC-3         
     & \cellcolor[HTML]{EFEFEF}
\textbf{62.84} 
& \cellcolor[HTML]{EFEFEF}
\textbf{0.4683} 
& \cellcolor[HTML]{EFEFEF}
0.2618 
& \cellcolor[HTML]{EFEFEF}
\textbf{0.1809} 
& \cellcolor[HTML]{EFEFEF}
\textbf{1.4482} 
& \cellcolor[HTML]{EFEFEF}
1.6291
     && \multicolumn{2}{c}{\cellcolor[HTML]{EFEFEF}
\textbf{72.95} }
     & \multicolumn{2}{c}{\cellcolor[HTML]{EFEFEF}
\textbf{68.54}} \\
 &   & \cellcolor[HTML]{EFEFEF}
IDEC        
 &  \cellcolor[HTML]{EFEFEF}
58.29 &  \cellcolor[HTML]{EFEFEF}
{0.6978} & 
 \cellcolor[HTML]{EFEFEF}
0.2595 & 
 \cellcolor[HTML]{EFEFEF}
0.2693 & 
 \cellcolor[HTML]{EFEFEF}
11203.7291 & 
 \cellcolor[HTML]{EFEFEF}
11203.9984
 && 
\multicolumn{2}{c}{\cellcolor[HTML]{EFEFEF}
65.75} &
\multicolumn{2}{c}{\cellcolor[HTML]{EFEFEF}
60.45} \\
\bottomrule
\end{tabular}}
\caption{Full evaluation of predictive performance, uncertainty decomposition, and OoD detection for all models on MNIST, CIFAR-10, and CIFAR-100 using a ResNet-50 backbone.
Results include accuracy, Brier score, ECE, mean AU/EU/TU, and EU-AUROC/AUPRC for the corresponding OoD datasets.}
\label{tab:all_metrics}
\end{table}

Table \ref{tab:all_metrics} reports predictive performance using a ResNet-50 backbone, while Tables \ref{tab:all_metrics_mnist}, \ref{tab:all_metrics_cifar10} and \ref{tab:all_metrics_cifar100} in the Appendix \ref{app:predictive-performance} provide the full results for Conv, VGG16 and ResNet18. Across all settings, CDEC-3 consistently outperforms the baselines. On MNIST, CDEC-3 achieves the highest accuracy, while maintaining competitive calibration despite the extreme simplicity of the task. On CIFAR-10, CDEC-3 provides a clear improvement over both PostNet and PostNet-3. The trend persists on CIFAR-100, where the greater class complexity amplifies calibration errors in the baselines; here, CDEC-3 maintains the lowest Brier score and among the lowest ECE values across all backbones (see Appendix \ref{app:predictive-performance}, Tables \ref{tab:all_metrics_mnist}-\ref{tab:all_metrics_cifar100}).
This behavior is expected: CDEC-3 computes predictions from the \emph{lower envelope} of a credal set, which corresponds to aggregating only the extreme (most informative) members of the Dirichlet ensemble. 
These extreme points typically exhibit low entropy and sharper class probabilities, leading to \emph{more decisive predictions without overconfidence}. IDEC performs competitively in terms of point predictions, often matching PostNet-3 in accuracy and achieving some of the lowest ECE values on MNIST and CIFAR-10; however, the interval inflation required to guarantee marginal coverage occasionally leads to flatter predictive distributions on CIFAR-100, which in turn increases the Brier score.

\subsection{Uncertainty estimation}
\label{sec:uncertainty-estimation}

In addition to point predictions, we evaluate how each method decomposes predictive uncertainty into its aleatoric (AU), epistemic (EU), and total (TU) components. Table \ref{tab:all_metrics} reports the mean AU/EU/TU values under a ResNet-50 backbone, while the corresponding results for Conv, VGG16 and ResNet18 are provided in Appendix \ref{app:uncertainty-estimation} and Tables \ref{tab:all_metrics_mnist}--\ref{tab:all_metrics_cifar100}. We also provide in Appendix \ref{app:uncertainty-estimation} full uncertainty-density plots (Figures \ref{fig:mnist-uncertainty}, \ref{fig:cifar10-uncertainty}, \ref{fig:cifar100-uncertainty}), showing the distribution of AU, EU and TU for in-distribution (in blue) and OoD samples.

Across all datasets and architectures, CDEC-3 produces the \emph{sharpest and most informative} uncertainty profiles. Because CDEC-3 computes uncertainty from the extreme points of the credal set, AU corresponds to the minimum entropy across the most plausible predictive distributions, and EU corresponds to the entropy gap between extreme hypotheses. This naturally suppresses spurious epistemic mass and yields substantially lower TU than both PostNet and PostNet-3. The effect is particularly pronounced on CIFAR-100, where model misspecification is strongest and PostNet ensembles frequently inflate epistemic uncertainty.

IDEC exhibits a complementary behavior. Its variance-based epistemic inflation produces competitive AU and EU on simpler datasets (MNIST, CIFAR-10), but on CIFAR-100 the mandatory interval expansion needed to guarantee marginal coverage leads to excessively large epistemic terms (visible also in Figure \ref{fig:cifar100-uncertainty}), which dominate AU and consequently inflate TU. This matches the theoretical behavior expected from interval-based uncertainty: when predictive probability vectors are diffuse over many classes, even modest inflation parameters can magnify EU (see Equation \eqref{decomp-simpl}), resulting in very large TU values.

\subsection{Out-of-distribution (OoD) detection}
\label{sec:ood}

We next assess the ability of PostNet, PostNet-3, CDEC-3 and IDEC to distinguish in-distribution data from unseen OoD shifts.
Following standard practice, we evaluate discrimination using AUROC and AUPRC (Algorithm \ref{app:algo-ood}), applied to aleatoric (AU), epistemic (EU), total uncertainty (TU) and predictive confidence (`conf').
For confidence, higher values indicate more in-distribution behavior and are therefore negated before ranking.
Table \ref{tab:all_metrics} reports the conf-based AUROC/AUPRC for ResNet-50 across all datasets, while the complete EU/AU/TU/conf based OoD detection results on all architectures are given in Appendix \ref{app:ood} (Table \ref{tab:ood_full}).

Across all settings, CDEC-3 consistently yields the strongest OoD discrimination, particularly when ranking by epistemic or total uncertainty.
This behavior follows directly from CDEC’s credal construction: when an input drifts off the training manifold, the extreme points of the credal set diverge, amplifying epistemic spread and producing a clear separation between ID and OoD samples.
IDEC also performs well on MNIST and CIFAR-10, though its required interval inflation occasionally leads to flatter predictive distributions on CIFAR-100, reducing contrast.
A visual confirmation of this trend is provided in Appendix \ref{app:ood}, Figure \ref{fig:ood_kde}, where we show KDEs of AU, EU and TU for CDEC-3 and IDEC on all datasets.
In every case, the iD distribution is sharply concentrated, while OoD samples produce broader, right-shifted uncertainty profiles.

\subsection{Imprecise Highest Density Region (IHDR) analysis}
\label{sec:ihdr}

Table \ref{tab:ihdr_results} summarizes the behavior of CDEC-3 and IDEC across 
datasets and backbones in terms of IHDR size and coverage. 
For all datasets, CDEC-3 produces \emph{sharply concentrated} credal sets on iD data: 
IHDR sizes remain close to 1 on MNIST and below 7 on CIFAR-10, with perfect or near-perfect 
coverage across all backbones. 
In contrast, IDEC returns larger and more diffuse IHDRs, particularly on CIFAR-100, 
where its mean set size ranges from~36 to~61 depending on the backbone.  
These results highlight a key structural difference: 
CDEC-3 encourages highly specific credal sets on iD inputs, whereas IDEC reflects more cautious 
and dispersed uncertainty.

\begin{table*}[!h]
\caption{\textbf{IHDR set size and coverage} for CDEC-3 and IDEC across datasets and backbones.
Lower IHDR size and higher coverage is preferred.}
\label{tab:ihdr_results}
\centering
\resizebox{\linewidth}{!}{
\begin{tabular}{@{}l|cc|cc|cc|cc|cc|cc@{}}
\toprule
\multirow{3}{*}{Backbone} 
& \multicolumn{4}{c|}{\textbf{MNIST}} 
& \multicolumn{4}{c|}{\textbf{CIFAR-10}} 
& \multicolumn{4}{c}{\textbf{CIFAR-100}} \\
\cmidrule(lr){2-5} \cmidrule(lr){6-9} \cmidrule(lr){10-13}
& \multicolumn{2}{c|}{\cellcolor[HTML]{EFEFEF}\textbf{CDEC-3}} & \multicolumn{2}{c|}{\cellcolor[HTML]{EFEFEF}\textbf{IDEC}}
& \multicolumn{2}{c|}{\cellcolor[HTML]{EFEFEF}\textbf{CDEC-3}} & \multicolumn{2}{c|}{\cellcolor[HTML]{EFEFEF}\textbf{IDEC}}
& \multicolumn{2}{c|}{\cellcolor[HTML]{EFEFEF}\textbf{CDEC-3}} & \multicolumn{2}{c}{\cellcolor[HTML]{EFEFEF}\textbf{IDEC}} \\
\cmidrule(lr){2-3}\cmidrule(lr){4-5}
\cmidrule(lr){6-7}\cmidrule(lr){8-9}
\cmidrule(lr){10-11}\cmidrule(lr){12-13}
& Size & Cov & Size & Cov 
& Size & Cov & Size & Cov 
& Size & Cov & Size & Cov \\
\midrule
Conv      
& 1.3639 & 0.9994 & 6.3426 & 1.0000
& 6.8010 & 0.9975 & 6.7627 & 0.9966
& 98.8361 & 1.0000 & 43.9969 & 0.9644 \\
VGG16       
& 1.3780 & 0.9991 & 4.5456 & 1.0000
& 6.2122 & 0.9833 & 10.0000 & 1.0000
& 98.5251 & 1.0000 & 36.0142 & 0.8378 \\
ResNet18    
& 1.4473 & 0.9995 & 5.5165 & 0.9998
& 6.9749 & 0.9930 & 10.0000 & 1.0000
& 98.7851 & 0.9998 & 41.4300 & 0.9023 \\
ResNet50    
& 2.2789 & 0.9993 & 10.0000 & 1.0000
& 5.3438 & 0.9937 & 10.0000 & 1.0000
& 73.0291 & 0.9861 & 61.5121 & 0.9694 \\
\bottomrule
\end{tabular}}
\end{table*}

Backbone choice primarily affects IDEC.
Shallow architectures such as VGG amplify IDEC's IHDR size (e.g.\ CIFAR-100 IHDR drops from 61.5 on ResNet50 to 36.0 on VGG),
indicating that the learned credal structure is sensitive to representation quality.
CDEC-3, on the other hand, is comparatively invariant to backbone variation:
its IHDRs remain compact across Conv/VGG/ResNet, and its iD-OoD separation is preserved even when the backbone changes.
The effect of distribution shift is examined in
Table \ref{tab:id_vs_ood_ihdr} in Appendix \ref{app:ihdr},
which compares mean IHDR size on iD inputs with the size obtained on one or more OoD datasets.  
For CDEC-3, IHDRs expand sharply under shift (by $+3$ to $+7$ on MNIST and CIFAR-10, and by $+23$ on CIFAR-100) yielding a strong and monotonic separation between iD and OoD.
IDEC exhibits the same qualitative pattern on most settings, except that the magnitude of the expansion is smaller.

\begin{figure}[!h]
    \centering
    \includegraphics[width=0.9\linewidth]{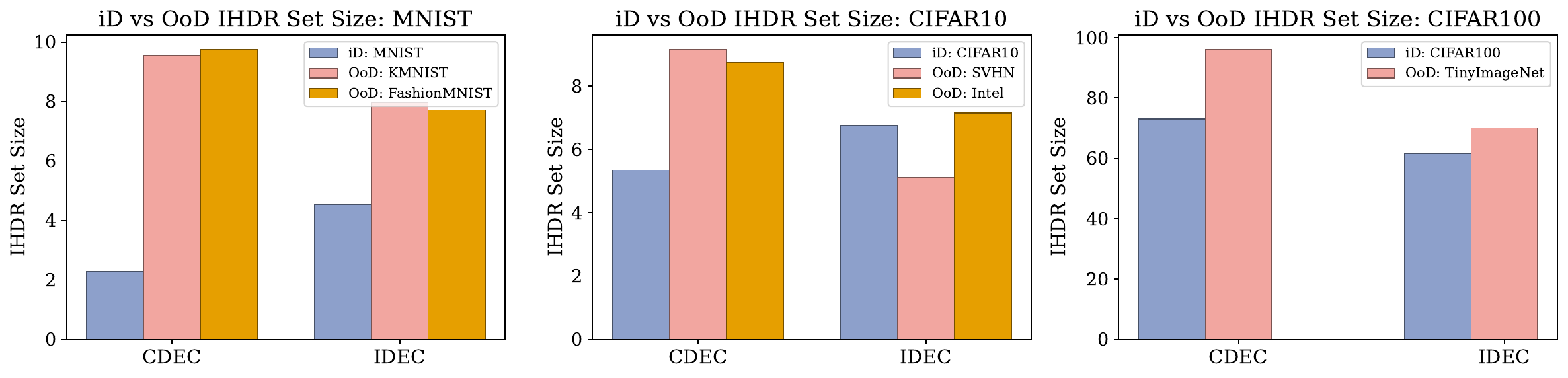}
    \caption{\textbf{iD vs OoD IHDR set size across datasets.}
    Bars show the mean IHDR cardinality for CDEC-3 and IDEC on in-distribution data (blue) 
    and on one or two OoD datasets per benchmark (orange/salmon). 
    CDEC-3 exhibits a clear expansion of the IHDR under distribution shift, whereas IDEC shows 
    a smaller and less consistent separation between iD and OoD.}
    \label{fig:id_vs_ood_ihdr}
\end{figure}

Figure \ref{fig:id_vs_ood_ihdr} visualizes these trends.  
For each dataset, CDEC-3 yields small IHDRs on iD samples and large, clearly separated IHDRs 
on OoD inputs. 
IDEC shows a similar but less pronounced behavior: while its iD IHDRs are larger than those 
of CDEC-3, the contrast between iD and OoD is typically weaker.  
Overall, these results indicate that CDEC-3 provides the most consistent and interpretable 
behavior: it forms compact, well-calibrated credal sets on iD data while responding to 
distributional shift with a clear and systematic expansion in IHDR size. 

In Appendix \ref{app:ihdr}, we show additional analysis on IHDR size, coverage and some qualitative illustrations of IHDRs (Figure \ref{fig:ihdr_examples}) for randomly selected test samples of MNIST, CIFAR-10 and CIFAR-100 datasets.

\subsection{The effect of ensemble numbers on CDEC}
\label{sec:cdec_ablation}

To study how the size of the CDEC ensemble influences the resulting credal sets and the
decomposed uncertainties, we evaluate CDEC with ensemble sizes
$S \in \{1,3,5,7,10\}$ on MNIST, CIFAR-10, and CIFAR-100.
Table \ref{tab:cdec_ablation} reports IHDR size, coverage, and the mean AU/EU/TU
components for each setting, while Figure \ref{fig:cdec_ablation} visualizes
the distribution of IHDR sizes together with coverage behavior.

\begin{table}[!h]
\caption{\textbf{Ablation of CDEC ensemble size} on IHDR quality and uncertainty across datasets.
We report mean IHDR size ($\downarrow$), coverage ($\uparrow$), and mean AU/EU/TU ($\downarrow$) for 
ensemble sizes $S \in \{1,3,5,7,10\}$.}
\label{tab:cdec_ablation}
\centering
\resizebox{\linewidth}{!}{
\begin{tabular}{@{}c|ccccc|ccccc|ccccc@{}}
\toprule
& \multicolumn{5}{c|}{\textbf{MNIST}} 
& \multicolumn{5}{c|}{\textbf{CIFAR-10}} 
& \multicolumn{5}{c}{\textbf{CIFAR-100}} \\
\cmidrule(lr){2-6} \cmidrule(lr){7-11} \cmidrule(lr){12-16}
$S$ 
& \cellcolor[HTML]{EFEFEF}IHDR($\downarrow$) & \cellcolor[HTML]{EFEFEF}Cov($\uparrow$) & AU($\downarrow$) & EU($\downarrow$) & TU($\downarrow$)
& \cellcolor[HTML]{EFEFEF}IHDR($\downarrow$) & \cellcolor[HTML]{EFEFEF}Cov($\uparrow$) & AU($\downarrow$) & EU($\downarrow$) & TU($\downarrow$)
& \cellcolor[HTML]{EFEFEF}IHDR($\downarrow$) & \cellcolor[HTML]{EFEFEF}Cov($\uparrow$) & AU($\downarrow$) & EU($\downarrow$) & TU($\downarrow$) \\
\midrule
1  
& \cellcolor[HTML]{EFEFEF}1.06 & \cellcolor[HTML]{EFEFEF}0.996 & 0.0264 & 0.0000 & 0.0264
& \cellcolor[HTML]{EFEFEF}2.30 & \cellcolor[HTML]{EFEFEF}0.949 & 0.4707 & 0.0000 & 0.4707
& \cellcolor[HTML]{EFEFEF}34.77 & \cellcolor[HTML]{EFEFEF}0.946 & 2.4064 & 0.0000 & 2.4064 \\
3  
& \cellcolor[HTML]{EFEFEF}1.36 & \cellcolor[HTML]{EFEFEF}0.999 & 0.0040 & 1.1469 & 1.1509
& \cellcolor[HTML]{EFEFEF}6.80 & \cellcolor[HTML]{EFEFEF}0.998 & 0.2165 & 1.6199 & 1.8364
& \cellcolor[HTML]{EFEFEF}98.84 & \cellcolor[HTML]{EFEFEF}1.000 & 1.7862 & 2.2833 & 4.0695 \\
5  
& \cellcolor[HTML]{EFEFEF}1.09 & \cellcolor[HTML]{EFEFEF}0.997 & 0.0288 & 0.4202 & 0.4490
& \cellcolor[HTML]{EFEFEF}2.13 & \cellcolor[HTML]{EFEFEF}0.944 & 0.4132 & 0.0000 & 0.4132
& \cellcolor[HTML]{EFEFEF}33.13 & \cellcolor[HTML]{EFEFEF}0.938 & 2.3321 & 0.0014 & 2.3335 \\
7  
& \cellcolor[HTML]{EFEFEF}1.36 & \cellcolor[HTML]{EFEFEF}0.999 & 0.0040 & 1.5301 & 1.5341
& \cellcolor[HTML]{EFEFEF}6.80 & \cellcolor[HTML]{EFEFEF}0.998 & 0.2165 & 2.4660 & 2.6825
& \cellcolor[HTML]{EFEFEF}98.72 & \cellcolor[HTML]{EFEFEF}1.000 & 1.7862 & 3.1269 & 4.9131 \\
10 
& \cellcolor[HTML]{EFEFEF}1.36 & \cellcolor[HTML]{EFEFEF}0.999 & 0.0040 & 1.9027 & 1.9067
& \cellcolor[HTML]{EFEFEF}6.80 & \cellcolor[HTML]{EFEFEF}0.998 & 0.2165 & 2.8227 & 3.0392
& \cellcolor[HTML]{EFEFEF}98.72 & \cellcolor[HTML]{EFEFEF}1.000 & 1.7862 & 3.4835 & 5.2697 \\
\bottomrule
\end{tabular}}
\end{table}

\textbf{IHDR size.}
Across all datasets, $S=1$ (no ensembling) produces the smallest IHDR on MNIST,
but the effect reverses on CIFAR-10 and CIFAR-100, where $S=1$ yields substantially smaller (and in
some cases underdispersed) credal sets compared to larger ensembles.
For $S \geq 3$, the IHDR size stabilizes and becomes nearly invariant to $S$,
indicating that CDEC’s epistemic decomposition quickly saturates with only a few ensemble members.
On CIFAR-100, larger ensembles ($S \geq 3$) consistently produce substantially broader IHDRs,
reflecting the increased epistemic ambiguity inherent in the dataset.

\textbf{Coverage.}
Coverage remains close to 1 across all $S$ on MNIST and CIFAR-100, confirming that CDEC
maintains high reliability even as the ensemble grows.
On CIFAR-10, coverage slightly drops for $S=1$ and $S=5$, but returns to near-perfect levels for
$S \in \{3,7,10\}$.
This suggests that moderate ensemble sizes mitigate instability in the epistemic component
while preserving calibration.

\begin{figure}[!h]
    \centering
    \includegraphics[width=\linewidth]{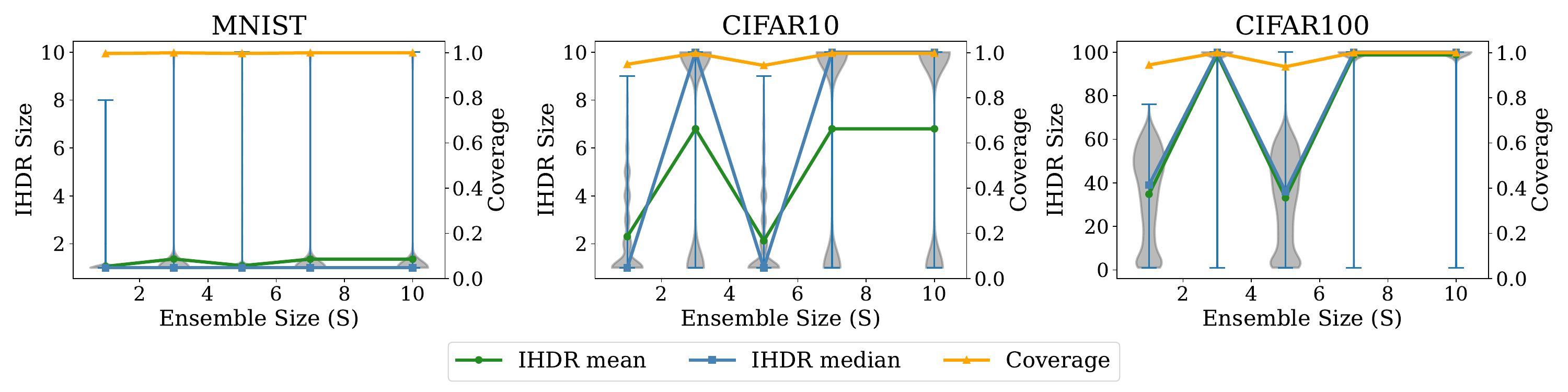}
    \caption{\textbf{Effect of ensemble size $S$ on IHDR distributions and coverage for CDEC.}
    IHDR sizes (violin + median) and coverage are shown for MNIST, CIFAR-10, and CIFAR-100.
    Ensemble sizes $S \geq 3$ yield stable IHDR behavior across datasets, whereas $S=1$
    underestimates epistemic uncertainty on CIFAR-10 and CIFAR-100.}
    \label{fig:cdec_ablation}
\end{figure}

\textbf{Uncertainty decomposition.}
The behavior of AU/EU/TU mirrors the IHDR trends.
For $S=1$, epistemic uncertainty (EU) is nearly zero on MNIST and CIFAR-10,
as expected for a single posterior sample.
As $S$ increases, EU grows steadily and dominates TU, with AU remaining relatively stable.
The increase in EU directly corresponds to the thickening of IHDR distributions in 
Figure \ref{fig:cdec_ablation} and reflects the epistemic diversity injected by the ensemble.
On CIFAR-100, EU continues to grow with $S$, leading to a similarly increasing TU.
This is consistent with the dataset’s higher class complexity and confirms that epistemic
spread is the principal driver of IHDR expansion in CDEC.

Overall, the ablation reveals that:
(i) CDEC requires only a small ensemble ($S \approx 3$) for stable IHDR behavior,
(ii) larger ensembles produce wider IHDRs primarily through increased epistemic uncertainty,
and (iii) coverage remains near-perfect across datasets.
These results demonstrate that CDEC’s decomposition behaves predictably with respect to ensemble size,
and its epistemic component scales in a dataset-dependent way.

\section{Conclusion}\label{concl}
In this work, we introduced CDEC and IDEC, two evidential deep learning techniques that are able to gauge AU and EU in a classification problem; this is useful in abstaining from producing an output when excessive uncertainty is detected. The theoretical framework is grounded in the theories of credal sets and intervals of measures. 
We also showed their competitive performance via extensive experiments. 

Throughout the paper, we assumed that the number of labels 
is known. In the future, we will forego this assumption. A high-level idea of how to proceed in that case is presented in Appendix \ref{app-3}. 

In addition, we plan to extend CDEC and IDEC to be able to deal with regression problems as well. This seems to be a difficult endeavor, as the one-to-one relationship between a Categorical distribution and its parameter that we exploited in Step 4 of Algorithm \ref{algo-1} may not hold in general \citep[Theorem 3]{mira}. 

Furthermore, we aim at improving the elicitation process of the likelihood FGCS $\mathcal{P}_\text{lik}$ for CDEC. There are two ways of achieving this goal. First, we could adopt a profile likelihood approach \citep{yudi}. After having specified our finite collection of likelihoods $\{\prod_{i=1}^{n^\text{virt}_s} p(y_i \mid \pi,x_i)\}_{s=1}^S$, we would compute the \textit{normalized likelihood}
$$\text{lik}_s \coloneqq \frac{\prod_{i=1}^{n^\text{virt}_s} p(y_i \mid \pi,x_i)}{\sup_{\pi\in\Delta^{k-1}} \prod_{i=1}^{n^\text{virt}_s} p(y_i \mid \pi,x_i)}, \quad s\in\{1,\ldots,S\},$$
where $\Delta^{k-1}$ denotes the $(k-1)$-dimensional unit simplex. At this point, we would remove from our collection $\{\prod_{i=1}^{n^\text{virt}_s} p(y_i \mid \pi,x_i)\}_{s=1}^S$ the likelihoods whose profile likelihood is below a threshold $\varphi\in [0,1]$, thus obtaining
$$\mathcal{P}_\text{lik}=\text{Conv}\left(\left\lbrace{\prod_{i=1}^{n^\text{virt}_s} p(y_i \mid \pi,x_i) : \text{lik}_s \geq \varphi \text{, } s\in\{1,\ldots,S\} }\right\rbrace\right).$$
Standard values for $\varphi$ are $0.75$ and $0.95$ \citep{anto-catt}. A second way would be that of using a Large Language Model (LLM, \citep{gpt3}) to elicit a credal set starting from some qualitatively assessments of the data generating process. Upon receiving a prompt that contains information expressed in natural language, the LLM should output a finite collection of distributions, whose convex hull would constitute the likelihood FGCS $\mathcal{P}_\text{lik}$. For example, upon receiving a prompt containing a sentence like ``I believe the data come from a distribution whose representation is some regularly looking, symmetric, bell-shaped curve'', the LLM should output a few Normals, Cauchy's, and t-distributions. In the presence of a sentence like ``My degree of uncertainty is high'', such distributions should be far from each other, e.g. in the total variation distance.

Finally, we intend to let go of the assumption in \eqref{iid_ass} that, conditional on $\pi$ and $x_i$, the $Y_i$'s are i.i.d. In future work, we will inspect the case where the $Y_i$'s are identically distributed, but potentially dependent. A possible way of achieving this goal is to design a procedure that draws on the Lazy Naive Credal Classifier (LNCC, \citep{corani-zaff}, \citep[Chapter 10.4.1]{augustin}). Such a method would go as follows. We would defer the training until a new input $\tilde x$ has to be classified. Then, we would select $m$ inputs $\{x_i\}_{i=1}^m$ from the training set that are close to $\tilde x$. In formulas, $\{x_i\}_{i=1}^m$, $m\leq n$, such that $d_\mathcal{X}(x_i,\tilde{x})\leq \eta$, for some $\eta>0$ and some metric $d_\mathcal{X}$ on the input space $\mathcal{X}$. 
After that, we would train our model on the restricted dataset $D^{\tilde x} \coloneqq \{(x_i,y_i)\}_{i=1}^m$ formed by the inputs $x_i$ that are closest to $\tilde x$ and their corresponding outputs $y_i$. Finally, we would only keep in memory the whole dataset $\{(x_i,y_i)\}_{i=1}^n$, and discard the locally trained classifiers. This way of proceeding, that works locally, reduces the chance of encountering strong dependencies between the classes \citep{eibe}. The ``right'' value of $\eta$ could be chosen empirically,  e.g. by cross-validation on the training set, or by tuning it separately for each new input $\tilde x$ \citep{corani-zaff}. Another way of letting go of assumption \eqref{iid_ass} is to explicitly model the dependencies between the elements of the dataset. This can be achieved e.g. via a decision tree, as done in Tree-Based Credal Classifiers (TBCCs, \citep{fagiuoli}, \citep[Chapter 10.5]{augustin}).

\begin{acks}[Acknowledgments]
The authors would like to thank Kuk Jin Jang, Yusuf Sale, and Malena Español for their helpful comments that helped shaping up the final version of the paper.
\end{acks}

\begin{funding}
The first author was supported by ARO Grant ARO-MURI W911NF2010080.
\end{funding}



\bibliographystyle{imsart-nameyear} 
\bibliography{tropical,UAI-IBNN-2023_jangkj,aaai23_2}       


\newpage
\appendix

\section{Epistemic probability}\label{app-1}
{EU should not be confused with the concept of  \textit{epistemic probability} \citep{definetti1,definetti2,walley}. In the subjective probability literature, epistemic probability can be captured by a single distribution. Its best definition can be found in \citep[Sections 1.3.2 and 2.11.2]{walley}. There, the author specifies how epistemic probabilities model logical or psychological degrees of partial belief of the agent. We remark, though, how de Finetti and Walley work with finitely additive probabilities, while in this paper we use countably additive probabilities.}

\section{Further related work}\label{more-rel}
Robust classification via credal sets is a lively field. In \citep{corani}, the authors introduce credal classifiers (CCs), as a generalization of classifiers based on Bayesian networks. Unlike CCs, CDEC and IDEC do not require independence assumptions between non-descendant, non-parent variables. In addition, CDEC and IDEC avoid NP-hard complexity issues of searching for optimal structure in the space of Bayesian networks \citep{meek}. 
For a review of the state of the art concerning the distinction between EU and AU we refer to \citep{eyke,cuzzo,cuzzo2} and \citep[Section 6 and Appendix O]{ibnn}.

\section{General operative bounds for EU and TU}\label{thm-bounds-app}

\begin{theorem}[Bounds for Total and Epistemic Uncertainties]\label{cor-1}
    Let $\mathcal{P}$ be a finitely generated credal set, and let $\text{ex}\mathcal{P}=\{P^\text{ex}_s\}_{s=1}^S$ denote the set of its extreme elements.\footnote{We assume without loss of generality that set $\mathcal{P}$ has $S$ many extreme elements.} Let $\Delta^{S-1}$ denote the $S$-dimensional unit simplex,
    $$\Delta^{S-1}\coloneqq\bigg\{\beta=(\beta_1,\ldots,\beta_S)^\top : \beta_s \geq 0 \text{ for all } s \text{, and } \sum_s \beta_s =1\bigg\}.$$
    Let $\underline{H}(P^\text{ex})\coloneqq\min_{P^\text{ex} \in \text{ex}\mathcal{P} }H(P^\text{ex})$ and $\overline{H}(P^\text{ex})\coloneqq\max_{P^\text{ex} \in \text{ex}\mathcal{P} }H(P^\text{ex})$. Let $\text{TU}(\mathcal{P})$, $\text{AU}(\mathcal{P})$, $\text{EU}(\mathcal{P})$ denote the total, aleatoric, and epistemic uncertainties associated with $\mathcal{P}$. By equation \eqref{decomp}, we know that $\text{TU}(\mathcal{P})=\overline{H}(P)$, $\text{AU}(\mathcal{P})=\underline{H}(P)$, and $\text{EU}(\mathcal{P})=\overline{H}(P)-\underline{H}(P)$. Let
    \begin{align*}
        l[\text{TU}(\mathcal{P})]\coloneqq\max\bigg\{\sup_{\beta\in\Delta^{S-1}} \sum_{s=1}^S \beta_s H(P^\text{ex}_s), \overline{H}(P^\text{ex})\bigg\}.
    \end{align*}
Then,
    \begin{align}
        \text{TU}(\mathcal{P}) &\in \bigg[ l[\text{TU}(\mathcal{P})], \sup_{\beta\in\Delta^{S-1}} \sum_{s=1}^S \beta_s H(P^\text{ex}_s) + \log_2(S) \bigg], \label{tot_unc_int}\\
        \text{AU}(\mathcal{P})&= \underline{H}(P^\text{ex}), \nonumber\\
        \text{EU}(\mathcal{P}) &\in \bigg[ \max\bigg\{0, l[\text{TU}(\mathcal{P})] - \underline{H}(P^\text{ex})\bigg\}, \nonumber\\
        &\sup_{\beta\in\Delta^{S-1}} \sum_{s=1}^S \beta_s H(P^\text{ex}_s) + \log_2(S) - \underline{H}(P^\text{ex})\bigg]. \label{epi_unc_int}
    \end{align}
\end{theorem}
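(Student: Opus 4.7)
The plan is to exploit the finitely generated structure of $\mathcal{P}$: every $P\in\mathcal{P}$ admits a representation $P=\sum_{s=1}^S \beta_s P^\text{ex}_s$ for some $\beta\in\Delta^{S-1}$, so both $\overline{H}(P)$ and $\underline{H}(P)$ become suprema/infima of $H(\sum_s \beta_s P^\text{ex}_s)$ over $\beta\in\Delta^{S-1}$. Two classical properties of Shannon entropy will then deliver all the bounds: (a) concavity, which gives $H(\sum_s \beta_s P^\text{ex}_s)\ge \sum_s \beta_s H(P^\text{ex}_s)$, and (b) the mixture upper bound $H(\sum_s \beta_s P^\text{ex}_s)\le \sum_s \beta_s H(P^\text{ex}_s)+H(\beta)$, which is obtained by introducing a latent index $Z$ with $\Pr(Z=s)=\beta_s$ and applying $H(X)\le H(X,Z)=H(Z)+H(X\mid Z)$ to a random variable $X$ whose conditional law given $Z=s$ is $P^\text{ex}_s$.

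For the upper bound in \eqref{tot_unc_int}, I would combine (b) with the uniform estimate $H(\beta)\le \log_2(S)$ to obtain $\overline{H}(P)\le \sup_\beta \sum_s \beta_s H(P^\text{ex}_s)+\log_2(S)$; this is essentially the argument already used for Theorem~\ref{thm-imp-reduced}. For the lower bound, (a) yields $\overline{H}(P)\ge \sup_\beta \sum_s \beta_s H(P^\text{ex}_s)$, while the fact that each extreme point $P^\text{ex}_s$ lies in $\mathcal{P}$ directly gives $\overline{H}(P)\ge \max_s H(P^\text{ex}_s)=\overline{H}(P^\text{ex})$; taking the pointwise maximum of these two lower bounds reproduces precisely $l[\text{TU}(\mathcal{P})]$. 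For the AU identity I would observe that $H$ is a concave function on the convex polytope $\mathcal{P}$, so its infimum is attained at a vertex, hence $\underline{H}(P)=\min_s H(P^\text{ex}_s)=\underline{H}(P^\text{ex})$; this is also the content needed in Theorem~\ref{thm-imp-reduced}.

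The EU interval in \eqref{epi_unc_int} then follows by subtracting the AU identity from the TU bounds in \eqref{tot_unc_int}, together with the elementary observation $\text{EU}(\mathcal{P})=\overline{H}(P)-\underline{H}(P)\ge 0$, which justifies the outer $\max\{0,\cdot\}$ at the lower endpoint. The main technical step, and the one that needs genuine care, is the mixture upper bound (b): although the latent-variable derivation is standard in information theory, one should note that the $\beta$ maximising $\sum_s \beta_s H(P^\text{ex}_s)$ (a vertex of $\Delta^{S-1}$) differs from the $\beta$ maximising $H(\beta)$ (the barycentre), so the displayed bound $\sup_\beta \sum_s \beta_s H(P^\text{ex}_s)+\log_2(S)$ is genuinely only an upper envelope and will typically not be tight. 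Everything else reduces to direct convex-analytic arguments on $\Delta^{S-1}$, and I do not foresee further difficulties.
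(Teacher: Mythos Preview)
Your proposal is correct and follows essentially the same route as the paper: the paper also derives the $\text{AU}$ identity from concavity of entropy (infimum of a concave function over a polytope is attained at a vertex), obtains the lower bound $\overline{H}(P)\ge\overline{H}(P^\text{ex})$ from $\text{ex}\mathcal{P}\subseteq\mathcal{P}$, and gets the two-sided bound on $\overline{H}(P)$ from the mixture-entropy inequality $\sum_s\beta_s H(P^\text{ex}_s)\le H(\sum_s\beta_s P^\text{ex}_s)\le \sum_s\beta_s H(P^\text{ex}_s)+H(\beta)\le \sum_s\beta_s H(P^\text{ex}_s)+\log_2(S)$. The only cosmetic difference is that the paper quotes this mixture bound from \citep{smieja} rather than deriving it via your latent-index argument, and then obtains the EU interval exactly as you do by subtracting the AU identity from the TU bounds.
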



\section{Generalized Hartley measure}\label{app-5}
In \citep{andrey,eyke}, the authors introduce the notion of \textit{generalized Hartley measure} $GH(\mathcal{P})$ of set $\mathcal{P}$. Let the probabilities in $\mathcal{P}$ be defined on a finite set $\mathcal{Y}=\{1,\ldots,k\}$. Then,
\begin{equation}\label{gh-def}
    GH(\mathcal{P})\coloneqq\sum_{A\in 2^\mathcal{Y}} \left[ \log_2(|A|) \cdot \sum_{B \in 2^A} \left( (-1)^{|A\setminus B|} \inf_{P\in\mathcal{P}} P(B) \right)  \right],
\end{equation}
where $2^\mathcal{Y}$ and $2^A$ denote the power sets of $\mathcal{Y}$ and $A$, respectively, $|A|$ denotes the cardinality of set $A$, and $A\setminus B\coloneqq A \cap B^c$. The uncertainties are given by 
\begin{align*}
    \underbrace{\overline{H}(P)}_{\text{total uncertainty }\text{TU}(\mathcal{P})}=\underbrace{\left[\overline{H}(P)-GH(\mathcal{P})\right]}_{\text{aleatoric uncertainty }\text{AU}(\mathcal{P})} +  \underbrace{GH(\mathcal{P})}_{\text{epistemic uncertainty }\text{EU}(\mathcal{P})}.
\end{align*}
Quantifying and disentangling AU and EU via the generalized Hartley measure enjoys desirable properties \citep{abellan3,eyke,jiro}. Despite this, its definition \eqref{gh-def} is such that using it in an applied setting becomes very difficult when $|\mathcal{Y}|$ is even moderate, say greater than $5$ or $6$. This is due to the sums, that are taken over the power sets, which make the computational cost become burdensome very fast. For this reason, in sections \ref{unc's} and \ref{ien} we preferred to work with lower and upper entropy.

\section{Different types of ambiguity and degree of conservativeness of our IHDR}\label{app-6}
In this section, we discuss a subtle difference between the procedures in sections \ref{ien} and \ref{ien-simpl}, and we give a way of quantifying the degree of conservativeness of $IR_\gamma[\mathcal{I}(\ell,(1+d)\ell)]$, for a generic $d\geq 0$. 

In Credal Deep Evidential Classification, presented in Algorithm \ref{algo-1}, the agent faces \textit{likelihood ambiguity}, that is, ambiguity around which correct likelihood to select for the analysis. This motivates them to specify different encoders and normalizing flows that, in turn, give different possible parameterizations for the Dirichlet posteriors. This ambiguity percolates through the whole procedure, and manifests itself in the posterior predictive set $\text{Conv}(\{\text{Cat}(\pi^\prime_{s}\}_{s=1}^S)$. On the contrary, in Interval Deep Evidential Classification (IDEC), presented in Algorithm \ref{algo-2}, the ambiguity is expressed by the agent directly on the posterior predictive distribution. We call it \textit{predictive ambiguity}. This is done by assuming that the pmf $\hat{p}$ of the true posterior predictive distribution $\hat{P}$ belongs to $\mathcal{I}[\ell,(1+d)\ell]$. 

From Definition \ref{ihdr-def}, we have that $\underline{P}(Y\in IR_\gamma[\mathcal{I}(\ell,(1+d)\ell)])=1-\gamma$. In turn, this implies that $\overline{P}(Y\in IR_\gamma[\mathcal{I}(\ell,(1+d)\ell)])\geq 1-\gamma$. The larger the distance between $\overline{P}(Y\in IR_\gamma[\mathcal{I}(\ell,(1+d)\ell)])$ and $1-\gamma$, the more conservative the IHDR is. In \citep[Section 2]{coolen}, the author gives a way of quantifying this distance. We have that
\begin{equation}\label{overcons}
    \overline{P}\left( Y\in IR_\gamma[\mathcal{I}(\ell,(1+d)\ell)] \right) - (1-\gamma)=\frac{\gamma(1-\gamma)d(2+d)}{(1-\gamma)(1+d)^2+\gamma}.
\end{equation}
The closer the value in \eqref{overcons} is to $\gamma$ -- that is, the closer $\overline{P}(Y\in IR_\gamma[\mathcal{I}(\ell,(1+d)\ell)])$ is to $1$ -- the more conservative our IHDR is.

\section{A $k$-dimensional classification problem with unknown $k$}\label{app-3}
Suppose that, in a classification problem, we do not know the correct number of classes. If that is the case, we can proceed in two ways. The first one is to specify a \textit{mixture of finite mixtures} prior, as explored e.g. in \citep{silvia}. The second one is closer in spirit to the credal approach of section \ref{ien}. 

Let $Y_i \mid \pi, x_i \sim \text{Cat}(\pi)$ i.i.d., and assume that $\pi$ is a $K$-dimensional vector, where $K$ is the largest number of categories that we deem reasonable for the analysis at hand. As in section \ref{lik-spec}, this assumption allows to build a likelihood finitely generated credal set $\mathcal{P}_{\text{lik}}=\text{Conv} ( \{ \prod_{i=1}^{n^\text{virt}_s} p(y_i \mid \pi,x_i)\}_{s=1}^S )$. Consider then $M \geq 2$ Dirichlet priors, each parameterized by $\mathbf{1}_{m}$, a $d_m$-dimensional vector of all $1$'s, $d_{m} \equiv \text{dim}(\mathbf{1}_{m})$, $m\in\{1,\ldots,M\}$. Of course, the dimension $d_{m}$ of $\mathbf{1}_{m}$ must not exceed $K$, for all $m\in\{1,\ldots,M\}$. Now extend every $\mathbf{1}_{m}$ as
$$\hat{\mathbf{1}}_{m}=\bigg( \mathbf{1}_{m,1},\ldots, \mathbf{1}_{m,d_{m}}, \underbrace{0, \ldots,0}_{K-d_{m} \text{ times}} \bigg)^\top,$$ 
and consider the ``extended'' Dirichlet priors $\text{Dir}(\hat{\mathbf{1}}_{1}), \ldots, \text{Dir}(\hat{\mathbf{1}}_{m})$. The prior FGCS, then, is $\mathcal{P}_\text{prior}=\text{Conv}(\{\text{Dir}(\hat{\mathbf{1}}_{m})\}_{m=1}^M)$, and the posterior FGCS is 
$$\mathcal{P}_\text{post}=\text{Conv}(\{\text{Dir}(\hat{\mathbf{1}}_{m} + c_s) : m\in\{1,\ldots,M\} \text{, } s\in\{1,\ldots,S\}\}),$$
where $c_s$ is the $K$-dimensional vector such that $c_{s,j}=n^\text{virt}_{s,j}$, for all $s\in\{1,\ldots,S\}$ and all $j\in\{1,\ldots,K\}$, and $n^\text{virt}_{s,j}$ is the virtual number of observations introduced in section \ref{lik-spec}. 

In turn, the predictive FGCS is 
$$\mathcal{P}_\text{pred}=\text{Conv}(\{\text{Cat}(\hat{\pi}^{m,s}): m\in\{1,\ldots,M\} \text{, } s\in\{1,\ldots,S\}\}),$$
where 
$$\hat{\pi}^{m,s}_j=\frac{\hat{\mathbf{1}}_{m,j}+c_{s,j}}{\sum_{l=1}^K (\hat{\mathbf{1}}_{m,l}+c_{s,l})}, \quad j\in\{1,\ldots,K\}.$$
The extreme elements $\text{ex}\mathcal{P}_\text{pred}$ of the predictive FGCS $\mathcal{P}_\text{pred}$ are obtained by running a convex hull algorithm like {gift wrapping} on $\{\hat{\pi}^{m,s}: m\in\{1,\ldots,M\} \text{, } s\in\{1,\ldots,S\}\}$ \citep[Chapters 3, 4]{preparata}, similarly to what we did in section \ref{cdec-sec}. Notice that the cardinality of $\text{ex}\mathcal{P}_\text{pred}$ is upper bounded by $M \cdot S$.

Then, the categories $j$ whose upper predictive probability is small -- that is, bounded by an arbitrarily small $\varepsilon >0$ -- are likely to be unnecessary and can be eliminated from the analysis.\footnote{Here, ``small'' has to be understood relatively to the values of the upper predictive probabilities of the other labels.} In formulas, if $\max_{j\in\{1,\ldots,K\}} {}^\text{ex}\hat{\pi}^{m,s}_j \leq \varepsilon$, then category $j$ can be dropped from the study. Here,  ${}^\text{ex}\hat{\pi}^{m,s}$ denotes the parameters of the extreme elements $\text{ex}\mathcal{P}_\text{pred}$ of the predictive FGCS $\mathcal{P}_\text{pred}$. We can restrict our attention to the elements of $\{{}^\text{ex}\hat{\pi}^{m,s}\}_{m,s}$ when computing the maximum as a result of \citep[Proposition 2]{ibnn}.

We conclude this section by noting how this method allows to build a model that addresses agnostic prior beliefs, ambiguity around the true data generating process, and uncertainty on the number of categories in the study. In the future, we plan to examine how the  procedure proposed in this section compares to Nonparametric Predictive Inference (NPI, \citep[Chapter 7.6]{augustin}, \citep{hill}).

\section{Proofs}\label{proofs}

\begin{proof}[Proof of Theorem \ref{thm-imp-reduced}]
    See the proof of Theorem \ref{cor-1}.
\end{proof}

\begin{proof}[Proof of Theorem \ref{bound-ihdr}]
Immediate from the superadditivity of lower probabilities \citep{cerreia,walley}.
\end{proof}

\begin{proof}[Proof of Corollary \ref{cor-equiv}]
    Let $K=2 \times |\mathcal{Y}|$. Then, consider $K$ distributions $Q_1,\ldots,$ $Q_K$ on $\mathcal{Y}$ whose pmf's $q_1,\ldots,q_K$ are such that 
    \begin{enumerate}
        \item $q_j(y_j)=\underline{P}(\{y_j\})$, and $q_j(y_{s})\geq \underline{P}(\{y_s\})$, for all $j\in\{1,\ldots,K/2\}$ and $y_{s}\in\mathcal{Y}\setminus\{y_j\}$,
        \item $q_j(y_{j-K/2})=\overline{P}(\{y_{j-K/2}\})$, and $q_j(y_{s})\leq \overline{P}(\{y_s\})$, for all $j\in\{K/2+1,\ldots,K\}$ and $y_{s}\in\mathcal{Y}\setminus\{y_{j-K/2}\}$.
    \end{enumerate}
    Here the lower and upper probabilities of the elements of $\mathcal{Y}$ are derived as in \eqref{eq-deriv-1}. Put then $\mathcal{P}=\text{Conv}(\{Q_r\}_{r=1}^K)$. According to Proposition \ref{prop-cred-int}, such a $\mathcal{P}$ is a finitely generated credal set that corresponds to $\mathcal{I}(\ell,u)$ . This concludes our proof.
\end{proof}

\begin{proof}[Proof of Theorem \ref{cor-1}]
The proof is based on the following two results.
\begin{theorem}[Bound for Upper Entropy and Value of Lower Entropy]\label{thm-1}
    Let $\mathcal{P}$ be a finitely generated credal set, and let $\text{ex}\mathcal{P}=\{P^\text{ex}_s\}_{s=1}^S$ denote the set of its extreme elements. Let $\Delta^{S-1}$ denote the $S$-dimensional unit simplex. Then,
    \begin{align}\label{eq1_imp}
    \begin{split}
        \underline{H}(P)=\underline{H}(P^\text{ex})
    \end{split}
    \end{align}
    and
    \begin{align}\label{eq2_imp}
    \begin{split}
        \sup_{\beta\in\Delta^{S-1}} \sum_{s=1}^S \beta_s H(P^\text{ex}_s) \leq \overline{H}(P) &\leq \sup_{\beta\in\Delta^{S-1}} \left\lbrace{\sum_{s=1}^S \beta_s H(P^\text{ex}_s) + \sum_{s=1}^S \left[-\beta_s \log_2(\beta_s)\right]}\right\rbrace\\
        &\leq \sup_{\beta\in\Delta^{S-1}} \sum_{s=1}^S \beta_s H(P^\text{ex}_s) + \log_2(S).
    \end{split}
    \end{align}
\end{theorem}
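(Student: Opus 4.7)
The plan is to handle the two claims separately, both exploiting the definitional fact that every $P\in\mathcal{P}$ admits at least one representation $P=\sum_{s=1}^{S}\beta_s P^{\text{ex}}_s$ with $\beta\in\Delta^{S-1}$, together with the concavity of Shannon entropy on the probability simplex.

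For the lower entropy identity, one direction is immediate: since each $P^{\text{ex}}_s$ lies in $\mathcal{P}$, we have $\underline{H}(P)=\inf_{P\in\mathcal{P}}H(P)\leq\min_s H(P^{\text{ex}}_s)=\underline{H}(P^{\text{ex}})$. For the reverse inequality, I would use concavity to write, for any $\beta\in\Delta^{S-1}$,
$$H\Big(\sum_{s=1}^S \beta_s P^{\text{ex}}_s\Big)\;\geq\;\sum_{s=1}^S \beta_s H(P^{\text{ex}}_s)\;\geq\;\min_s H(P^{\text{ex}}_s)=\underline{H}(P^{\text{ex}}),$$
and since every $P\in\mathcal{P}$ has this form, taking the infimum on the left yields $\underline{H}(P)\geq\underline{H}(P^{\text{ex}})$, closing the equality.

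For the upper entropy chain, the leftmost inequality follows from the same concavity step: for any fixed $\beta$, the mixture $P_\beta=\sum_s\beta_s P^{\text{ex}}_s$ belongs to $\mathcal{P}$, so $\overline{H}(P)\geq H(P_\beta)\geq\sum_s\beta_s H(P^{\text{ex}}_s)$, and taking the sup over $\beta$ gives the bound. The central inequality is the key one; I would obtain it via the standard chain-rule argument for mixtures. Introduce an auxiliary categorical variable $T\sim\beta$ on $\{1,\dots,S\}$ together with $Y\mid T{=}s\sim P^{\text{ex}}_s$, so that $Y$ marginally has law $P_\beta$. Then $H(P_\beta)=H(Y)\leq H(Y,T)=H(T)+H(Y\mid T)=\sum_s\beta_s H(P^{\text{ex}}_s)+\bigl[-\sum_s\beta_s\log_2\beta_s\bigr]$. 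Since every $P\in\mathcal{P}$ can be realised as such a $P_\beta$, taking the sup over $P$ and then over all admissible $\beta$ yields the middle term of (\ref{eq2_imp}). Finally, the rightmost inequality is the classical uniform bound $-\sum_s\beta_s\log_2\beta_s\leq\log_2(S)$, attained at $\beta_s=1/S$, combined with the separation of suprema.

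The only delicate point is that the decomposition $P=\sum_s\beta_s P^{\text{ex}}_s$ need not be unique when $S$ exceeds the affine dimension of $\mathcal{P}$, so strictly speaking the upper bound on $H(P)$ holds for \emph{every} compatible $\beta$. This is not an obstacle, however, because the inequality $H(P_\beta)\leq \sum_s\beta_s H(P^{\text{ex}}_s)-\sum_s\beta_s\log_2\beta_s$ is valid for \emph{each} valid $\beta$, and in passing to $\overline{H}(P)$ we take the supremum over $\beta\in\Delta^{S-1}$, which automatically accommodates any such ambiguity. Compactness of $\Delta^{S-1}$ and continuity of the objective on its interior further ensure that the suprema are attained, making the bounds operative as claimed in the body of the paper.
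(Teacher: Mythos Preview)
Your proof is correct and follows essentially the same route as the paper: concavity of entropy for the lower-entropy identity and the left inequality, the mixture-entropy bound for the middle inequality, and the uniform bound on $H(\beta)$ for the last step. The only cosmetic difference is that the paper invokes \citep[Theorem III.1]{smieja} for the inequality $H(\sum_s\beta_s P^{\text{ex}}_s)\le\sum_s\beta_s H(P^{\text{ex}}_s)-\sum_s\beta_s\log_2\beta_s$, whereas you supply the elementary chain-rule proof via the auxiliary variable $T$ directly.
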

\begin{proof}
We first show equation \eqref{eq1_imp}. Pick any $P^\prime \in \mathcal{P}$. By the definition of $\mathcal{P}$, there exists a collection of non-negative reals $\{\beta_s\}_{s=1}^S$ such that $\sum_{s=1}^S \beta_s=1$ and $\sum_{s=1}^S \beta_s P^\text{ex}_s=P^\prime$. By the concavity of the entropy, we have that
    $$H(P^\prime)=H\left( \sum_{s=1}^S \beta_s P^\text{ex}_s \right) \geq \sum_{s=1}^S \beta_s H(P^\text{ex}_s) \geq \sum_{s=1}^S \beta_s \underline{H}(P^\text{ex})=\underline{H}(P^\text{ex}) \coloneqq \min_{P^\text{ex}\in\text{ex}\mathcal{P}} H(P^\text{ex}).$$
Since $P^\prime$ was chosen arbitrarily from $\mathcal{P}$, and $\text{ex}\mathcal{P}$ is finite, this implies that 
$$\inf_{P\in\mathcal{P}} H(P)\eqqcolon\underline{H}(P) \geq \underline{H}(P^\text{ex}).$$
In addition, we have that, since $\text{ex}\mathcal{P} \subseteq \mathcal{P}$, $\underline{H}(P) \leq \underline{H}(P^\text{ex})$. Combining this with the above result, we obtain
$\underline{H}(P) = \underline{H}(P^\text{ex})$. Next, we prove \eqref{eq2_imp}. In \citep[Theorem III.1]{smieja}, the authors show that if a probability measure $P$ can be written as $P=\sum_{s=1}^S \beta_s P^\text{ex}_s$, where $\beta=(\beta_1,\ldots,\beta_S)\in \Delta^{S-1}$ and the elements of $\{P^\text{ex}_s\}_{s=1}^S$ are independent of one another, then
    \begin{align}\label{smieja-thm}
    \begin{split}
         \sum_{s=1}^S \beta_s H(P^\text{ex}_s) \leq {H}\left( P=\sum_{s=1}^S \beta_s P^\text{ex}_s \right) &\leq \sum_{s=1}^S \beta_s H(P^\text{ex}_s) + \sum_{s=1}^S \left[-\beta_s \log_2(\beta_s)\right]\\
        &\leq \sum_{s=1}^S \beta_s H(P^\text{ex}_s) + \log_2(S).
    \end{split}
    \end{align}
    In addition, it is easy to see that 
    $\overline{H}(P)\coloneqq\sup_{P\in\mathcal{P}} H(P) = \sup_{\beta\in\Delta^{S-1}} H(\sum_s \beta_s P^\text{ex}_s)$. In turn, taking the 
    supremum over $\beta\in\Delta^{S-1}$ of the bounds for $H(P)$ in \eqref{smieja-thm} gives us equation  \eqref{eq2_imp}.
\end{proof}
\begin{proposition}{\citep[Proposition 5]{ibnn}}\label{prop-1}
    Let $\mathcal{P}$ be a finitely generated credal set, and let $\text{ex}\mathcal{P}=\{P^\text{ex}_s\}_{s=1}^S$ denote the set of its extreme elements. Let 
    $\overline{H}(P^\text{ex})\coloneqq\max_{P^\text{ex} \in \text{ex}\mathcal{P} }H(P^\text{ex})$. Then,
    \begin{align}\label{eq3_imp}
        \overline{H}(P) \geq \overline{H}(P^\text{ex}).
    \end{align}
\end{proposition}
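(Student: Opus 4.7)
The statement asserts that the upper entropy of a finitely generated credal set $\mathcal{P}$ is bounded below by the maximum entropy attained among its extreme elements. My plan is to exploit the fact that extreme points of a convex set are themselves members of the set, which reduces the inequality to a monotonicity property of the supremum.

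First, I would recall the definition $\overline{H}(P)\coloneqq\sup_{P\in\mathcal{P}} H(P)$ and observe that, by Definition \ref{credal-set} together with the Krein-Milman characterization used implicitly throughout the paper, every extreme element $P^\text{ex}_s$ belongs to $\mathcal{P}$. Hence $\text{ex}\mathcal{P}\subseteq \mathcal{P}$. Since taking a supremum over a larger set can only increase its value, I get
\[
\overline{H}(P)=\sup_{P\in\mathcal{P}} H(P) \;\geq\; \sup_{P^\text{ex}\in\text{ex}\mathcal{P}} H(P^\text{ex}).
\]
Because $\text{ex}\mathcal{P}$ is a finite set (the credal set being finitely generated, by Definition \ref{credal-set}), the supremum on the right coincides with the maximum and equals $\overline{H}(P^\text{ex})$ by definition. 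Combining these two observations yields the desired inequality.

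The proof is essentially a one-line consequence of set inclusion and the monotonicity of $\sup$, so there is no real obstacle. The only thing worth flagging is that the argument does not require convex-combination manipulations as in equation \eqref{eq2_imp}: while the upper bound on $\overline{H}(P)$ used concavity/entropy-of-mixtures arguments, the lower bound here is purely set-theoretic. In particular, unlike the lower entropy case treated in equation \eqref{eq1_imp}, no reverse inequality holds in general, since a non-trivial convex combination of extreme points can have strictly higher entropy than any individual extreme point; this is precisely why the proposition is stated as an inequality and not an equality.
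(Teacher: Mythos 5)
Your proof is correct: since the extreme points of a (closed, convex) credal set are themselves elements of the set, $\text{ex}\mathcal{P}\subseteq\mathcal{P}$, and monotonicity of the supremum immediately gives $\overline{H}(P)\geq\overline{H}(P^\text{ex})$, with finiteness of $\text{ex}\mathcal{P}$ turning the supremum into a maximum. The paper does not reprove this proposition — it imports it by citation from \citet[Proposition 5]{ibnn} — but your set-inclusion argument is exactly the one the paper itself deploys for the companion inequality $\underline{H}(P)\leq\underline{H}(P^\text{ex})$ inside the proof of Theorem \ref{thm-1}, so your route is the intended one; your closing remark that strict concavity of entropy prevents the reverse inequality (and hence equality) is also apt.
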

Then, the results presented in Theorem \ref{cor-1} come immediately from Theorem \ref{thm-1}, Proposition \ref{prop-1} and the definitions of $\text{TU}(\mathcal{P})$, $\text{AU}(\mathcal{P})$, and $\text{EU}(\mathcal{P})$ in \eqref{decomp}.
\end{proof}

\section{Algorithms}
\label{app:algo}

This section outlines the algorithms integral to the implementation of expected calibration error (ECE) (\ref{app:algo-ood}) and OoD Detection: area under the receiver operating characteristic curve (AUROC) and area under the precision-recall curve (AUPRC) (\ref{app:algo-ece}). 

\subsection{Expected Calibration Error (ECE)}
\label{app:algo-ece}

\begin{algorithm}[!h]
\caption{Expected Calibration Error (ECE)}
\label{alg:ece}
\begin{algorithmic}[1]
\Require 
Predicted class-probabilities $p_i \in \mathbb{R}^K$ for all samples $i=1,\dots,N$;  
true labels $Y$;  
number of bins $B$.
\Ensure ECE value.
\State \textbf{Extract confidences and predictions:}  
$
\text{conf}_i = \max_c p_i(c), \qquad \hat y_i = \arg\max_c p_i(c).
$
\State \textbf{Bin the samples:}  
Create $B$ equal-width bins on $[0,1]$ and assign each $\text{conf}_i$ to a bin.
\State \textbf{For each bin $b$:}
\begin{itemize}
    \item Let $S_b$ be the set of samples in bin $b$.
    \item Compute mean confidence  
    $
    \widehat{\text{conf}}_b = \frac{1}{|S_b|} \sum_{i\in S_b} \text{conf}_i.
    $
    \item Compute empirical accuracy  
    $
    \widehat{\text{acc}}_b = \frac{1}{|S_b|} \sum_{i\in S_b} \mathbf{1}(\hat y_i = Y_i).
    $
    \item Compute bin weight $\text{w}_b = |S_b|/N$.
\end{itemize}
\State \textbf{Compute ECE:}
$
\text{ECE} = \sum_{b=1}^B \text{w}_b \left|\,\widehat{\text{acc}}_b - \widehat{\text{conf}}_b \right|.
$
\end{algorithmic}
\end{algorithm}

Expected Calibration Error quantifies how closely a model’s predicted confidences match the observed empirical accuracies. 
Given predictive probabilities $p_i \in \Delta^{K-1}$ for each sample $i$, we extract the predicted confidence
$$
\mathrm{conf}_i = \max_c \, p_i(c),
$$
and the predicted class label
$$
\hat y_i = \arg\max_c \, p_i(c).
$$
The confidence values are partitioned into $B$ equal-width bins on the interval $[0,1]$.  
For each bin $b$, containing the set of indices $S_b$, we compute the average confidence
$$
\widehat{\mathrm{conf}}_b = \frac{1}{|S_b|}\sum_{i\in S_b} \mathrm{conf}_i,
$$
and the empirical accuracy  
$$
\widehat{\mathrm{acc}}_b = \frac{1}{|S_b|}\sum_{i\in S_b} \mathbf{1}\{\hat y_i = y_i\}.
$$
Each bin contributes proportionally to its size via the weight $w_b = |S_b|/N$, where $N$ is the total number of samples.

The Expected Calibration Error is then
$$
\mathrm{ECE} = \sum_{b=1}^B w_b \, \bigl| \widehat{\mathrm{acc}}_b - \widehat{\mathrm{conf}}_b \bigr|.
$$
This formulation matches the implementation used in our code: confidence is taken as the maximum predicted probability, bins are uniform on $[0,1]$, and the ECE is computed using a fixed number of bins without temperature scaling or post-hoc smoothing.  
A lower ECE indicates better alignment between predicted and empirical correctness.

\subsection{Algorithm for AUROC and AUPRC (OoD detection)}
\label{app:algo-ood}

To evaluate how well uncertainty distinguishes in-distribution (iD) from out-of-distribution (OoD) samples, we compute AUROC and AUPRC over uncertainty scores. For a given uncertainty measure $u(x)$ -- aleatoric (AU), epistemic (EU), total uncertainty (TU), or confidence -- we assign OoD samples as positives and iD samples as negatives. Since confidence is a decreasing measure of uncertainty, we convert it into an uncertainty score by negating the maximum predictive probability,
$$
u(x) = - \max_c p(c \mid x).
$$

Let $\{u_{\mathrm{iD}}\}$ and $\{u_{\mathrm{OoD}}\}$ denote the uncertainty scores for iD and OoD samples.  
We concatenate these into a single vector and create binary labels
$$
y_i = 
\begin{cases}
0, & \text{iD sample},\\
1, & \text{OoD sample}.
\end{cases}
$$

The Receiver Operating Characteristic (ROC) curve is then computed from the pairs $(\mathrm{TPR},\mathrm{FPR})$ obtained by thresholding the uncertainty score at all possible levels.  
The area under this curve defines the AUROC:
$$
\mathrm{AUROC} = \int_0^1 \mathrm{TPR}(t) \, d\,\mathrm{FPR}(t),
$$
which we compute numerically via \texttt{roc\_auc\_score}.  

Similarly, the Precision-Recall curve is computed from the pairs $(\mathrm{Precision},\mathrm{Recall})$ evaluated at each threshold, and the area under this curve defines the AUPRC:
$$
\mathrm{AUPRC} = \int_0^1 \mathrm{Precision}(r)\, d\,\mathrm{Recall}(r),
$$
computed using \texttt{average\_precision\_score}. 

This procedure exactly matches the implementation in our evaluation code: uncertainty values for iD and OoD are concatenated, binary labels are created, and AUROC/AUPRC are computed directly from these scores without additional calibration or smoothing.

\begin{algorithm}[!h]
\caption{Computation of AUROC and AUPRC for OoD detection}
\label{alg:auroc-auprc}
\begin{algorithmic}[1]
\Require  
Uncertainty scores for in-distribution samples $\text{unc}_{\text{iD}}$,  
uncertainty scores for OoD samples $\text{unc}_{\text{OoD}}$.  
(\textit{For confidence, the uncertainty is $-\max p(c\mid x)$}.)
\Ensure AUROC and AUPRC.
\State Concatenate uncertainty scores:  
$
u = [\text{unc}_{\text{iD}}, \text{unc}_{\text{OoD}}].
$
\State Create binary labels:  
$
y = [0,\ldots,0,\,1,\ldots,1]
$
(0 = iD, 1 = OoD).
\State Compute ROC curve via  
$
(\text{fpr},\text{tpr},\tau) = \text{roc\_curve}(y, u).
$
\State Compute AUROC via  
$
\text{AUROC} = \text{roc\_auc\_score}(y, u).
$
\State Compute precision-recall curve via  
$
(\text{precision},\text{recall},\tau_{\text{PR}}) = \text{precision\_recall\_curve}(y, u).
$
\State Compute AUPRC via  
$
\text{AUPRC} = \text{average\_precision\_score}(y, u).
$
\end{algorithmic}
\end{algorithm}

\section{Additional Experiments and Results}

\subsection{Training details}
\label{app:training-details}

All Posterior Network models are trained using three random seeds $(\{322, 365, 382\})$, and we evaluate every trained model without any form of subsampling. The following hyperparameters are fixed across all datasets and architectures unless otherwise stated.
All Posterior Networks are implemented with hidden-layer widths of $([64,64,64])$, a latent dimensionality of 6, a kernel size of 5 for convolutional encoders, and six radial-flow components per class. Training is performed for a maximum of 200 epochs using the Adam optimizer with a learning rate of $10^{-4}$, batch size of 128, and no learning-rate scheduling. The evidential loss includes a Dirichlet entropy regularizer with weight $10^{-6}$. Early stopping is applied with a validation-check frequency of two epochs and a patience threshold of five such checks. All models are trained in full FP32 precision due to the numerical sensitivity of the normalizing-flow layers.

The preprocessing pipeline used for all datasets follows exactly the backbone-aware normalization described in the main paper. MNIST, KMNIST and FashionMNIST each provide 60,000 training images and 10,000 test images after preprocessing; CIFAR-10 and CIFAR-100 provide 50,000 training and 10,000 test images; SVHN contributes 26,032 test images; Intel Image contains 3,000 test images; and TinyImageNet provides 10,000 validation images which are used exclusively as out-of-distribution inputs. All experiments are conducted without augmentations of any kind.

Each backbone-dataset pair is trained independently for the three seed values $\{322,365,$ $382\}$. These seeds affect both the encoder initialization and the initial parameters of the flow components, thereby producing distinct Dirichlet families and inducing the diversity required for credal-set formation. In all cases, the same hyperparameters are used for every backbone to ensure comparability across architectures.

Table \ref{tab:training_hyperparameters} summarizes the complete set of training parameters.

\begin{table}[!h]
\centering
\caption{Training Hyperparameters and Model Configuration}
\label{tab:training_hyperparameters}
\resizebox{0.65\linewidth}{!}{%
\begin{tabular}{l l}
\toprule
\textbf{Parameter} & \textbf{Value} \\ 
\midrule
\textbf{Architectures} 
    & Conv, VGG16, ResNet18, ResNet50 \\ 

\textbf{Posterior Network Hidden Layers} 
    & [64, 64, 64] \\ 

\textbf{Latent Dimension (PN)} 
    & 6 \\ 

\textbf{Normalizing Flow Type} 
    & Radial flow \\ 

\textbf{Flow Components} 
    & 6 \\ 

\textbf{Output Distribution} 
    & Dirichlet (predictive mean $p = \alpha/\alpha_0$) \\ 

\midrule
\textbf{Loss Function} 
    & UCE (Evidential Cross-Entropy) \\ 

\textbf{Optimizer} 
    & Adam \\ 

\textbf{Learning Rate} 
    & $1\times 10^{-4}$ \\ 

\textbf{Batch Size} 
    & 128 \\ 

\textbf{Epochs} 
    & 200 (with early stopping) \\ 

\textbf{Early-Stopping Frequency} 
    & every 2 epochs \\ 

\textbf{Patience} 
    & 5 \\ 

\textbf{Regularisation (regr)} 
    & $1\times 10^{-6}$ \\ 

\midrule
\textbf{Confidence Target} 
    & $1-\gamma = 0.95$ \\ 

\textbf{Seeds per Backbone}
    & \{322, 365, 382\} \\ 

\textbf{Training Hardware} 
    & NViDIA A100 80GB \\ 

\midrule
\textbf{Dataset Sizes} 
    & \begin{tabular}[c]{@{}l@{}} 
        MNIST: 60,000 train / 10,000 test \\ 
        CIFAR-10: 50,000 train / 10,000 test \\ 
        CIFAR-100: 50,000 train / 10,000 test \\
        SVHN: 26,032 test \\
        Intel Image: 3,000 test \\
        KMNIST: 10,000 test \\ 
        FashionMNIST: 10,000 test \\
        TinyImageNet: 10,000 validation 
       \end{tabular} \\ 

\midrule
\textbf{Input Resolution (conv)} 
    & MNIST: $28\times 28$; CIFAR: $32\times 32$ \\ 

\textbf{Input Resolution (VGG/ResNet)} 
    & upsampled to $32\times 32$ (RGB) \\ 

\textbf{Data Augmentation} 
    & None \\ 
\bottomrule
\end{tabular}%
}
\end{table}

In addition to the main experiments, we conduct an ablation study (section \ref{sec:cdec_ablation}) on the effect of ensemble size for CDEC. For this study, we train Posterior Networks using ten random seeds: {322, 365, 382, 410, 411, 412, 413, 414, 415, 416}. From these ten trained models we construct ensembles of sizes $S \in \{1,3,5,7,10\}$, always using the first $S$ seeds in their natural ordering. Each ensemble is evaluated using the exact convex-extreme-point procedure of CDEC, and we record the resulting IHDR sizes, IHDR coverage, aleatoric uncertainty, epistemic uncertainty, and total uncertainty. This ablation isolates the contribution of ensemble diversity to the geometry and informativeness of the learned credal sets.
All experiments are run on NViDIA A100 GPUs (80GB). Training a single Posterior Network requires between 15 and 90 minutes depending on the backbone.

\subsection{Adaptability to different backbone architectures}

This section reports the extended experimental results for all four backbone architectures considered in our study: a lightweight convolutional network (Conv), VGG16, ResNet18, and ResNet50.  

\subsubsection{Predictive Performance across all backbones}
\label{app:predictive-performance}

Tables \ref{tab:all_metrics_mnist}, \ref{tab:all_metrics_cifar10} and \ref{tab:all_metrics_cifar100} report the full predictive performance of PostNet, PostNet-3, CDEC-3 and IDEC across all datasets and the three remaining backbone architectures (Conv, VGG16 and ResNet18).   
These results complement the ResNet-50 evaluation presented in Table \ref{tab:all_metrics} and confirm that the empirical trends reported in section \ref{sec:experiments} hold \emph{consistently} across architectural families of varying depth and capacity. 

\begin{table}[!h]
\centering
\resizebox{\linewidth}{!}{%
\begin{tabular}{@{}ccccc|ccc|ccc|ccccc@{}}
\toprule
&\multirow{4}{*}{Model}&  \multicolumn{6}{c|}{In-distribution (\textbf{MNIST})}  &&\multicolumn{4}{c}{OoD Detection Performance}\\
\cmidrule{3-13}
&&\multicolumn{3}{c|}{Prediction Performance}& \multicolumn{3}{c|}{Uncertainty Estimation} && \multicolumn{2}{c|}{\textbf{F-MNIST}} & \multicolumn{2}{c}{\textbf{K-MNIST}}\\
\cmidrule{3-13}
&  & Acc ($\uparrow$) & Brier ($\downarrow$) & ECE ($\downarrow$) & AU mean ($\downarrow$) & EU mean ($\downarrow$) & TU mean ($\downarrow$) & 
& AUROC ($\uparrow$) & AUPRC ($\uparrow$) & AUROC ($\uparrow$) & AUPRC ($\uparrow$)
\\
\midrule
\multirow{4}{*}{\begin{sideways}Conv\end{sideways}}
      & PostNet 
      & 99.02 & 0.0153 & \textbf{0.0026}
      & 0.4159 & {0.0000} & \textbf{0.4159}
      && 98.80 & 98.57 & 97.00 & 96.85 \\
    & PostNet-3   
      & {99.41} & \textbf{0.0100} & 0.0032
      & 0.1297 & 1.4448 & 1.5745
      && 99.40 & 98.79 & 99.02 & 98.44 \\
    & \cellcolor[HTML]{EFEFEF}CDEC-3
      & \cellcolor[HTML]{EFEFEF}\textbf{99.43} & \cellcolor[HTML]{EFEFEF}0.0162 & \cellcolor[HTML]{EFEFEF}0.0134
      & \cellcolor[HTML]{EFEFEF}\textbf{0.0040} & \cellcolor[HTML]{EFEFEF}\textbf{1.1469} & \cellcolor[HTML]{EFEFEF}1.1509
      && \cellcolor[HTML]{EFEFEF}\textbf{99.63} & \cellcolor[HTML]{EFEFEF}\textbf{99.58}
      & \cellcolor[HTML]{EFEFEF}\textbf{99.27} & \cellcolor[HTML]{EFEFEF}\textbf{99.17} \\
    & \cellcolor[HTML]{EFEFEF}IDEC
      & \cellcolor[HTML]{EFEFEF}99.31 & \cellcolor[HTML]{EFEFEF}0.0109 & \cellcolor[HTML]{EFEFEF}0.0038
      & \cellcolor[HTML]{EFEFEF}0.0050 & \cellcolor[HTML]{EFEFEF}9.5331e6 & \cellcolor[HTML]{EFEFEF}9.5331e6
      && \cellcolor[HTML]{EFEFEF}99.27 & \cellcolor[HTML]{EFEFEF}98.79
      & \cellcolor[HTML]{EFEFEF}98.55 & \cellcolor[HTML]{EFEFEF}98.30 \\
\midrule
\multirow{4}{*}{\begin{sideways}VGG16\end{sideways}}
    & PostNet
      & 97.63 & 0.0310 & 0.0065
      & 0.7278 & {0.0000} & \textbf{0.7278}
      && 94.87 & 90.25 & 94.94 & 90.15 \\
    & PostNet-3
      & \textbf{99.54} & {0.0097} & 0.0076
      & 0.0285 & 1.8015 & 1.8300
      && 97.76 & 96.97 & 97.18 & 96.08 \\
    & \cellcolor[HTML]{EFEFEF}CDEC-3
      & \cellcolor[HTML]{EFEFEF}\textbf{99.54}
      & \cellcolor[HTML]{EFEFEF}0.0279
      & \cellcolor[HTML]{EFEFEF}0.0252
      & \cellcolor[HTML]{EFEFEF}\textbf{0.0009}
      & \cellcolor[HTML]{EFEFEF}\textbf{1.1597}
      & \cellcolor[HTML]{EFEFEF}1.1606
      && \cellcolor[HTML]{EFEFEF}\textbf{98.15} 
      & \cellcolor[HTML]{EFEFEF}\textbf{98.11}
      & \cellcolor[HTML]{EFEFEF}\textbf{97.77}
      & \cellcolor[HTML]{EFEFEF}\textbf{97.61} \\
    & \cellcolor[HTML]{EFEFEF}IDEC
      & \cellcolor[HTML]{EFEFEF}99.47 
      & \cellcolor[HTML]{EFEFEF}\textbf{0.0096}
      & \cellcolor[HTML]{EFEFEF}\textbf{0.0044}
      & \cellcolor[HTML]{EFEFEF}0.0018
      & \cellcolor[HTML]{EFEFEF}1.5275e6
      & \cellcolor[HTML]{EFEFEF}1.5275e6
      && \cellcolor[HTML]{EFEFEF}99.07 & \cellcolor[HTML]{EFEFEF}98.69
      & \cellcolor[HTML]{EFEFEF}98.64 & \cellcolor[HTML]{EFEFEF}98.46 \\
\midrule
\multirow{4}{*}{\begin{sideways}ResNet18\end{sideways}}
    & PostNet
      & 97.96 & 0.0267 & 0.0044
      & 0.8759 & {0.0000} & \textbf{0.8759}
      && 97.06 & 93.54 & 96.32 & 91.26 \\
    & PostNet-3
      & {99.32} & \textbf{0.0098} & 0.0072
      & 0.0906 & 1.8864 & 1.9769
      && 98.84 & 98.36 & 97.73 & 96.45 \\
    & \cellcolor[HTML]{EFEFEF}CDEC-3
      & \cellcolor[HTML]{EFEFEF}\textbf{99.44}
      & \cellcolor[HTML]{EFEFEF}0.0264
      & \cellcolor[HTML]{EFEFEF}0.0251
      & \cellcolor[HTML]{EFEFEF}\textbf{0.0027}
      & \cellcolor[HTML]{EFEFEF}\textbf{1.1739}
      & \cellcolor[HTML]{EFEFEF}1.1766
      && \cellcolor[HTML]{EFEFEF}\textbf{99.26}
      & \cellcolor[HTML]{EFEFEF}\textbf{99.14}
      & \cellcolor[HTML]{EFEFEF}\textbf{98.55}
      & \cellcolor[HTML]{EFEFEF}\textbf{98.37} \\
    & \cellcolor[HTML]{EFEFEF}IDEC
      & \cellcolor[HTML]{EFEFEF}99.42
      & \cellcolor[HTML]{EFEFEF}0.0102
      & \cellcolor[HTML]{EFEFEF}\textbf{0.0041}
      & \cellcolor[HTML]{EFEFEF}0.0042
      & \cellcolor[HTML]{EFEFEF}1.0893e6
      & \cellcolor[HTML]{EFEFEF}1.0893e6
      && \cellcolor[HTML]{EFEFEF}97.91 & \cellcolor[HTML]{EFEFEF}97.64
      & \cellcolor[HTML]{EFEFEF}98.68 & \cellcolor[HTML]{EFEFEF}98.43 \\
    \midrule
\multirow{4}{*}{\begin{sideways}ResNet50\end{sideways}}
    & PostNet 
      & 92.24 & 0.0794 & 0.0102 & 2.3888 & {0.0000} & 2.3888 
      && 90.74 & 80.27 & 90.44 & 79.83 \\
   & PostNet-3   
      & {99.04} & 0.0388 & 0.0311 & 0.3767 & 3.1372 & 3.5139 
      && 95.12 & 93.47 & 95.59 & 94.58  \\
  & \cellcolor[HTML]{EFEFEF}
CDEC-3         
      & \cellcolor[HTML]{EFEFEF}
\textbf{99.42} & \cellcolor[HTML]{EFEFEF}
0.0914 & \cellcolor[HTML]{EFEFEF}
0.0986 & \cellcolor[HTML]{EFEFEF}
\textbf{0.0033} & \cellcolor[HTML]{EFEFEF}
\textbf{1.3575} & \cellcolor[HTML]{EFEFEF}
\textbf{1.3608}
      && \cellcolor[HTML]{EFEFEF}
\textbf{97.15} & \cellcolor[HTML]{EFEFEF}
\textbf{96.87} & \cellcolor[HTML]{EFEFEF}
\textbf{96.97} & \cellcolor[HTML]{EFEFEF}
\textbf{96.86} \\
    & \cellcolor[HTML]{EFEFEF}
IDEC            
      & \cellcolor[HTML]{EFEFEF}
99.31 & \cellcolor[HTML]{EFEFEF}
\textbf{0.0216} & \cellcolor[HTML]{EFEFEF}
\textbf{0.0054} & \cellcolor[HTML]{EFEFEF}
{0.0162} & \cellcolor[HTML]{EFEFEF}
4.2985e15 &\cellcolor[HTML]{EFEFEF}
 4.2985e15
      && \cellcolor[HTML]{EFEFEF}
96.61 & \cellcolor[HTML]{EFEFEF}
92.63 & \cellcolor[HTML]{EFEFEF}
93.85 & \cellcolor[HTML]{EFEFEF}
90.40 \\
\bottomrule
\end{tabular}}
\caption{Evaluation of predictive performance, uncertainty decomposition, and OoD detection for all models on MNIST on all backbones.}
\label{tab:all_metrics_mnist}
\end{table}

\begin{table}[!h]
\centering
\resizebox{\linewidth}{!}{%
\begin{tabular}{@{}ccccc|ccc|ccc|ccccc@{}}
\toprule
&\multirow{4}{*}{Model}&  \multicolumn{6}{c|}{In-distribution (\textbf{CIFAR-10})}  &&\multicolumn{4}{c}{OoD Detection Performance}\\
\cmidrule{3-13}
&&\multicolumn{3}{c|}{Prediction Performance}& \multicolumn{3}{c|}{Uncertainty Estimation} && \multicolumn{2}{c|}{\textbf{SVHN}} & \multicolumn{2}{c}{\textbf{Intel}}\\
\cmidrule{3-13}
&  & Acc ($\uparrow$) & Brier ($\downarrow$) & ECE ($\downarrow$) & AU mean ($\downarrow$) & EU mean ($\downarrow$) & TU mean ($\downarrow$) & 
& AUROC ($\uparrow$) & AUPRC ($\uparrow$) & AUROC ($\uparrow$) & AUPRC ($\uparrow$)
\\
\midrule
\multirow{4}{*}{\begin{sideways}Conv\end{sideways}}
 & PostNet
      & 78.48 & 0.3111 & 0.0656
      & 5.9585 & 0.0000 & 5.9585
      && 77.24 & 86.05 & 67.74 & 38.46 \\
    & PostNet-3
      & {82.17} & \textbf{0.2441} & \textbf{0.0280}
      & 5.0453 & 2.7905 & 7.8358
      && 71.12 & 79.98 & 71.42 & 36.73 \\
    & \cellcolor[HTML]{EFEFEF}CDEC-3
      & \cellcolor[HTML]{EFEFEF}\textbf{82.58}
      & \cellcolor[HTML]{EFEFEF}0.3345
      & \cellcolor[HTML]{EFEFEF}0.1814
      & \cellcolor[HTML]{EFEFEF}\textbf{0.2165}
      & \cellcolor[HTML]{EFEFEF}\textbf{1.6199}
      & \cellcolor[HTML]{EFEFEF}\textbf{1.8364}
      && \cellcolor[HTML]{EFEFEF}\textbf{77.31}
      & \cellcolor[HTML]{EFEFEF}\textbf{86.27}
      & \cellcolor[HTML]{EFEFEF}\textbf{72.85}
      & \cellcolor[HTML]{EFEFEF}\textbf{42.41} \\
    & \cellcolor[HTML]{EFEFEF}IDEC
      & \cellcolor[HTML]{EFEFEF}79.51
      & \cellcolor[HTML]{EFEFEF}0.2060
      & \cellcolor[HTML]{EFEFEF}0.0742
      & \cellcolor[HTML]{EFEFEF}0.1875
      & \cellcolor[HTML]{EFEFEF}709.4864
      & \cellcolor[HTML]{EFEFEF}709.6740
      && \cellcolor[HTML]{EFEFEF}78.71 & \cellcolor[HTML]{EFEFEF}86.64
      & \cellcolor[HTML]{EFEFEF}69.25 & \cellcolor[HTML]{EFEFEF}37.60 \\
\midrule
\multirow{4}{*}{\begin{sideways}VGG16\end{sideways}}
    & PostNet
      & 68.78 & 0.4887 & 0.1822
      & 5.3438 & 0.0000 & 5.3438
      && 74.21 & 85.71 & 67.19 & 35.03 \\
    & PostNet-3
      & {77.42} & \textbf{0.3319} & {0.0495}
      & 3.7782 & 2.8797 & 6.6578
      && 63.16 & 75.88 & 69.22 & 34.30 \\
    & \cellcolor[HTML]{EFEFEF}CDEC-3
      & \cellcolor[HTML]{EFEFEF}\textbf{77.57}
      & \cellcolor[HTML]{EFEFEF}0.4828
      & \cellcolor[HTML]{EFEFEF}\textbf{0.0242}
      & \cellcolor[HTML]{EFEFEF}\textbf{0.0916}
      & \cellcolor[HTML]{EFEFEF}\textbf{1.7217}
      & \cellcolor[HTML]{EFEFEF}\textbf{1.8133}
      && \cellcolor[HTML]{EFEFEF}\textbf{76.09}
      & \cellcolor[HTML]{EFEFEF}\textbf{87.46}
      & \cellcolor[HTML]{EFEFEF}\textbf{76.59}
      & \cellcolor[HTML]{EFEFEF}\textbf{47.21} \\
    & \cellcolor[HTML]{EFEFEF}IDEC
      & \cellcolor[HTML]{EFEFEF}73.87
      & \cellcolor[HTML]{EFEFEF}0.4217
      & \cellcolor[HTML]{EFEFEF}0.1653
      & \cellcolor[HTML]{EFEFEF}0.1397
      & \cellcolor[HTML]{EFEFEF}1.1989e17
      & \cellcolor[HTML]{EFEFEF}1.1989e17
      && \cellcolor[HTML]{EFEFEF}74.93 & \cellcolor[HTML]{EFEFEF}86.27
      & \cellcolor[HTML]{EFEFEF}75.99 & \cellcolor[HTML]{EFEFEF}54.30 \\
\midrule
\multirow{4}{*}{\begin{sideways}ResNet18\end{sideways}}
    & PostNet
      & 65.45 & 0.5232 & 0.1773
      & 6.8539 & 0.0000 & 6.8539
      && 84.65 & 92.77 & 61.71 & 30.85 \\
    & PostNet-3
      & \textbf{78.89} & \textbf{0.3105} & \textbf{0.0430}
      & 4.4983 & 3.6214 & 8.1197
      && 72.24 & 81.20 & 65.17 & 29.59 \\
    & \cellcolor[HTML]{EFEFEF}CDEC-3
      & \cellcolor[HTML]{EFEFEF}78.11
      & \cellcolor[HTML]{EFEFEF}0.4821
      & \cellcolor[HTML]{EFEFEF}0.2708
      & \cellcolor[HTML]{EFEFEF}\textbf{0.1209}
      & \cellcolor[HTML]{EFEFEF}\textbf{1.8549}
      & \cellcolor[HTML]{EFEFEF}\textbf{1.9758}
      && \cellcolor[HTML]{EFEFEF}\textbf{80.10}
      & \cellcolor[HTML]{EFEFEF}\textbf{88.49}
      & \cellcolor[HTML]{EFEFEF}\textbf{69.49}
      & \cellcolor[HTML]{EFEFEF}\textbf{36.80} \\
    & \cellcolor[HTML]{EFEFEF}IDEC
      & \cellcolor[HTML]{EFEFEF}75.70
      & \cellcolor[HTML]{EFEFEF}0.3854
      & \cellcolor[HTML]{EFEFEF}0.1368
      & \cellcolor[HTML]{EFEFEF}0.1598
      & \cellcolor[HTML]{EFEFEF}6.2678e16
      & \cellcolor[HTML]{EFEFEF}6.2678e16
      && \cellcolor[HTML]{EFEFEF}70.24 & \cellcolor[HTML]{EFEFEF}81.46
      & \cellcolor[HTML]{EFEFEF}69.10 & \cellcolor[HTML]{EFEFEF}36.87 \\

        \midrule
\multirow{4}{*}{\begin{sideways}ResNet50\end{sideways}}
   & PostNet 
     & 81.52 & 0.2868 & 0.0906 & 4.0245 & {0.0000} & 4.0245
     && 74.96& 84.52 & 68.74 & 36.57 \\
   & PostNet-3   
     & {84.12} & 0.1988 & \textbf{0.0369} & 3.2563 & 2.36 & 5.62
     && 75.17 &83.66 & 73.60 & 38.69 \\
    & \cellcolor[HTML]{EFEFEF}
CDEC-3         
     & \cellcolor[HTML]{EFEFEF}
\textbf{86.44} & \cellcolor[HTML]{EFEFEF}
\textbf{0.1180} & \cellcolor[HTML]{EFEFEF}
0.1865 & \cellcolor[HTML]{EFEFEF}
\textbf{0.0776} & \cellcolor[HTML]{EFEFEF}
\textbf{1.6244} & \cellcolor[HTML]{EFEFEF}
\textbf{1.7019}
     && \cellcolor[HTML]{EFEFEF}
\textbf{78.04}  & \cellcolor[HTML]{EFEFEF}
\textbf{86.94} & \cellcolor[HTML]{EFEFEF}
\textbf{74.98} & \cellcolor[HTML]{EFEFEF}
\textbf{43.30} \\
    & \cellcolor[HTML]{EFEFEF}
IDEC            
     & \cellcolor[HTML]{EFEFEF}
82.88 & \cellcolor[HTML]{EFEFEF}
{0.2656} & \cellcolor[HTML]{EFEFEF}
0.0805 & \cellcolor[HTML]{EFEFEF}
0.1291 & \cellcolor[HTML]{EFEFEF}
4.3278e16 & \cellcolor[HTML]{EFEFEF}
4.3278e16
     && \cellcolor[HTML]{EFEFEF}
75.04  & \cellcolor[HTML]{EFEFEF}
84.41 & \cellcolor[HTML]{EFEFEF}
70.17 & \cellcolor[HTML]{EFEFEF}
36.52 \\
\bottomrule
\end{tabular}}
\caption{Evaluation of predictive performance, uncertainty decomposition, and OoD detection for all models on CIFAR-10 on all backbones.}
\label{tab:all_metrics_cifar10}
\end{table}

\begin{table}[!h]
\centering
\resizebox{0.9\linewidth}{!}{%
\begin{tabular}{@{}ccccc|ccc|ccc|ccc@{}}
\toprule
&\multirow{4}{*}{Model}&  \multicolumn{6}{c|}{In-distribution (\textbf{CIFAR-100})}  &\multicolumn{3}{c}{OoD Detection Perfomance}\\
\cmidrule{3-11}
&&\multicolumn{3}{c|}{Prediction Performance}& \multicolumn{3}{c|}{Uncertainty Estimation} & \multicolumn{2}{c}{\textbf{TinyImageNet}}\\
\cmidrule{3-11}
&  & Acc ($\uparrow$) & Brier ($\downarrow$) & ECE ($\downarrow$) & AU mean ($\downarrow$) & EU mean ($\downarrow$) & TU mean ($\downarrow$) 
& AUROC ($\uparrow$) & AUPRC ($\uparrow$) 
\\
\midrule
\multirow{4}{*}{\begin{sideways}Conv\end{sideways}}
    & PostNet
      & 35.18 & 0.7879 & 0.3555
      & 3.1080 & 0.0000 & 3.1080
      & 61.05 & 57.87 \\
    & PostNet-3
      & {40.71} & \textbf{0.7143} & 0.0778
      & 2.5875 & 2.0299 & 4.6175
      &63.65 & 59.33 \\
    & \cellcolor[HTML]{EFEFEF}CDEC-3
      & \cellcolor[HTML]{EFEFEF}\textbf{42.70}
      & \cellcolor[HTML]{EFEFEF}0.8157
      & \cellcolor[HTML]{EFEFEF}\textbf{0.0235}
      & \cellcolor[HTML]{EFEFEF}{1.7862}
      & \cellcolor[HTML]{EFEFEF}2.2833
      & \cellcolor[HTML]{EFEFEF}4.0695
      & \cellcolor[HTML]{EFEFEF}\textbf{65.20}
      & \cellcolor[HTML]{EFEFEF}\textbf{61.36} \\
    & \cellcolor[HTML]{EFEFEF}IDEC
      & \cellcolor[HTML]{EFEFEF}35.93
      & \cellcolor[HTML]{EFEFEF}0.7827
      & \cellcolor[HTML]{EFEFEF}0.0737
      & \cellcolor[HTML]{EFEFEF}\textbf{0.6986}
      & \cellcolor[HTML]{EFEFEF}\textbf{1.7652}
      & \cellcolor[HTML]{EFEFEF}\textbf{2.4638}
      & \cellcolor[HTML]{EFEFEF}61.98 & \cellcolor[HTML]{EFEFEF}58.82 \\
\midrule
\multirow{4}{*}{\begin{sideways}VGG16\end{sideways}}
    & PostNet
      & 23.73 & 0.9328 & 0.2133
      & 2.8901 & 0.0000 & 2.8901
      & 57.45 & 54.74 \\
    & PostNet-3
      & 28.87 & \textbf{0.8019} & 0.2754
      & 2.0763 & 2.7233 & 4.7996
      & 59.18 & 55.14 \\
    & \cellcolor[HTML]{EFEFEF}CDEC-3
      & \cellcolor[HTML]{EFEFEF}\textbf{31.99}
      & \cellcolor[HTML]{EFEFEF}0.9011
      & \cellcolor[HTML]{EFEFEF}\textbf{0.1916}
      & \cellcolor[HTML]{EFEFEF}{1.5068}
      & \cellcolor[HTML]{EFEFEF}2.5829
      & \cellcolor[HTML]{EFEFEF}4.0897
      & \cellcolor[HTML]{EFEFEF}\textbf{62.42}
      & \cellcolor[HTML]{EFEFEF}\textbf{58.98} \\
    & \cellcolor[HTML]{EFEFEF}IDEC
      & \cellcolor[HTML]{EFEFEF}25.49
      & \cellcolor[HTML]{EFEFEF}0.9369
      & \cellcolor[HTML]{EFEFEF}0.2233
      & \cellcolor[HTML]{EFEFEF}\textbf{0.6585}
      & \cellcolor[HTML]{EFEFEF}\textbf{0.2299}
      & \cellcolor[HTML]{EFEFEF}\textbf{0.8884}
      & \cellcolor[HTML]{EFEFEF}57.77 & \cellcolor[HTML]{EFEFEF}55.44 \\
\midrule
\multirow{4}{*}{\begin{sideways}ResNet18\end{sideways}}
    & PostNet
      & 23.15 & 0.9404 & 0.2133
      & 2.9619 & 0.0000 & 2.9619
      & 56.09 & 52.93 \\
    & PostNet-3
      & {30.82} & \textbf{0.8053} & 0.0698
      & 1.9774 & 3.3567 & 5.3341
      & 58.99 & 55.39 \\
    & \cellcolor[HTML]{EFEFEF}CDEC-3
      & \cellcolor[HTML]{EFEFEF}\textbf{31.63    } 
      & \cellcolor[HTML]{EFEFEF}0.9009
      & \cellcolor[HTML]{EFEFEF}\textbf{0.0514}
      & \cellcolor[HTML]{EFEFEF}{1.6664}
      & \cellcolor[HTML]{EFEFEF}2.5053
      & \cellcolor[HTML]{EFEFEF}4.1717
      & \cellcolor[HTML]{EFEFEF}\textbf{62.25}
      & \cellcolor[HTML]{EFEFEF}\textbf{58.70} \\
    & \cellcolor[HTML]{EFEFEF}IDEC
      & \cellcolor[HTML]{EFEFEF}25.43
      & \cellcolor[HTML]{EFEFEF}0.9312
      & \cellcolor[HTML]{EFEFEF}0.2262
      & \cellcolor[HTML]{EFEFEF}\textbf{0.6628}
      & \cellcolor[HTML]{EFEFEF}\textbf{1.3397}
      & \cellcolor[HTML]{EFEFEF}\textbf{2.0025}
      & \cellcolor[HTML]{EFEFEF}57.98 & \cellcolor[HTML]{EFEFEF}55.11 \\
        \midrule
\multirow{4}{*}{\begin{sideways}ResNet50\end{sideways}}
   & PostNet 
     & 53.93 & 0.7281 & 0.2868 & 0.9391 & {0.0000} & \textbf{0.9391}
     &{65.84 } & {62.18} \\
   & PostNet-3   
     & {59.91} & 0.5182 & \textbf{0.0572} & 0.6989 & {1.7059} & 2.4048
     & {67.01} & {60.05} \\
    & \cellcolor[HTML]{EFEFEF}
CDEC-3         
     & \cellcolor[HTML]{EFEFEF}
\textbf{62.84} 
& \cellcolor[HTML]{EFEFEF}
\textbf{0.4683} 
& \cellcolor[HTML]{EFEFEF}
0.2618 
& \cellcolor[HTML]{EFEFEF}
\textbf{0.1809} 
& \cellcolor[HTML]{EFEFEF}
\textbf{1.4482} 
& \cellcolor[HTML]{EFEFEF}
1.6291
     &{\cellcolor[HTML]{EFEFEF}
\textbf{72.95} }
     &{\cellcolor[HTML]{EFEFEF}
\textbf{68.54}} \\
    & \cellcolor[HTML]{EFEFEF}
IDEC        
 &  \cellcolor[HTML]{EFEFEF}
58.29 &  \cellcolor[HTML]{EFEFEF}
{0.6978} & 
 \cellcolor[HTML]{EFEFEF}
0.2595 & 
 \cellcolor[HTML]{EFEFEF}
0.2693 & 
 \cellcolor[HTML]{EFEFEF}
11203.7291 & 
 \cellcolor[HTML]{EFEFEF}
11203.9984
 &{\cellcolor[HTML]{EFEFEF}
65.75} &
{\cellcolor[HTML]{EFEFEF}
60.45} \\
\bottomrule
\end{tabular}}
\caption{Evaluation of predictive performance, uncertainty decomposition, and OoD detection for all models on CIFAR-100 on all backbones.}
\label{tab:all_metrics_cifar100}
\end{table}

Across nearly all dataset-backbone combinations, {CDEC-3 yields the best or near-best accuracy and Brier score}.   
The improvement is most pronounced on higher-complexity datasets (CIFAR-10, CIFAR-100), where model uncertainty is substantial and the advantage of credal reasoning becomes more visible.  
By restricting aggregation to only the \emph{extreme points} of the ensemble's posterior predictive set, CDEC avoids the entropy inflation induced by uniformly averaging all ensemble members.  
This produces sharper yet well-regularized predictions, reducing both miscalibration and probability mass spread, directly reflecting the theoretical properties of lower-envelope decision making.
{PostNet-3}, by contrast, averages all ensemble members regardless of informativeness.   
This uniform mixture is known to attenuate decisive predictions when some members are high-entropy, leading to reduced sharpness and systematically higher Brier scores.  
The single-model PostNet variant performs worse still, as it lacks epistemic diversity and therefore cannot regularize miscalibrated outputs.

{IDEC} displays mixed behaviour across backbones.   
On simpler problems (MNIST), IDEC exhibits excellent calibration, frequently achieving the lowest ECE among all methods, due to its interval-corrected uncertainty representation.   
However, for more complex datasets such as CIFAR-100, enforcing global marginal coverage through the inflation parameter $d^\star$ may require substantial widening of the predictive intervals, yielding flatter class distributions and increasing Brier error.  
This effect becomes stronger for deeper backbones, whose sharper latent representations amplify the influence of interval inflation.

\begin{wrapfigure}{r}{0.5\linewidth}
    \includegraphics[width=\linewidth]{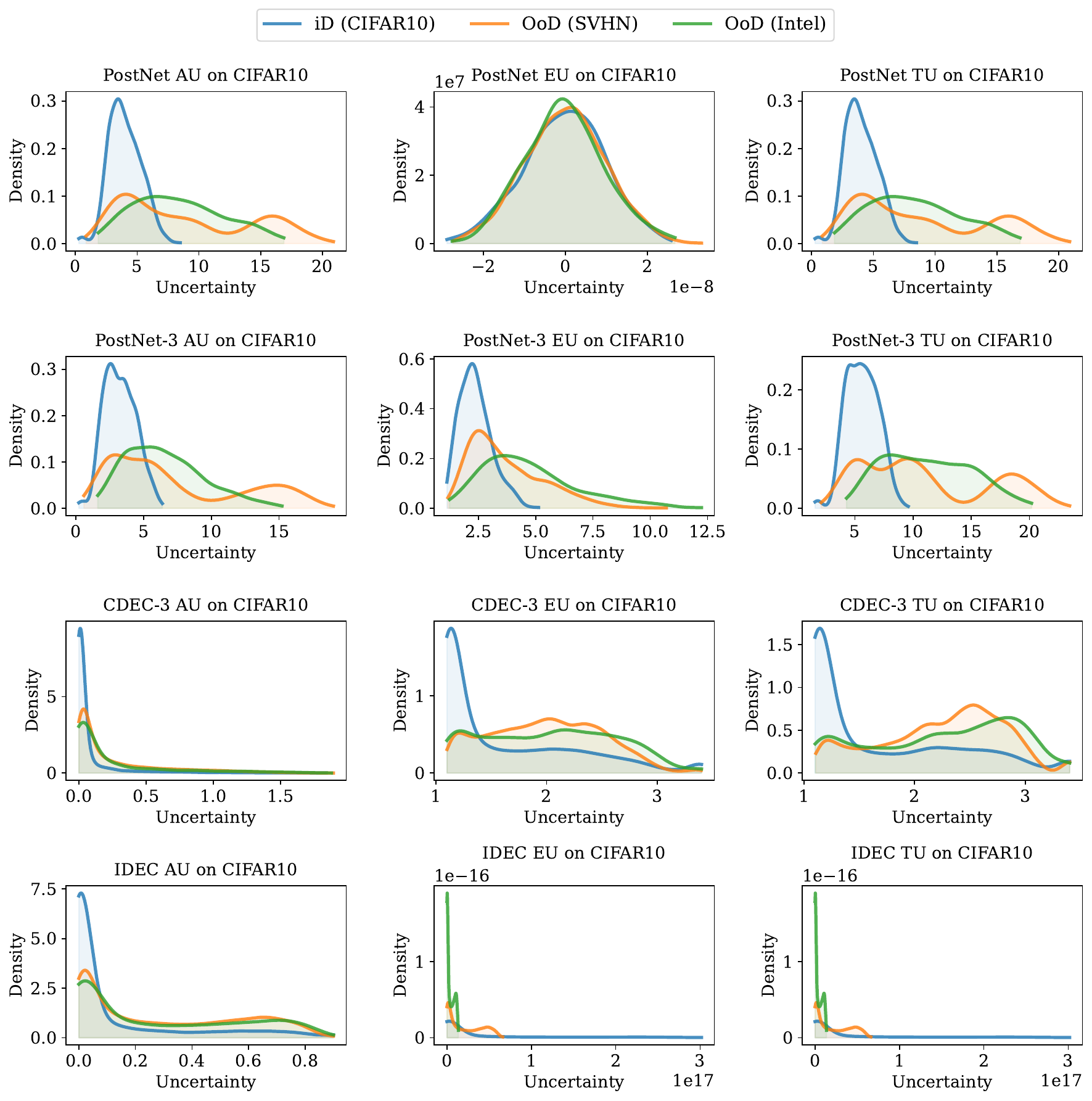}
    \caption{CIFAR-10 uncertainty distributions for PostNet, PostNet-3, CDEC-3 and IDEC.}
    \label{fig:cifar10-uncertainty}
\end{wrapfigure}

\subsubsection{Uncertainty Estimation}
\label{app:uncertainty-estimation}

Figures \ref{fig:cifar10-uncertainty}, \ref{fig:mnist-uncertainty} and \ref{fig:cifar100-uncertainty} provide the full AU/EU/TU uncertainty distributions for all models and datasets. Each figure shows KDE estimates of in-distribution (iD) and OoD uncertainty under PostNet, PostNet-3, CDEC-3 and IDEC. 

PostNet and PostNet-3 concentrate AU around small values while producing broad, diffuse epistemic distributions. For PostNet-3, this behaviour reflects disagreement within the ensemble and results in moderate EU separation between iD and OoD samples. However, TU remains strongly correlated with AU and fails to cleanly distinguish distributional shift. 

CDEC-3 produces tightly concentrated AU (sharp minima of extreme-point entropy), with EU reflecting only the \emph{credible} disagreement across extreme predictions. The resulting TU distributions demonstrate: 
(i) low variance on iD data,  
(ii) a clean upward shift on OoD data, and  
(iii) reduced tail-mass relative to PostNet ensembles.  
This confirms that the credal-extreme-point decomposition removes artificial epistemic variance and captures only genuine ambiguity.

As expected from the variance-based inflation model, IDEC shows:
(i) near-identical AU distributions to PostNet (since AU corresponds to the categorical variance), 
(ii) very large EU terms when class probabilities are diffuse (e.g.\ CIFAR-100), due to the $(1+d^*)^2$ scaling, and  
(iii) TU dominated almost entirely by epistemic contributions. 
The extreme EU/TU values observed on CIFAR-100 reflect the high-dimensional simplex geometry: even moderate uncertainty across 100 classes produces large categorical variance, and thus large inflated epistemic components.

\begin{figure}[!ht]
    \centering
    \begin{minipage}[t]{0.48\linewidth}
        \centering
        \includegraphics[width=\linewidth]{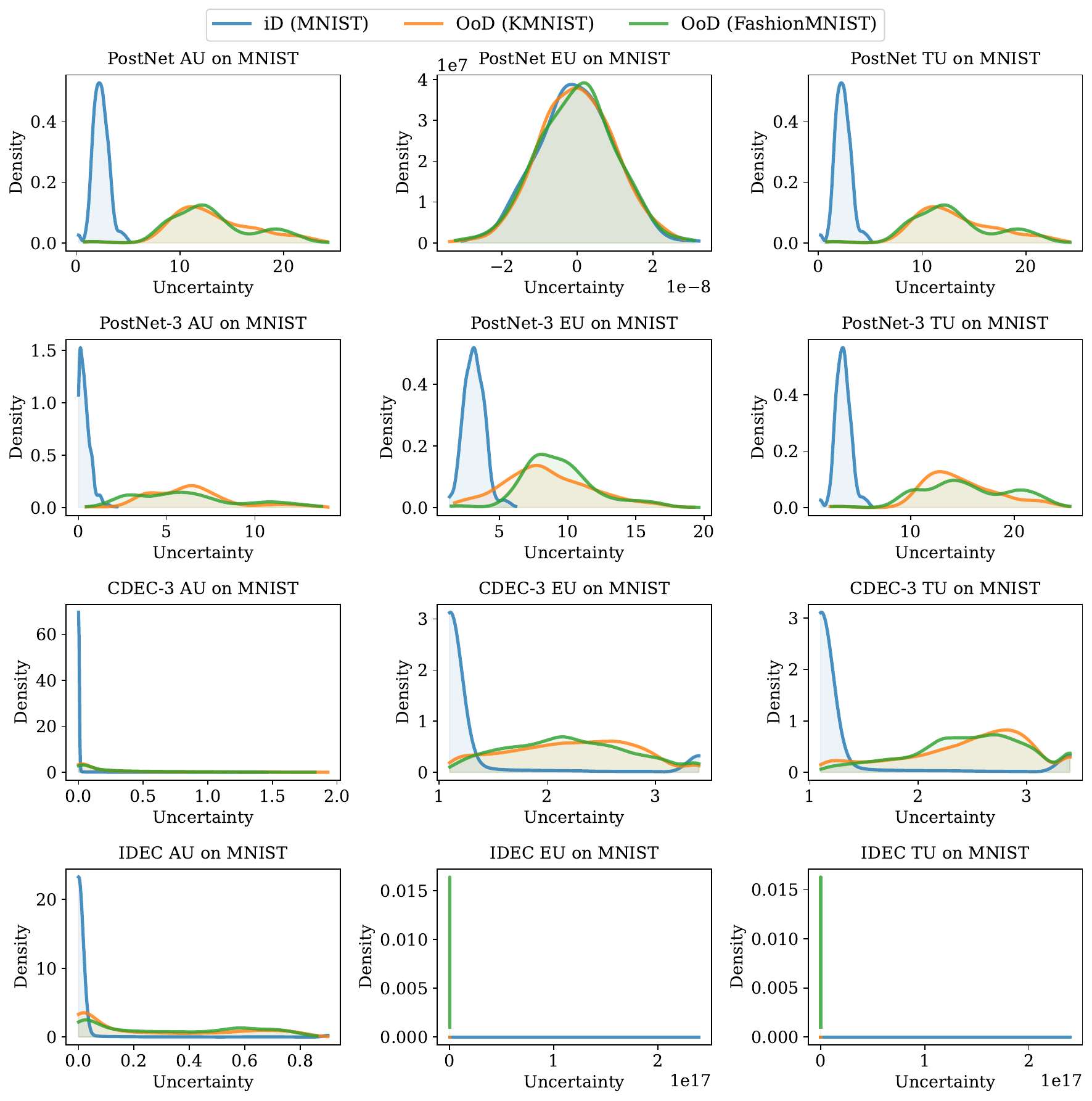}
        \caption{MNIST uncertainty distributions for PostNet, PostNet-3, CDEC and IDEC.}
        \label{fig:mnist-uncertainty}
    \end{minipage}
    \hfill
    \begin{minipage}[t]{0.48\linewidth}
        \centering
        \includegraphics[width=\linewidth]{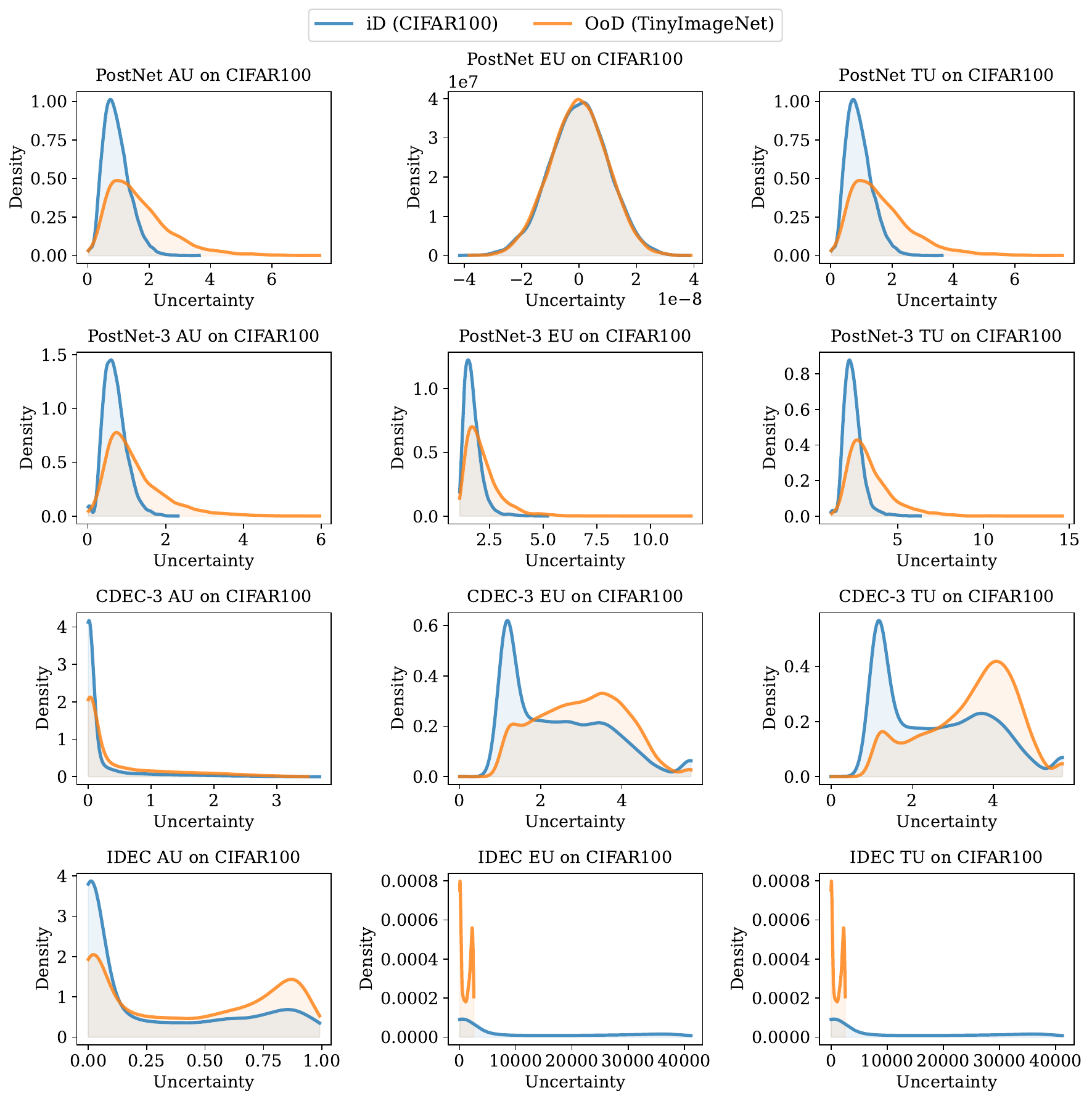}
        \caption{CIFAR-100 uncertainty distributions for PostNet, PostNet-3, CDEC and IDEC.}
        \label{fig:cifar100-uncertainty}
    \end{minipage}
\end{figure}

CDEC propagates \emph{discrete geometric uncertainty} from a credal set, yielding bounded and interpretable uncertainty profiles, whereas IDEC propagates \emph{interval-inflated variance}, which can grow rapidly in high-class problems but remains meaningful as a conservative uncertainty certificate.

\subsubsection{Out-of-distribution (OoD) Detection}
\label{app:ood}

Table \ref{tab:ood_full} provides the full OoD detection metrics for all datasets (MNIST, CIFAR-10, CIFAR-100), all OoD shifts (F-MNIST, K-MNIST, SVHN, Intel, TinyImageNet), and all architectures (Conv, VGG16, ResNet18, ResNet50).
Each uncertainty measure is evaluated separately (AU, EU, TU, conf) using AUROC and AUPRC.

\begin{wrapfigure}{r}{0.5\linewidth}
        \centering
    \includegraphics[width=\linewidth]{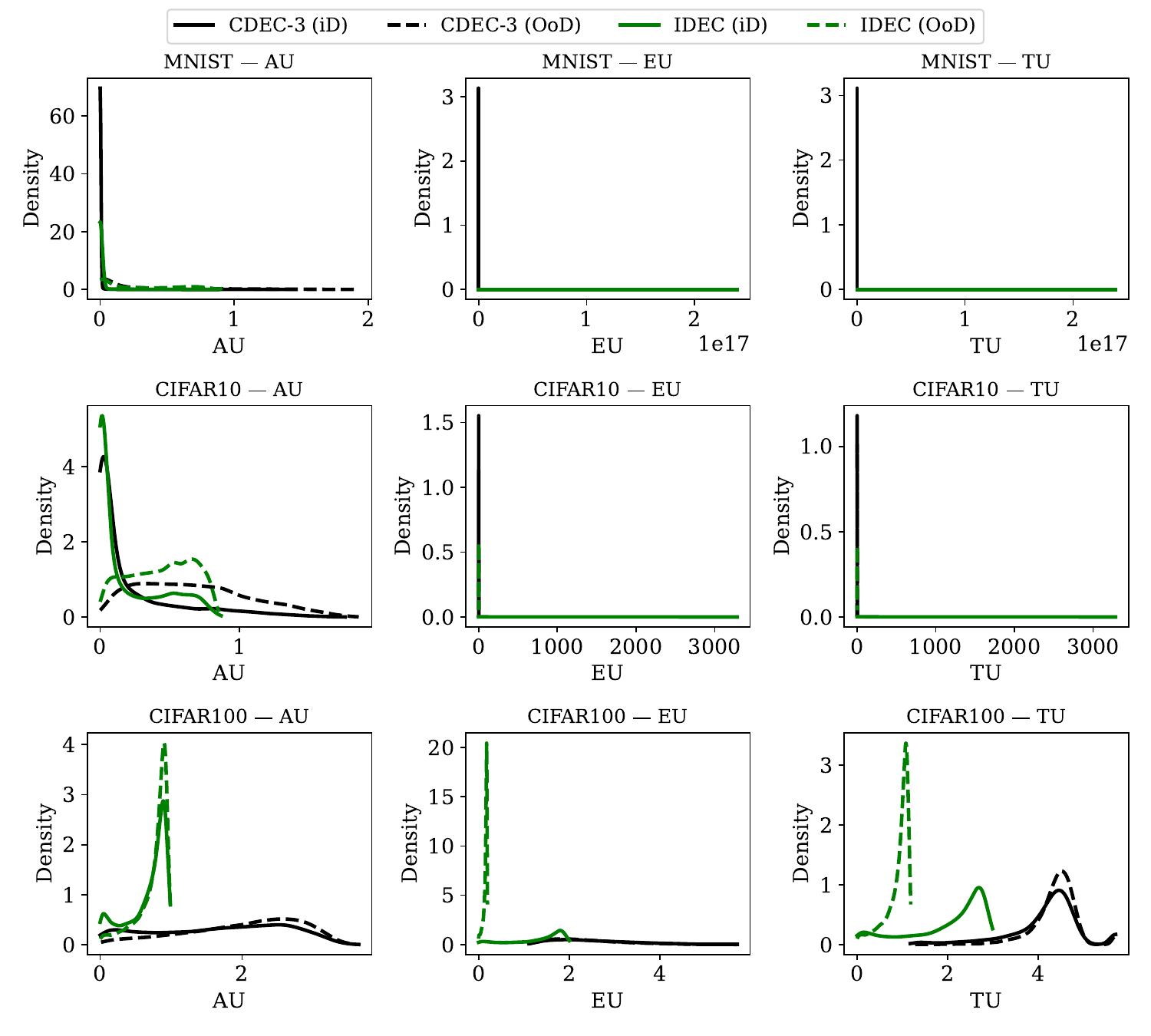}
    \caption{Uncertainty distributions (AU, EU, TU) for CDEC and IDEC evaluated on MNIST, CIFAR-10, and CIFAR-100. Each subplot compares in-distribution (iD, solid curves) and out-of-distribution (OoD, dashed curves) samples.}
    \label{fig:ood_kde}
\end{wrapfigure}

\newpage
The results reinforce the conclusions from section \ref{sec:ood}: (i) CDEC-3 provides the strongest EU and TU-based separation across all architectures and all datasets.
(ii) PostNet-3 improves over PostNet, but typically remains overconfident, especially on more complex datasets such as CIFAR-100.
(iii) IDEC succeeds when the interval inflation parameter $d^*$ remains moderate (MNIST, CIFAR-10).
On CIFAR-100, however, required inflation becomes very large, flattening predictive distributions and diminishing contrast between iD and OoD.

Figure \ref{fig:ood_kde} illustrates how both CDEC and IDEC separate iD and OoD data in terms of their uncertainty profiles.
Across MNIST, CIFAR-10, and CIFAR-100, the iD curves (solid lines) for AU, EU, and TU remain tightly concentrated near low uncertainty, whereas the OoD curves (dashed lines) shift rightward and become broader, indicating higher and more dispersed uncertainty under distribution shift.

For CDEC, this behavior is most pronounced, with iD densities exhibiting sharp peaks and OoD curves showing clear rightward drift.
IDEC shows the same qualitative trend but with heavier-tailed densities, particularly on CIFAR-100.
Overall, the KDE plots confirm that both CDEC and IDEC assign higher uncertainty to OoD samples, but CDEC produces the most compact iD distributions, resulting in clearer separation.

\begin{table}[!h]
\centering
\resizebox{\linewidth}{!}{%
\begin{tabular}{@{}ccccc|cc|cc|ccc@{}}
\toprule
\multicolumn{2}{c}{Dataset} & \multirow{2}{*}{Model} & \multicolumn{8}{c}{\textbf{OoD Detection Performance (\%)}} \\
\cmidrule{1-2}
\cmidrule{4-11}
iD & OoD  & & AUROC (AU) & AUPRC (AU) 
& AUROC (EU) & AUPRC (EU)
& AUROC (TU) & AUPRC (TU)
& AUROC (conf) & AUPRC (conf) \\
\midrule
\multirow{10}{*}{\begin{sideways}\textbf{MNIST}\end{sideways}} 
 &  \multirow{4}{*}{\textbf{F-MNIST}} 
& PostNet
& 99.84 & 99.87
& 50.00 & 50.00
& 99.84 & 99.87
& 90.74 & 80.27 \\

& & PostNet-3
& 98.66 & 98.41
& 97.27 & 92.19
& 97.83 & 94.41
& {95.12} & {93.47} \\

& & \cellcolor[HTML]{EFEFEF}CDEC-3
& \cellcolor[HTML]{EFEFEF}\textbf{99.99} & \cellcolor[HTML]{EFEFEF}\textbf{99.99}
& \cellcolor[HTML]{EFEFEF}\textbf{98.56} & \cellcolor[HTML]{EFEFEF}\textbf{98.97}
& \cellcolor[HTML]{EFEFEF}\textbf{99.89} & \cellcolor[HTML]{EFEFEF}\textbf{99.90}
& \cellcolor[HTML]{EFEFEF}\textbf{97.15} & \cellcolor[HTML]{EFEFEF}\textbf{96.87} \\

& & \cellcolor[HTML]{EFEFEF}IDEC
& \cellcolor[HTML]{EFEFEF}97.92 & \cellcolor[HTML]{EFEFEF}97.68
& \cellcolor[HTML]{EFEFEF}33.95 & \cellcolor[HTML]{EFEFEF}39.68
& \cellcolor[HTML]{EFEFEF}33.95 & \cellcolor[HTML]{EFEFEF}39.68
& \cellcolor[HTML]{EFEFEF}96.61 & \cellcolor[HTML]{EFEFEF}92.63 \\
\cmidrule{2-11}

& \multirow{4}{*}{\textbf{K-MNIST}}
& PostNet
& 99.76 & 99.83
& 50.00 & 50.00
& 99.76 & 99.83
& 90.44 & 79.83 \\

& & PostNet-3
& 99.04 & 98.96
& 96.29 & 90.30
& 97.21 & 92.05
& {95.59} & {94.68} \\

& & \cellcolor[HTML]{EFEFEF}CDEC-3
& \cellcolor[HTML]{EFEFEF}\textbf{99.99} & \cellcolor[HTML]{EFEFEF}\textbf{99.99}
& \cellcolor[HTML]{EFEFEF}\textbf{99.15} & \cellcolor[HTML]{EFEFEF}\textbf{99.33}
& \cellcolor[HTML]{EFEFEF}\textbf{99.93} & \cellcolor[HTML]{EFEFEF}\textbf{99.94}
& \cellcolor[HTML]{EFEFEF}\textbf{96.97 }& \cellcolor[HTML]{EFEFEF}\textbf{96.86} \\

& & \cellcolor[HTML]{EFEFEF}IDEC
& \cellcolor[HTML]{EFEFEF}98.69 & \cellcolor[HTML]{EFEFEF}98.44
& \cellcolor[HTML]{EFEFEF}18.52 & \cellcolor[HTML]{EFEFEF}34.21
& \cellcolor[HTML]{EFEFEF}18.53 & \cellcolor[HTML]{EFEFEF}34.21
& \cellcolor[HTML]{EFEFEF}93.85 & \cellcolor[HTML]{EFEFEF}90.40 \\
\midrule
\multirow{9}{*}{\begin{sideways}\textbf{CIFAR-10}\end{sideways}} 
 &  \multirow{4}{*}{\textbf{SVHN}} 
& PostNet
& \textbf{99.75} & \textbf{99.92}
& 50.00 & 72.08
& \textbf{99.75} & \textbf{99.92}
& 74.96 & 84.53 \\

& & PostNet-3
& 76.44 & 86.81
& 76.44 & 86.27
& 82.21 & 90.68
& {75.17} & {83.66} \\

& & \cellcolor[HTML]{EFEFEF}CDEC-3
& \cellcolor[HTML]{EFEFEF}66.54 & \cellcolor[HTML]{EFEFEF}87.61
& \cellcolor[HTML]{EFEFEF}\textbf{99.85} & \cellcolor[HTML]{EFEFEF}\textbf{99.95}
& \cellcolor[HTML]{EFEFEF}{99.73} & \cellcolor[HTML]{EFEFEF}\textbf{99.92}
& \cellcolor[HTML]{EFEFEF}\textbf{78.04} & \cellcolor[HTML]{EFEFEF}\textbf{86.94} \\

& & \cellcolor[HTML]{EFEFEF}IDEC
& \cellcolor[HTML]{EFEFEF}70.21 & \cellcolor[HTML]{EFEFEF}81.39
& \cellcolor[HTML]{EFEFEF}47.78 & \cellcolor[HTML]{EFEFEF}66.64
& \cellcolor[HTML]{EFEFEF}47.78 & \cellcolor[HTML]{EFEFEF}66.64
& \cellcolor[HTML]{EFEFEF}75.04 & \cellcolor[HTML]{EFEFEF}84.40 \\
\cmidrule{2-11}

& \multirow{4}{*}{\textbf{Intel}}
& PostNet
& 66.39 & 60.44
& 50.00 & 23.30
& 66.39 & 60.44
& 68.74 & 36.57 \\

& & PostNet-3
& 69.57 & 37.81
& 63.81 & 29.39
& 67.53 & 33.86
& {73.61} & {38.70} \\

& & \cellcolor[HTML]{EFEFEF}CDEC-3
& \cellcolor[HTML]{EFEFEF}\textbf{76.37} & \cellcolor[HTML]{EFEFEF}\textbf{68.38}
& \cellcolor[HTML]{EFEFEF}{66.47} & \cellcolor[HTML]{EFEFEF}{52.98}
& \cellcolor[HTML]{EFEFEF}{80.55} & \cellcolor[HTML]{EFEFEF}{74.59}
& \cellcolor[HTML]{EFEFEF}\textbf{74.99} & \cellcolor[HTML]{EFEFEF}\textbf{43.30} \\

& & \cellcolor[HTML]{EFEFEF}IDEC
& \cellcolor[HTML]{EFEFEF}69.15 & \cellcolor[HTML]{EFEFEF}37.19
& \cellcolor[HTML]{EFEFEF}\textbf{88.86} & \cellcolor[HTML]{EFEFEF}\textbf{79.84}
& \cellcolor[HTML]{EFEFEF}\textbf{88.86} & \cellcolor[HTML]{EFEFEF}\textbf{79.84}
& \cellcolor[HTML]{EFEFEF}70.17 & \cellcolor[HTML]{EFEFEF}36.52 \\
\midrule
\multirow{4}{*}{\begin{sideways}\textbf{\tiny CIFAR-100}\end{sideways}} 
 &  \multirow{4}{*}{\textbf{TinyImageNet}} 
& PostNet
& \textbf{73.85} & \textbf{78.54}
& 50.00 & 50.00
& 73.85 & 78.54
& 65.84 & 62.18 \\

& & PostNet-3
& 68.10 & 65.52
& 65.08 & 58.24
& 68.49 & 61.41
& {67.01} & {60.05} \\

& & \cellcolor[HTML]{EFEFEF}CDEC-3
& \cellcolor[HTML]{EFEFEF}{72.49} & \cellcolor[HTML]{EFEFEF}{77.21}
& \cellcolor[HTML]{EFEFEF}\textbf{70.02} & \cellcolor[HTML]{EFEFEF}\textbf{72.23}
& \cellcolor[HTML]{EFEFEF}\textbf{77.09} & \cellcolor[HTML]{EFEFEF}\textbf{80.15}
& \cellcolor[HTML]{EFEFEF}\textbf{72.95} & \cellcolor[HTML]{EFEFEF}\textbf{68.54} \\

& & \cellcolor[HTML]{EFEFEF}IDEC
& \cellcolor[HTML]{EFEFEF}65.89 & \cellcolor[HTML]{EFEFEF}60.67
& \cellcolor[HTML]{EFEFEF}37.12 & \cellcolor[HTML]{EFEFEF}40.20
& \cellcolor[HTML]{EFEFEF}37.12 & \cellcolor[HTML]{EFEFEF}40.20
& \cellcolor[HTML]{EFEFEF}65.75 & \cellcolor[HTML]{EFEFEF}60.45 \\
\bottomrule
\end{tabular}}
\caption{Full OoD detection performance (AUROC/AUPRC for AU, EU, TU, conf) across all datasets, OoD shifts, and backbones.}
\label{tab:ood_full}
\end{table}


\subsubsection{Additional IHDR analysis and visualizations}
\label{app:ihdr}

In this appendix, we provide extended empirical analysis of the Imprecise Highest Density Region (IHDR) 
beyond what is presented in section \ref{sec:ihdr}. 
IHDRs give a concrete, interpretable representation of the imprecision captured by a model's 
posterior credal set, and therefore provide a useful diagnostic for understanding how uncertainty 
behaves under different data conditions.

\textbf{IHDR size and coverage across datasets.}
Figure \ref{fig:ihdr_size_cov} displays the full distribution of IHDR cardinalities 
together with the mean coverage achieved by CDEC-3 and IDEC on MNIST, CIFAR-10, and CIFAR-100.
Across all datasets, CDEC-3 yields extremely compact IHDRs on in-distribution samples 
(size $\approx 1$ on MNIST, and typically $<7$ on CIFAR-10), while maintaining nearly perfect coverage.
IDEC, in contrast, produces broader IHDRs, especially on CIFAR-100 where the distribution spans dozens of classes.
These results mirror the credal set statistics reported in the main text and highlight the 
structural differences between the two approaches:
CDEC-3 concentrates probability mass onto small, highly specific hypothesis sets,
whereas IDEC distributes mass across larger and more diffuse sets.

\begin{figure}[!h]
    \centering
    \includegraphics[width=0.3\linewidth]{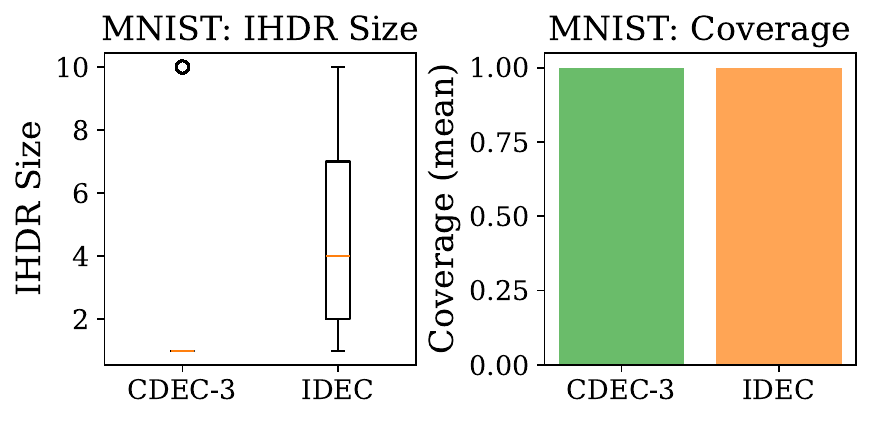}
    \includegraphics[width=0.3\linewidth]{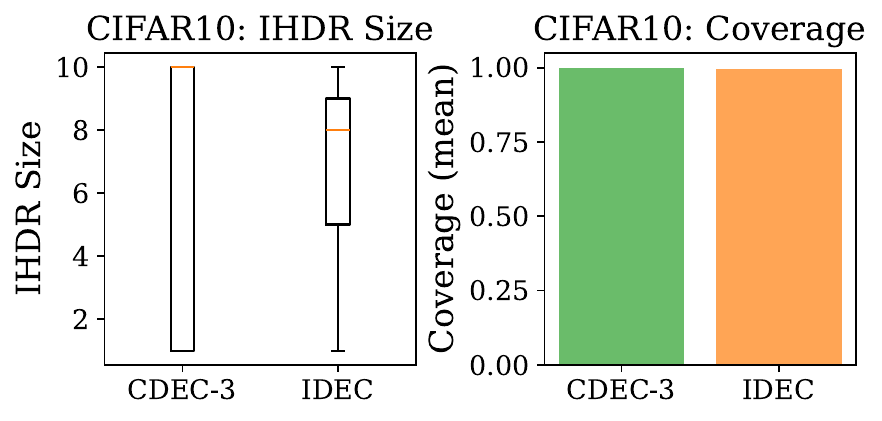}
    \includegraphics[width=0.3\linewidth]{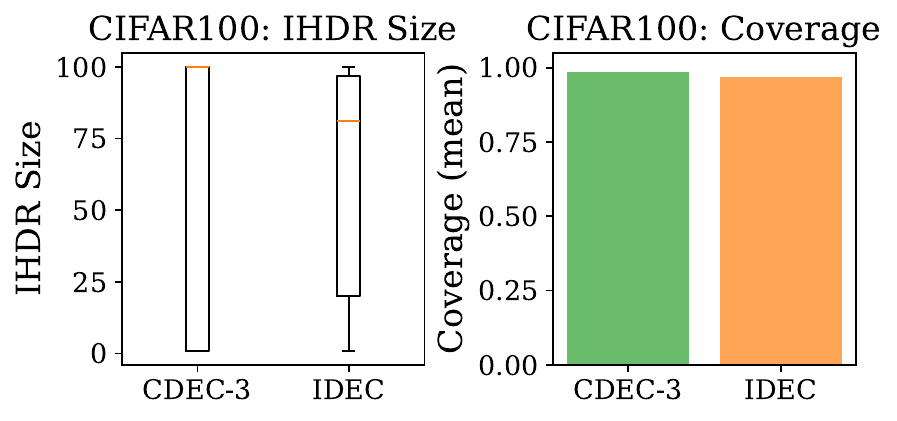}
    \caption{\textbf{IHDR size distributions and mean coverage on iD data.}
    CDEC-3 consistently forms compact IHDRs with near-perfect coverage,
    whereas IDEC produces larger and more variable sets, especially on CIFAR-100.}
    \label{fig:ihdr_size_cov}
\end{figure}

\textbf{iD-OoD shift in IHDR size.}
Table \ref{tab:id_vs_ood_ihdr} reports the mean IHDR size on iD data and compares it with the 
corresponding IHDR size on one or more OoD datasets.
CDEC-3 consistently expands its IHDR under distribution shift
(e.g.\ +7.3 and +7.5 on MNIST, +3.4--+3.8 on CIFAR-10, and a substantial +23.0 on CIFAR-100),
providing a clear and monotonic signal of unfamiliarity.
IDEC follows the same qualitative pattern, but with smaller shifts and occasional contractions 
(e.g.\ CIFAR-10--SVHN), suggesting that it is more sensitive to the backbone’s representation.
These metrics confirm that CDEC-3 offers the most robust and interpretable 
distinction between iD and OoD samples in credal space.

\begin{table}[!h]
\caption{\textbf{iD vs OoD IHDR set size}. 
iD size is the mean IHDR cardinality on in-distribution samples, 
OoD size is computed per OoD dataset, and $\Delta$ denotes the shift.}
\label{tab:id_vs_ood_ihdr}
\centering
\resizebox{0.7\linewidth}{!}{
\begin{tabular}{@{}c c c |c |c c@{}}
\toprule
Dataset (iD) & Model & IHDR Size (iD) ($\downarrow$) & Dataset (OoD) & IHDR Size (OoD) ($\uparrow$) & $\Delta$ \\
\midrule
\multirow{4}{*}{MNIST}
    & \multirow{2}{*}{CDEC}
        & \multirow{2}{*}{2.28}
        & KMNIST      & 9.56 & +7.28 \\
    &   &            & F-MNIST     & 9.75 & +7.48 \\ \cmidrule{2-6}
    & \multirow{2}{*}{IDEC}
        & \multirow{2}{*}{4.55}
        & KMNIST      & 7.97 & +3.42 \\
    &   &            & F-MNIST     & 7.72 & +3.18 \\
\midrule
\multirow{4}{*}{CIFAR10}
    & \multirow{2}{*}{CDEC}
        & \multirow{2}{*}{5.34}
        & SVHN        & 9.16 & +3.81 \\
    &   &            & Intel       & 8.73 & +3.39 \\ \cmidrule{2-6}
    & \multirow{2}{*}{IDEC}
        & \multirow{2}{*}{6.76}
        & SVHN        & 5.10 & $-$1.66 \\
    &   &            & Intel       & 7.14 & +0.38 \\
\midrule
\multirow{2}{*}{CIFAR100}
    & \multirow{1}{*}{CDEC}
        & 73.03
        & TinyImageNet & 96.05 & +23.02 \\
    & \multirow{1}{*}{IDEC}
        & 61.51
        & TinyImageNet & 70.12 & +8.61 \\ 
\bottomrule
\end{tabular}}
\end{table}

\textbf{Qualitative examples.}
Figure \ref{fig:ihdr_examples}
provides qualitative illustrations of IHDRs for randomly selected test samples.
On MNIST, CDEC-3 typically returns singleton or size-2 sets containing the correct digit,
whereas IDEC often includes multiple neighboring digits.
On CIFAR-10, CDEC-3 yields small, semantically coherent IHDRs (e.g. \emph{dog--cat--horse}), 
while IDEC occasionally includes a broader set of plausible classes.
On CIFAR-100, where fine-grained distinctions are harder, both models produce larger IHDRs,
but CDEC-3 remains comparatively compact, whereas IDEC may return dozens of classes.
These examples demonstrate that IHDRs provide transparent, human-interpretable indicators of 
model uncertainty, especially under class overlap or ambiguous visual appearance.

Therefore, CDEC-3 induces a stable, interpretable credal representation whose iD and OoD behavior 
remains consistent across datasets and backbones:
(i) CDEC-3 consistently forms compact and high-coverage IHDRs on iD data,
(ii) both models expand IHDR size under distribution shift, but CDEC-3 does so more reliably,
and (iii) qualitative IHDRs reveal structured, semantically meaningful sets that align with visual ambiguity.


\begin{figure}
    \centering
    \includegraphics[width=0.9\linewidth]{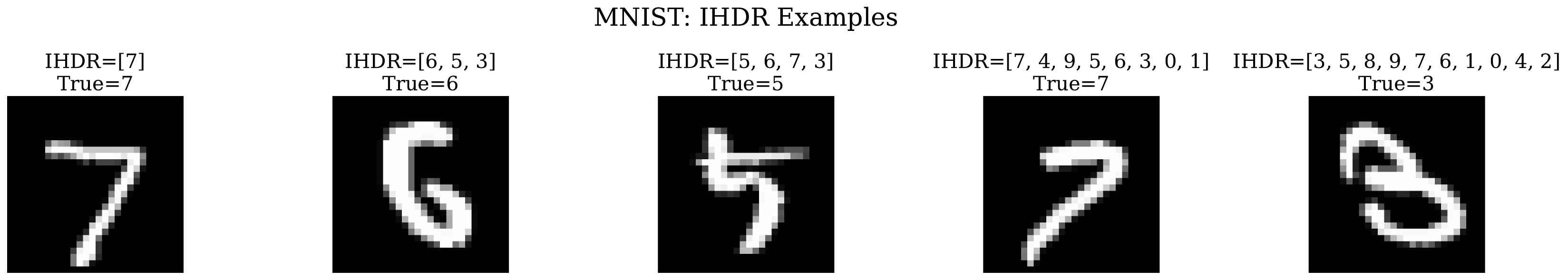}
    \includegraphics[width=0.9\linewidth]{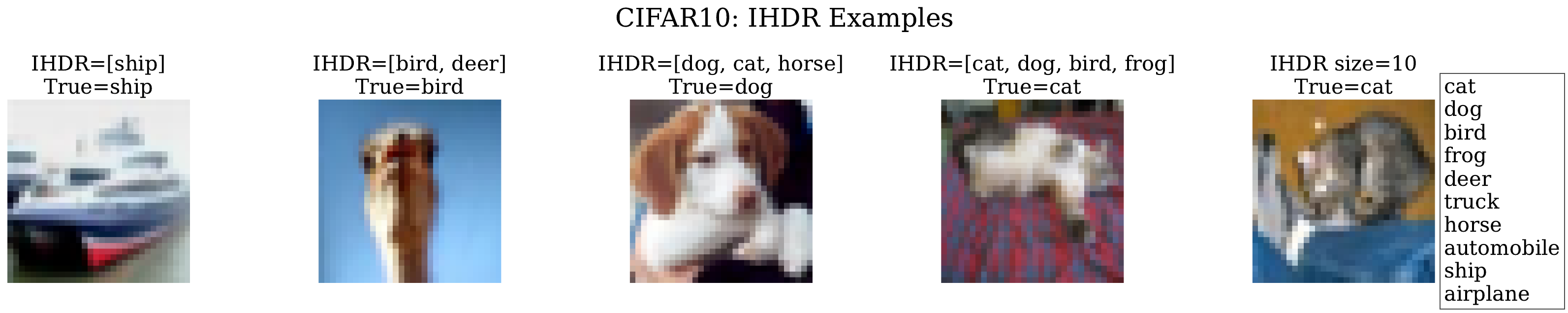}
    \includegraphics[width=0.9\linewidth]{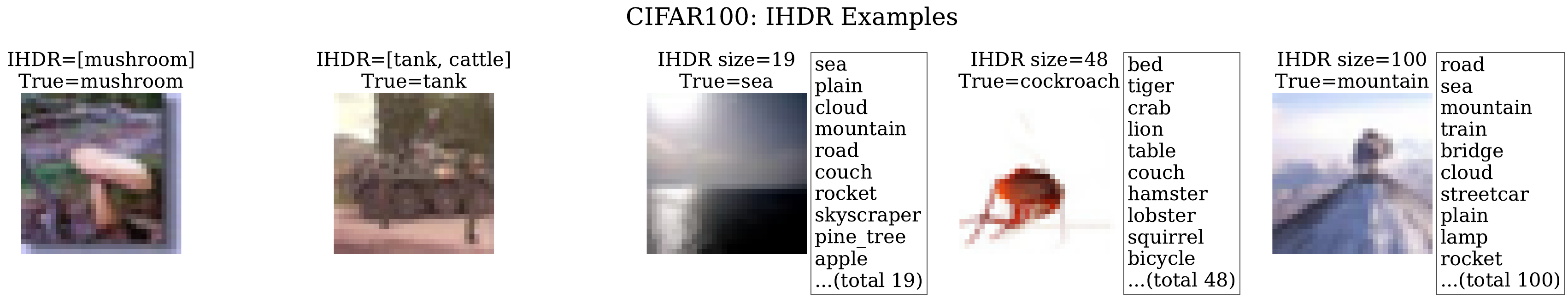}
    \caption{\textbf{IHDR examples.}
    For each dataset, we display randomly selected test samples and their corresponding IHDR sets.
    CDEC-3 typically returns compact and semantically coherent sets, whereas IDEC may return larger 
    or more diffuse sets, especially on fine-grained tasks such as CIFAR-100.}
    \label{fig:ihdr_examples}
\end{figure}

\end{document}